\newlength{\dhatheight}
\numberwithin{equation}{section}
\theoremstyle{plain}
\newtheorem{thm}{Theorem}[section]
\newtheorem{rem}{Remark}[section]
\newtheorem{prop}{Proposition}[section]
\newtheorem{cor}{Corollary}[section]
\newtheorem{lem}{Lemma}[section]
\newtheorem{deff}{Definition}[section]
\newtheorem{hyp}{Assumption}[section]
\def\build#1_#2^#3{\mathrel{\mathop{\kern 0pt#1}\limits_{#2}^{#3}}}
\newcommand{\KL}{\mathop{\mathrm{KL}}}
\DeclareMathOperator{\Id}{Id}
\def\videbox{\mathbin{\vbox{\hrule\hbox{\vrule height1.4ex \kern.6em\vrule height1.4ex}\hrule}}}
\newcommand{\ds}{\displaystyle}
\def\argmin{\mathop{\rm arg \; min}\limits}%
\newcommand{\thefont}[2]{\fontsize{#1}{#2}\fontshape{n}\selectfont}
\newcommand{\1}{\rlap{\thefont{10pt}{12pt}1}\kern.16em\rlap{\thefont{11pt}{13.2pt}1}\kern.4em}
\title{On the potential benefits of entropic regularization for smoothing Wasserstein estimators \vspace{1ex}}
\author[1,2]{J\'er\'emie Bigot}
\author[1,2,5]{Paul Freulon \footnote{correspondence: paul.freulon@epfl.ch}}
\author[1,3,4]{Boris P.\ Hejblum}
\author[1,2,6]{Arthur Leclaire}
\affil[1]{\small Universit\'{e} de Bordeaux, Bordeaux, 33000, France.}
\affil[2]{Institut de Math\'ematiques de Bordeaux et CNRS  (UMR 5251), 33400 Talence, France.}
\affil[3]{Bordeaux Population Health Research Center Inserm U1219, Inria SISTM, 33000 Bordeaux, France.}
\affil[4]{Vaccine Research Institute (VRI), 94010 Cr\'eteil, France.}
\affil[5]{EPFL, Institute of Mathematics,  CH-1015 Lausanne, Switzerland.}
\affil[6]{LTCI, Télécom Paris, IP Paris, 19 place Marguerite Perey, 91120 Palaiseau, France.}
\date{}
\begin{document}

\maketitle

\begin{abstract}
	This paper focuses on the study of entropic regularization in optimal transport as a smoothing method for Wasserstein estimators, through the prism of the classical trade-off between approximation and estimation errors in statistics. Wasserstein estimators are defined as solutions of variational problems whose objective function involves the use of an optimal transport cost between probability measures. Such estimators can be regularized by replacing the optimal transport cost  by its regularized version using an entropy penalty on the transport plan. The use of such a regularization has a potentially significant smoothing effect on the resulting estimators. In this work, we investigate its potential benefits on the approximation and estimation properties of regularized Wasserstein estimators. Our main contribution is to discuss how entropic regularization may reach, at a lower computational cost,  statistical performances that are comparable to those of un-regularized  Wasserstein estimators in statistical learning problems involving distributional data analysis. To this end, we present new theoretical results on the convergence of regularized Wasserstein estimators. We also study their numerical performances using simulated and real data in the supervised learning problem of proportions estimation in mixture models using optimal transport.
\end{abstract}


\section{Introduction}\label{sec:Introduction}

Wasserstein estimators are defined as solutions of variational problems whose objective function involves the use of an optimal transport (OT) cost between probability measures. Such estimators typically arise in statistical problems involving the minimization of a Wasserstein distance (or more generally an OT cost) between the empirical measure of the data and a distribution belonging to a parametric model (see \cite{bernton2019parameter}), and this class of estimators has found important applications in generative adversarial models for image processing (see e.g. \cite{arjovsky2017wgan}).  Wasserstein estimators also represent an important class of inference methods in the field of statistical optimal transport for distributional data analysis where the observations at hand can be modeled as a set of histograms (see e.g.\ \cite{bigotReview,Pana18,petersen} for  recent reviews).

Despite the appealing geometric properties of Wasserstein distances for comparing  probability distributions, the computational burden required to evaluate an optimal transport cost is an important limitation for its application in data analysis. The seminal paper \cite{cuturi2013sinkhorn} has opened a breach in the computational complexity of optimal transport by the addition of an entropic regularizing term  in the OT Kantorovich's formulation. In the last years, the benefit of this regularization  has been to allow the use of OT based methods in statistics and machine learning with a time complexity that scales  quadratically in the number of data using the Sinkhorn algorithm. This represents a significant improvement over the computational cost of un-regularized OT that scales cubically in the number of observations using linear programming. However, regularized OT has been  mainly used so far  as a fast numerical method to approximate un-regularized OT.

In this paper, we advocate the use of entropic regularization  in computational OT as a smoothing method for un-regularized  Wasserstein estimators.   These estimators are obtained by replacing the standard OT cost in a variational problem by its entropy regularized version. The use of such a regularization  has a  beneficial smoothing effect on the resulting estimators as shown in \cite{BCP18} for the specific problem of computing a smooth Wasserstein barycenter from a set of discrete probability measures. In this paper, we investigate the impact of this smoothing effect  of regularized Wasserstein estimators through the prism of the   tradeoff between approximation and estimation errors   in statistics which is reminiscent of the classical bias versus variance tradeoff).  Our main contribution is to discuss how entropic regularization  yields estimators that may reach,  at a lower computational cost,  statistical performances that are comparable to those of un-regularized  Wasserstein estimators  in statistical learning problems involving distributional data analysis. To this end, we present new theoretical results on the convergence of regularized  Wasserstein estimators. We also study their numerical performances using simulated and real data in the supervised learning problem of  proportions estimation in mixture models using optimal transport.
\subsection{Proportions estimation in mixture models using optimal transport}\label{sec:prop}

The motivation of this work comes from the active research field of automated analysis of flow cytometry measurements, see \cite{aghaeepour2013critical}. Flow cytometry is a high-throughput biotechnology used to characterize a large amount of  $m$ cells from a biological sample (with $m \geq 10^5$) that produces a data set $X_1,\dots,X_m$ where each observation $X_i \in \mathbb{R}^d$ corresponds to a vector of $d$ biomarkers of each single cell. Automated approaches in flow cytometry aim at clustering the data in order to estimate cellular population proportions in the biological sample. In \cite{freulon2020cytopt},  we have considered that such a data set can be represented with a discrete probability distribution $\frac{1}{m} \sum_{i=1}^{m} \delta_{X_i}$ with support in $\mathbb{R}^d$, and we have introduced a new supervised algorithm based on regularized OT to  estimate the different cell population proportions from a biological sample characterized with flow cytometry measurements. This approach optimally re-weights class proportions in a mixture model between a source data set (with known segmentation into cell sub-populations) to fit a target data set with unknown segmentation. 

Most automated methods in flow cytometry cluster the observations \cite{cheung2021current}.  However, the relevant clinical information is rather the class proportions, i.e. the cell population relative abundance. For instance, when monitoring the immune system of patient, clinicians are more interested by the proportion of CD4+ T-cells than to know whether each cell is a CD4+ T-cell.

To be more precise, let us denote by $Y_1,\ldots,Y_n$, the dataset  from the target sample, and by $X_1,\ldots,X_m$ the observations from the source biological sample. Thanks to the knowledge of a clustering  of the source dataset into $K$ classes $C_1,\ldots,C_{K}$, the empirical measure $\hat{\mu} = \frac{1}{m} \sum_{i=1}^{m} \delta_{X_i}$  can be decomposed as the following mixture of probability measures, 
\begin{equation}\label{Source_as_mixture}
	\hat{\mu} = \sum_{k=1}^K \frac{m_k}{m} \left( \sum_{i: X_i \in C_k} \frac{1}{m_k} \delta_{X_i} \right) = \sum_{k=1}^K \hat{\pi}_{k} \hat{\mu}_k, \mbox{ where } \hat{\pi}_{k} =  \frac{m_k}{m}, 
\end{equation}
and each component $\hat{\mu}_k = \sum_{i: X_i \in C_k} \frac{1}{m_k} \delta_{X_i}$ corresponds to a known sub-population of cells with $m_k = \#C_k$.  Then, the method proposed in   \cite{freulon2020cytopt} aims at modifying the weights $(\hat{\pi}_{k})_{1 \leq k \leq K}$ in such a way that the re-weighted source measure minimizes a regularized OT cost with respect to the target measure $ \frac{1}{n} \sum_{j=1}^{n} \delta_{Y_j}$. Then, the resulting weights yield an estimation of the proportions of sub-population of cells in the target sample. However, despite the efficiency of the method for the analysis of  flow cytometry  data, the work in  \cite{freulon2020cytopt} opens questions on the influence of the regularization, and we set to answer some of them in this work. 

Let us now formalize the problem of proportions estimation in mixture models using regularized OT. We denote by $\mu = \sum_{k=1}^K \pi_k \mu_k$ a probability measure that can be decomposed as a mixture of $K$ probability measures $\mu_1,\ldots,\mu_K$.  For $\theta \in \Sigma_K$, where $\Sigma_K = \{(\theta_1,\ldots,\theta_K)\in \mathbb{R}_{+}^{K} \; : \; \sum_{k=1}^{K} \theta_{k} = 1 \} $ is the $K$-dimensional simplex, we define $\mu_\theta$ as the re-weighted version of $\mu$ that is defined as
\begin{equation}\label{eq:definition_mu_theta}
	\mu_{\theta} =  \sum_{k=1}^{K} \theta_k \mu_k.
\end{equation}
Let $\nu$ be another probability measure. Proportions estimation in mixture models using OT  is defined as the problem of finding $\theta^{\ast} \in \Sigma_K$ that minimizes an  OT cost between $\mu_\theta$ and $\nu$. Denoting by $W_0(\mu, \nu)$ the un-regularized OT cost between $\mu$ and $\nu$ (we shall focus on the squared Wasserstein metric associated to the quadratic cost), the optimal vector of class proportions that we are targeting is:
$$
\theta^* \in \argmin_{\theta \in \Sigma_K} W_{0}(\mu_{\theta},\nu).
$$
In practice, one only has access to independent samples from $\mu$ and $\nu$  denoted by $X_1,\ldots,X_m$ (with a know clustering) and $Y_1,\ldots,Y_n$ respectively. Therefore, estimators of $\theta^*$ will be obtained from the empirical versions of $\mu_\theta$ and $\nu$ denoted by
$$
\hat{\mu}_\theta = \sum_{k=1}^K \theta_{k} \hat{\mu}_k \quad \mbox{and} \quad \hat{\nu} =  \frac{1}{n} \sum_{j=1}^{n} \delta_{Y_j}.
$$
The computational cost to numerically evaluate $W_0(\mu, \nu)$ can be prohibitive, which led the author of \cite{freulon2020cytopt} to consider its regularized version  denoted by $W_\lambda(\mu, \nu)$ where $\lambda > 0$ represents the amount of entropic penalty that is put on the transport plan in the primal formulation of OT. Here, this regularized version of the OT cost is computed using the Sinkhorn algorithm, an iterative procedure whose convergence properties are now well understood (see \cite{chizat2020sinkhorn} for a recent overview). However, after $\ell$ iterations of the Sinkhorn algorithm, it should be noted that one only has an approximation of the regularized OT cost that we will denote by $W_\lambda^{(\ell)}(\mu, \nu)$. In this work, we focuses on the study of the convergence rate of the following  estimator  towards the  optimal vector of class proportions  $\theta^*$: 
\begin{equation}\label{eq:def_sinkhorn_estimate}
	\hat{\theta}^{(\ell)}_\lambda = \argmin_{\theta \in \Sigma_K}W_\lambda^{(\ell)}(\hat{\mu}_\theta, \hat{\nu}).
\end{equation}
This takes into account both the effect of entropic regularization and the influence of the number of iterations of the Sinkhorn algorithm. Our theoretical results shed some light on how the parameters $\lambda$ and $\ell$ influence the performance of the estimator $\hat{\theta}^{(\ell)}_\lambda$. We demonstrate the practical efficiency of our method and the impact of the regularization parameter $\lambda$ on simulated and real data (flow cytometry measurements). 
\subsection{Related works based on regularized optimal transport}

Aside from the computational benefits of entropic regularization mentioned previously, recent developments have studied the statistical properties of a regularized  OT cost computed from empirical measures. Indeed, in most cases, $\mu$ and $\nu$ are not available, and one has only access to their empirical versions $\hat{\mu}$ and $\hat{\nu}$ respectively built from $X_1,\ldots,X_n$  sampled from $\mu$ and $Y_1,\ldots,Y_n$ sampled from $\nu$. In this setting, it is natural to investigate the convergence rate of the \textit{plug-in estimator} $W_0(\hat{\mu}, \hat{\nu})$ towards $W_0(\mu, \nu)$. This question is addressed in \cite{fournier2015rate} where the authors proved that the resulting estimation error decays to zero at the rate $n^{-2/d}$ when using the quadratic cost in high dimension $d$. Due to its attractive computational efficiency, it is obviously interesting to examine the statistical efficiency of the regularized Wasserstein \textit{plug-in estimator} naturally defined as $W_\lambda(\hat{\mu}, \hat{\nu})$. This issue as well as the approximation error induced by the regularization parameter is studied in \cite{genevay2019sample}. These questions are thoroughly pursued in \cite{chizat2020sinkhorn} as well as the effect of substituting  $W_\lambda(\mu, \nu)$ by its debiased counterpart $S_\lambda(\mu, \nu)$. 
Since the first iteration of our work was posted on ArXiv, some new results on the rate of convergence of $W_\lambda(\hat{\mu}, \hat{\nu})$ toward $W_\lambda(\mu,\nu)$ have been established. For instance, in \cite{stromme2024minimum,groppe2023lower}, some structural assumptions on the measures studied enable the authors to derive faster rate of convergence than what was previously known. Other works like \cite{bigot2019CLT_EOT}, and more recently \cite{gonzalez2023weak}, study the limiting distribution of entropic optimal transport estimators. The OT loss functions $W_0, W_\lambda$ and $S_\lambda$ also constitute efficient tools for statistical estimation. For instance, a framework of parametric estimation where regularized OT acts as a loss function in learning problems is considered in \cite{ballu2020stochastic}.  Regularized Wasserstein losses are also considered in \cite{genevay2017gansinkhorn,sanjabi2018wgan,liu2019wgan} for the design of generative models in image processing. In a more applied context, the use of regularized OT  is investigated in \cite{huizing2021optimal, freulon2020cytopt} to tackle  estimation problems in biostatistics.  The influence of the regularization parameter~$\lambda$ for the purpose of computing smooth Wasserstein barycenters is also analyzed in  \cite{BCP18, chizatDoublyRegularizedEntropic2023}.

\subsection{Organization of the paper}

In Section \ref{sec:OT_and_Regularized_OT} we recall  the mathematical aspects of regularized OT needed to derive our results, and we detail the problem of optimal class proportions estimation in mixture models using OT. In Section \ref{sec:wass_estimators},  we introduce the various parametric Wasserstein estimators used to estimate the optimal class proportions. We also give the main results of this paper on a theoretical comparison of the convergence rates of  regularized and un-regularized Wasserstein estimators. The influence of the number of iterations of the Sinkhorn algorithm on these convergence rates is also discussed. 
In Section \ref{sec:comparison}, we propose an alternative bound for the estimation error studied. We also compare our results with related works.
Section \ref{sec:numerical_experiments} is focused on numerical experiments that highlight the potential benefits of regularized Wasserstein estimators over un-regularized ones for  appropriate choices of the  entropic regularization parameter. Section \ref{sec:diss} contains a conclusion and some perspectives.  In the Appendix \ref{sec:convergence}, we detail the main arguments to obtain  the convergence rates of regularized and un-regularized Wasserstein estimators.

 \section{Background on optimal transport and the problem of  class proportions estimation}\label{sec:OT_and_Regularized_OT}

In this section, we introduce the notion of entropy regularized OT, and we present some of its mathematical properties needed to derive our results. Then, we describe the main application of this work on class proportions estimation in mixture models using OT. Finally, we discuss some identifiability issues in such models. 

\subsection{The OT problem and its regularized counterpart}

\paragraph{Notations.}
In the whole paper, we will work in the space $\mathbb{R}^d$ equipped with the quadratic cost $c(x,y) = \|x-y\|^2$, where $\|x\| = \sqrt{\sum x_i^2 }$ is the Euclidean norm.
Let  $\mathcal{X}$ and $\mathcal{Y}$ be two  subsets of $\mathbb{R}^d$ that {\it are assumed to be compact} and included in $B(0,R) = \{x \in \mathbb{R}^d~:~ \|x\|  \leq R\}$ throughout the paper.  We denote by $\mathcal{M}_+^1(\mathcal{X})$ and $\mathcal{M}_+^1(\mathcal{Y})$ the sets of probability measures on $\mathcal{X}$ and $\mathcal{Y}$ respectively. For $Y_1,\ldots, Y_n \sim \nu$, we denote by $\hat{\nu}$ the empirical counterpart of $\nu$ defined by $\hat{\nu}=\frac{1}{n}\sum_{i=1}^n \delta_{Y_i}$. 
The notation $\lesssim$ means inequality up to a multiplicative universal constant. For $\mu \in \mathcal{M}_{+}^1 (\mathcal{X})$ and $\nu \in \mathcal{M}_{+}^{1}(\mathcal{Y})$, we let $\Pi (\mu,\nu)$ be the set of probability measures on  $\mathcal{X} \times \mathcal{Y}$ with marginals $\mu$ and $\nu$. The problem of {\it entropic optimal transportation} between
$\mu \in \mathcal{M}_{+}^{1}(\mathcal{X})$ and $\nu \in \mathcal{M}_{+}^{1}(\mathcal{Y})$ is then defined as follows.

\begin{deff}[Primal formulation]   \label{def:primal}
	For any  $(\mu,\nu) \in \mathcal{M}_{+}^{1}(\mathcal{X}) \times \mathcal{M}_{+}^{1}(\mathcal{Y})$, the Kantorovich formulation of the regularized optimal transport between $\mu$ and $\nu$ is the following convex minimization problem 
	\begin{equation}
		\quad W_{\lambda}(\mu,\nu) = \min_{ \substack{\pi \in \Pi(\mu,\nu)} } \int_{\mathcal{X} \times \mathcal{Y}} \|x-y\|^2 d\pi(x,y) + \lambda \KL(\pi | \mu \otimes \nu), 
		\label{Primal}
	\end{equation}
	where $\|x-y\|$ is the Euclidean distance between $x$ and $y$, $\lambda \geq 0$ is 
	a  regularization parameter, and $ \KL$
	stands for the \textcolor{black}{Kullback-Leibler divergence, between $\pi$ and a positive measure $\xi$ on $\mathcal{X} \times \mathcal{Y}$, up to the  additive term $\int_{\mathcal{X} \times \mathcal{Y}}  d\xi(x,y)$, namely}
	$$
	\KL(\pi|\xi) = \int_{\mathcal{X} \times \mathcal{Y}}  \log \Bigl( \dfrac{d\pi}{d\xi}(x,y)\Bigr)  d\pi (x,y),
	$$
\end{deff}
in the case $\pi$ absolutely continuous w.r.t. $\xi$, otherwise $\KL(\pi|\xi) = +\infty$. For $\lambda = 0$, the quantity $W_{0}(\mu,\nu)$ is the {\it standard (un-regularized) OT cost},  and for $\lambda > 0$,
we  refer to $W_{\lambda}(\mu,\nu)$ as the {\it regularized OT cost} between 
$\mu$ and $\nu$. Note that the continuity of $c$ and the compactness of  $\mathcal{X}$ and $\mathcal{Y}$ imply that $W_{\lambda}(\mu,\nu)$ is finite for any value of  $\lambda \geq 0$.  Let us now introduce the dual and semi-dual formulations (see e.g.\ \cite{santambrogio2015ot,genevay2016ot})  of the minimization problem \eqref{Primal}.

\begin{thm}[Dual formulation] \label{thm:ot_dual}
	Strong duality holds for the primal problem \eqref{Primal} in the sense that
	\begin{align}
		\label{eq:ot_dual}
		W_{\lambda}(\mu,\nu)& = \sup_{\substack{\varphi \in  L^{\infty}(\mathcal{X}),\\ \psi \in L^{\infty}(\mathcal{X})}} \int \varphi(x) d \mu(x) + \int \psi(y) d\nu(y)\\
		& \quad \quad \quad - \int m_{\lambda}(\varphi(x) +\psi(y) - \|x-y\|^2) d\mu(x) d\nu(y) \nonumber
	\end{align}
	where $L^{\infty}(\mathcal{X})$ denotes the space of essentially bounded functions quotiented by a.e.\ equality, and
	$$
	m_{\lambda}(t) = \left\{\begin{array}{ccc}  +\infty 1_{\{t\geqslant 0\}} & \mbox{if} & \lambda = 0 \\  \lambda (e^{\frac{t}{\lambda}}-1) & \mbox{if} & \lambda > 0\end{array}\right.
	$$
\end{thm}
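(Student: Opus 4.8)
The plan is to treat the regimes $\lambda=0$ and $\lambda>0$ separately. For $\lambda=0$ one has $m_0(t)=0$ for $t<0$ and $m_0(t)=+\infty$ for $t\ge0$, so the last integral in \eqref{eq:ot_dual} equals $0$ when $\varphi(x)+\psi(y)\le\|x-y\|^2$ holds $(\mu\otimes\nu)$-almost everywhere and $-\infty$ otherwise; up to the usual convention on the almost-everywhere inequality, the right-hand side of \eqref{eq:ot_dual} is then the Kantorovich dual functional for the quadratic cost on the compact sets $\XX,\YY$, so \eqref{eq:ot_dual} reduces to classical Kantorovich duality, which I would simply quote (see e.g.\ \cite{santambrogio2015ot}). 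The substantive case is $\lambda>0$, which I would handle by a Fenchel--Rockafellar (Lagrangian) duality argument.

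For $\lambda>0$, the first step is to rewrite \eqref{Primal} as an \emph{unconstrained} minimization over all nonnegative measures $\pi$ on $\XX\times\YY$, the constraint $\pi\in\Pi(\mu,\nu)$ being enforced by two Lagrange multipliers $\varphi\in L^\infty(\XX)$ and $\psi\in L^\infty(\YY)$, because $\sup_{\varphi}\big(\int\varphi\,d\mu-\int\varphi(x)\,d\pi(x,y)\big)$ vanishes when the first marginal of $\pi$ is $\mu$ and equals $+\infty$ otherwise (and likewise for $\psi$ and the second marginal). Writing $c(x,y)=\|x-y\|^2$, this gives that $W_\lambda(\mu,\nu)$ equals
\[
\inf_{\pi\ge0}\ \sup_{\varphi,\psi}\ \Big\{\textstyle\int c\,d\pi+\lambda\,\KL(\pi|\mu\otimes\nu)+\int\varphi\,d\mu+\int\psi\,d\nu-\int\!\big(\varphi(x)+\psi(y)\big)\,d\pi\Big\}.
\]
Rather than justifying the exchange of the infimum and the supremum at this stage (the delicate point, postponed to the last step), the second step obtains the easy inequality directly: from the Fenchel--Young inequality $rs\le\lambda(r\log r-r+1)+m_\lambda(s)$, valid for all $r\ge0$ and $s\in\R$ with $m_\lambda(s)=\lambda(e^{s/\lambda}-1)$, applied with $r=\tfrac{d\pi}{d(\mu\otimes\nu)}(x,y)$ and $s=\varphi(x)+\psi(y)-c(x,y)$, I would integrate against $\mu\otimes\nu$ and use both the marginal identity $\int(\varphi(x)+\psi(y))\,d\pi=\int\varphi\,d\mu+\int\psi\,d\nu$ for $\pi\in\Pi(\mu,\nu)$ and the cancellation of the linear mass terms ($\mu\otimes\nu$ being a probability measure), so that $\int\lambda(r\log r-r+1)\,d(\mu\otimes\nu)=\lambda\,\KL(\pi|\mu\otimes\nu)$; rearranging shows that, for every $(\varphi,\psi)$ and every $\pi\in\Pi(\mu,\nu)$ (the case $\pi\not\ll\mu\otimes\nu$ being trivial since then $\KL=+\infty$), the dual objective at $(\varphi,\psi)$ is bounded above by the primal objective at $\pi$, hence $W_\lambda(\mu,\nu)$ dominates the right-hand side of \eqref{eq:ot_dual}. (Equivalently, the inner infimum over $\pi=r\cdot(\mu\otimes\nu)$ is a pointwise Legendre transform of $r\mapsto\lambda r\log r$, up to a harmless shift of the potentials by $\lambda/2$.)

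The last step, which I expect to be the main obstacle, is the reverse inequality, i.e.\ the absence of a duality gap; this is where the infinite-dimensional functional analysis enters. One option is to invoke the Fenchel--Rockafellar theorem directly, the required qualification condition being met at $\pi=\mu\otimes\nu$, where the primal objective of \eqref{Primal} is finite and, for $\lambda>0$, continuous. A more self-contained option is to exhibit optimizers. First, \eqref{Primal} admits a unique minimizer $\pi^\star\in\Pi(\mu,\nu)$, by weak-$\ast$ compactness of $\Pi(\mu,\nu)$ (Prokhorov, using the compactness of $\XX\times\YY$), weak-$\ast$ continuity of $\pi\mapsto\int c\,d\pi$ (the cost being continuous and bounded on the compact set $\XX\times\YY$), and weak-$\ast$ lower semicontinuity together with strict convexity of $\pi\mapsto\KL(\pi|\mu\otimes\nu)$. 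Next, the first-order optimality conditions---equivalently, the existence of Schr\"odinger potentials, which follows from the convergence of the Sinkhorn iterations on compact spaces with a continuous cost (see \cite{chizat2020sinkhorn})---yield bounded functions $\varphi^\star\in L^\infty(\XX)$ and $\psi^\star\in L^\infty(\YY)$ with $\tfrac{d\pi^\star}{d(\mu\otimes\nu)}(x,y)=\exp\big((\varphi^\star(x)+\psi^\star(y)-c(x,y))/\lambda\big)$. For this triple the Fenchel--Young inequality of the second step is an equality $(\mu\otimes\nu)$-almost everywhere (equality holding precisely when $r=e^{s/\lambda}$), and the marginal identity holds since $\pi^\star\in\Pi(\mu,\nu)$, so running that computation backwards shows that the dual objective at $(\varphi^\star,\psi^\star)$ equals $\int c\,d\pi^\star+\lambda\,\KL(\pi^\star|\mu\otimes\nu)=W_\lambda(\mu,\nu)$; this establishes the remaining inequality and shows, in addition, that the supremum in \eqref{eq:ot_dual} is attained. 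Throughout, the compactness of $\XX$ and $\YY$ and the continuity of the quadratic cost are the features that make the argument work.
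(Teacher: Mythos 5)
Your proposal is sound, but note first that the paper does not actually prove Theorem \ref{thm:ot_dual}: the statement is presented as background with a pointer to the literature (\cite{santambrogio2015ot,genevay2016ot}), and the same citation strategy is reused for the analogous duality with the cost $-2\langle x,y\rangle$ in Proposition \ref{prop:new_dual}. What you supply is therefore a reconstruction of the standard argument rather than an alternative to anything in the paper. Your route is the classical one and it is correct: weak duality follows from the Fenchel--Young inequality $rs\le \lambda(r\log r-r+1)+\lambda(e^{s/\lambda}-1)$ applied with $r=\tfrac{d\pi}{d(\mu\otimes\nu)}$ and $s=\ph(x)+\psi(y)-\|x-y\|^2$ --- this is exactly the computation identifying $m_\lambda$ as the convex conjugate of the relative entropy density, and your use of the marginal constraints and of the fact that $\pi$ and $\mu\otimes\nu$ are both probability measures to cancel the linear terms is right --- while strong duality and attainment follow from exhibiting the Schr\"odinger potentials $(\varphi^\star,\psi^\star)$ at which Fenchel--Young is saturated. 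The reduction of the case $\lambda=0$ to Kantorovich duality, modulo the harmless discrepancy between the constraints $t<0$ and $t\le 0$ built into $m_0$, is also fine. Compared with the paper's citation, your write-up has the merit of making explicit where compactness of $\XX,\YY$ and continuity of the cost enter (boundedness of $s$, hence finiteness of $\int m_\lambda(s)\,d\mu\,d\nu$ for $\ph,\psi\in L^\infty$; existence, uniqueness and absolute continuity of $\pi^\star$; boundedness of the potentials).

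Two caveats, neither fatal. First, your ``option one'' for the reverse inequality (Fenchel--Rockafellar with the qualification ``met at $\pi=\mu\otimes\nu$, where the primal objective is finite and continuous'') is stated too loosely: the constraint qualification is a continuity condition on the dual side, i.e.\ on the functional of $(\ph,\psi)\in\scrC_b(\calX)\times\scrC_b(\calY)$, and the KL term is certainly not weak-$\ast$ continuous in $\pi$; since you immediately provide the self-contained second option, the proof survives, but that sentence should be dropped or repaired. Second, the parenthetical relating $m_\lambda$ to the conjugate of $r\mapsto\lambda r\log r$ via ``a shift of the potentials by $\lambda/2$'' also involves an additive constant (one has $f^*(s)=g^*(s+\lambda)-\lambda$ for $g(r)=\lambda r\log r$); this is cosmetic. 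Outsourcing the existence of bounded Schr\"odinger potentials to \cite{chizat2020sinkhorn} is acceptable at the paper's own level of rigor, since the paper likewise asserts, without proof, the existence of continuous dual optimizers immediately after the theorem.
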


A solution $(\varphi,\psi)$ of the dual problem \eqref{eq:ot_dual} is called a pair of Kantorovich potentials. Besides,  since $\mathcal{X}, \mathcal{Y}$ are compact and  $c$ is continuous, it follows that the dual problem admits a solution $(\varphi, \psi) \in \mathscr{C}_b(\mathcal{X}) \times \mathcal{C}_b(\mathcal{Y})$. Moreover, when $\lambda > 0$, there exists solutions $\varphi, \psi$ to the dual problem~\eqref{eq:ot_dual} which are uniquely defined almost everywhere, up to an additive constant. The solutions of this regularized dual problem have the specific structure of $c$-transform functions.  For the quadratic cost $c(x,y) =\|x-y\|^2$, the regularized $c$-transform is defined as in~\cite{feydy2019sinkhorndiv}: for $\lambda > 0$, we set
\begin{equation}
	\label{eq:regularized_ctransform_explicit}
	\forall x \in \mathbb{R}^d, \quad
	\psi_{\nu}^{c,\lambda}(x) = - \lambda \log \ds\int e^{- \frac{\|x-y\|^2 - \psi(y)}{\lambda} } d\nu(y), 
\end{equation} 
and for $\lambda = 0$, the $c$-transform simply reads
\begin{equation}
	\label{eq:un-regularized_ctransform_explicit}
	\forall x \in \mathbb{R}^d, \quad
	\psi^{c}(x) = \min_{y \in \mathcal{Y}} \{\|x-y\|^2 - \psi(y) \} . 
\end{equation}
We also define the analogous operators for the $y$-variable (and for simplicity, we use the same notation for $c$-transforms of $x$-functions or $y$-functions). 
Notice that the operation used in~\eqref{eq:regularized_ctransform_explicit} can be understood as a smoothed minimum that depends on $\nu$.
Therefore, when $\lambda > 0$ we will stick to the notation $\psi_\nu^{c,\lambda}$ to keep in mind the possible dependence on $\nu$ of the regularized $c$-transform.
Notice also that, even if $\psi_{\nu}^{c, \lambda}$ will be integrated only on $\mathcal{X}$, the formulae allow to extrapolate the $c$-transforms on the whole space $\mathbb{R}^d$. In the sequel of this paper we extrapolate the $c$-transform on $B(0,R)$ to manipulate functions defined on a convex subset of $\mathbb{R}^d$ without imposing the convexity of $\mathcal{X}$ and $\mathcal{Y}$. Moreover, considering the $c$-transform amounts to optimizing one of the two potential, thus leading to an optimization problem with respect to one single potential, called the semi-dual problem \cite{genevay2016ot}:

\begin{cor}[Semi-dual formulation] \label{th:ot_semidual}
	The dual problem~\eqref{eq:ot_dual} is equivalent to the following semi-dual problem in the sense that
	\begin{equation}
		\label{eq:ot_semidual}
		W_{\lambda}(\mu,\nu) = \sup_{\psi \in L^{\infty}(\mathcal{Y})} \int \psi_{\nu}^{c,\lambda}(x) d \mu(x) + \int \psi(y) d\nu(y).
	\end{equation}
\end{cor}

A solution $\psi$ of the semi-dual problem is called a {\it Kantorovich potential}. In other words, $\psi$ is a Kantorovich potential if and only if $(\psi_{\nu}^{c,\lambda},\psi)$ is a pair of Kantorovich potentials. By symmetry, we can also formulate a semi-dual problem on the dual variable $\varphi$.
For discrete probability distributions, the iterative Sinkhorn algorithm, as defined below, (see e.g. \cite{cuturi2013sinkhorn}) allows to approximate the regularized OT cost $W_\lambda(\mu,\nu)$ as follows.

\begin{deff}\label{eq:sinkhorn_algorithm}
	For $\mu = \sum_{i=1}^n \mu_i \delta_{x_i}$and $\nu = \sum_{j=1}^m \nu_j \delta_{y_j}$ two discrete distributions on $\mathbb{R}^d$, the approximation of the regularized OT cost returned by the Sinkhorn approximation after $\ell$ iterations equals
	\begin{equation}\label{eq:sinkhorn_output_first}
		W_\lambda^{(\ell)}(\mu, \nu) = \sum_{i=1}^n\mu_i\varphi_i^{(\ell)} + \sum_{j=1}^m \nu_j \psi_j^{(\ell)}.
	\end{equation}
	The variables $\varphi^{(\ell)}$ and $\psi^{(\ell)}$ being the dual variables returned after $\ell$ iterations of the Sinkhorn algorithm. Starting from $\psi^{0} = 0_m \in \mathbb{R}^m$, the Sinkhorn $\ell^{th}$ iteration is defined by the update of the dual variables:
	\begin{equation}\label{eq:sinkhorn_updates}
		\begin{array}{ccl}
			\varphi^{(\ell)}_i & = & -\lambda \log \left( \sum_{j=1}^m \exp\left(-\frac{\|x_i-y_j\|^2 - \psi_j^{(\ell-1)}}{\lambda} \right)\nu_j\right)\\
			\psi^{(\ell)}_j & = & -\lambda \log \left( \sum_{i=1}^n \exp\left(-\frac{\|x_i-y_j\|^2 - \varphi_i^{(\ell)}}{\lambda} \right)\mu_i\right).
		\end{array}
	\end{equation}
\end{deff}

\begin{rem}[Convergence of Sinkhorn algorithm]\label{rem:conver_sink}
	Convergence guarantees of Sinkhorn algorithm are established for instance in \cite[Prop.2]{chizat2020sinkhorn}. That is, when the number of iterations $\ell$ goes to infinity, $W_\lambda^{(\ell)}(\mu, \nu)$ converges toward $W_\lambda(\mu, \nu)$. 
\end{rem}

A de-biased version of the regularized OT cost is also applied in \cite{genevay2017gansinkhorn} and further studied in \cite{feydy2019sinkhorndiv} and \cite{chizat2020sinkhorn}. This regularized OT cost is referred to as the Sinkhorn divergence, and defined as follows.

\begin{deff}[Sinkhorn Divergence \cite{feydy2019sinkhorndiv}]
	For $\lambda > 0,$ the Sinkhorn divergence between two probability measures $\mu$ and $\nu$ is defined by the formula
	\begin{equation}\label{eq:definition_sinkdiv}
		S_\lambda(\mu, \nu) := W_\lambda(\mu, \nu) - \frac{1}{2}\left( W_\lambda(\mu, \mu) + W_\lambda(\nu, \nu) \right).
	\end{equation}
\end{deff}
	In this article, we focus on the classical regularized transport cost $W_\lambda$. At the price of stronger assumptions, we also extend our results to the Sinkhorn divergence $S_\lambda$ in the Appendix \ref{sec:sinkdiv}. In the experimental Section~\ref{sec:numerical_experiments}, we will study the performance of the estimators based on the Sinkhorn divergence.

\subsection{An alternative dual problem} \label{sec:alternativedual}

We now introduce an alternative dual formulation of regularized OT that is specific to the quadratic cost.
This alternative dual problem is restricted to a class of Kantorovich potentials that are concave and Lipschitz functions, which proves useful to derive some of the convergence rates given in Section~\ref{sec:wass_estimators}.
The relation between those dual problems has already been explicited for un-regularized OT (for example in~\cite{chizat2020sinkhorn}), and we extend it to the regularized case.
Let $\lambda \geq 0$. By expanding the squared Euclidean cost, 
we have for any  $\pi \in \Pi(\mu,\nu)$, 
\begin{align}
	\int_{\mathcal{X} \times \mathcal{Y}} \|x-y\|^2 d\pi(x,y) + \lambda \textrm{KL} (\pi | \mu \otimes \nu) & = \int_\mathcal{X}\|x\|^2 d\mu(x) + \int_\mathcal{Y}\|y\|^2  d\nu(y) \\
	& -2 \int_{\mathcal{X} \times \mathcal{Y}}\langle x, y \rangle d\pi(x,y) + \lambda\textrm{KL} (\pi | \mu \otimes \nu). \nonumber
\end{align}
The above decomposition leads us to consider the new regularized transportation problem
\begin{equation}\label{eq:new_primal_scost}
	W_\lambda^s(\mu, \nu) = \min_{ \substack{\pi \in \Pi(\mu,\nu)} } \int_{\mathcal{X} \times \mathcal{Y}} s(x,y) d\pi(x,y) + \lambda \textrm{KL} (\pi | \mu \otimes \nu),
\end{equation}
with $s(x,y) = -2\langle x, y \rangle$. First, we remark that the standard regularized Wasserstein distance $W_\lambda(\mu, \nu)$ and the alternative regularized Wasserstein distance $W_\lambda^s(\mu, \nu)$ are link through the relation
\begin{equation}\label{eq:link_regularized_distances}
	W_\lambda(\mu, \nu) = \int_\mathcal{X}\|x\|^2 d\mu(x) + \int_\mathcal{Y}\|y\|^2  d\nu(y) + W_\lambda^s(\mu, \nu).
\end{equation}

A dual formulation associated to the problem \eqref{eq:new_primal_scost} is given by the next proposition.

\begin{prop}\label{prop:new_dual}  The dual problem associated to \eqref{eq:new_primal_scost} writes as
	\begin{align} \label{eq:dual:new_primal_scost}
		W_\lambda^s(\mu, \nu) & = \sup_{\substack{\varphi \in  L^{\infty}(\mathcal{X}) \\ \psi \in L^{\infty}(\mathcal{Y})}} \int \varphi(x) d \mu(x) + \int \psi(y) d\nu(y)\\
		& \qquad - \int m_{\lambda}(\varphi(x) +\psi(y) + 2 \langle x,y \rangle ) d\mu(x) d\nu(y), \nonumber
	\end{align}
	where $m_{\lambda}$ is defined in Theorem~\ref{thm:ot_dual}.
\end{prop}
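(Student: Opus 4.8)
The plan is to obtain \eqref{eq:dual:new_primal_scost} as a direct consequence of the strong duality already established in Theorem~\ref{thm:ot_dual}, by combining the elementary identity \eqref{eq:link_regularized_distances} with a translation of the Kantorovich potentials by the quadratic functions $x \mapsto \|x\|^2$ and $y \mapsto \|y\|^2$. The key observation is that, since $\calX$ and $\calY$ are compact (contained in $B(0,R)$), these quadratic functions are bounded on $\calX$ and $\calY$, so the maps $\ph \mapsto \ph + \|\cdot\|^2$ and $\psi \mapsto \psi + \|\cdot\|^2$ are bijections of $L^{\infty}(\calX)$ and $L^{\infty}(\calY)$ onto themselves; re-indexing a supremum over $L^{\infty}$ by such a bijection leaves it unchanged.

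Concretely, I would first apply Theorem~\ref{thm:ot_dual} to $W_\lambda(\mu,\nu)$ and, in the resulting dual, substitute $\wt\ph = \ph + \|\cdot\|^2$, $\wt\psi = \psi + \|\cdot\|^2$. Expanding $\|x-y\|^2 = \|x\|^2 + \|y\|^2 - 2\langle x,y\rangle$ turns the argument of $m_\lambda$ into $\wt\ph(x) + \wt\psi(y) - \|x-y\|^2 = \ph(x) + \psi(y) + 2\langle x,y\rangle$, while the linear part picks up exactly the additive term $\int_{\calX} \|x\|^2 d\mu(x) + \int_{\calY} \|y\|^2 d\nu(y)$; after the change of variables the supremum again runs over all of $L^{\infty}(\calX) \times L^{\infty}(\calY)$, so one is left with
\[
W_\lambda(\mu,\nu) = \int_{\calX} \|x\|^2 d\mu(x) + \int_{\calY} \|y\|^2 d\nu(y) + \Big(\text{right-hand side of }\eqref{eq:dual:new_primal_scost}\Big).
\]
Comparing with \eqref{eq:link_regularized_distances} — which itself follows by noting that for every $\pi \in \Pi(\mu,\nu)$ the $W_\lambda$-objective equals the $W_\lambda^s$-objective plus the feasibility-independent constant $\int \|x\|^2 d\mu + \int \|y\|^2 d\nu$, and then minimizing both objectives over the common set $\Pi(\mu,\nu)$ — gives the claim. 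The case $\lambda = 0$ is handled by the same computation: the constraint $\wt\ph(x) + \wt\psi(y) \le \|x-y\|^2$ encoded by $m_0$ becomes $\ph(x) + \psi(y) \le -2\langle x,y\rangle = s(x,y)$.

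An equivalent, more self-contained route (which I would fall back on) is simply to rerun the proof of Theorem~\ref{thm:ot_dual} with the cost $c(x,y) = \|x-y\|^2$ replaced by $s(x,y) = -2\langle x,y\rangle$: that argument (Fenchel--Rockafellar duality for the entropic problem, or a Sion-type minimax argument) uses only continuity of the cost and compactness of $\calX,\calY$, and $s$ is continuous and bounded by $2R^2$ on $\calX \times \calY$, so it applies verbatim, producing \eqref{eq:dual:new_primal_scost} with $-s(x,y) = 2\langle x,y\rangle$ in place of $-\|x-y\|^2$ inside $m_\lambda$. I do not expect a genuine obstacle here: the only points requiring (minor) care are that the potential translation is a true bijection of the $L^{\infty}$ spaces — which is precisely where compactness of $\calX,\calY$ is used — and the routine bookkeeping of the additive quadratic terms relating the primal identity \eqref{eq:link_regularized_distances} to the dual one.
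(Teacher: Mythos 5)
Your proposal is correct, and your ``fallback'' route is in fact exactly what the paper does: its entire proof consists of observing that \eqref{eq:new_primal_scost} is a regularized OT problem with the continuous cost $s(x,y) = -2\langle x,y\rangle$ on compact sets, so the general strong-duality statement applies verbatim and yields \eqref{eq:dual:new_primal_scost}. Your primary route --- translating the potentials by $\|\cdot\|^2$ in the dual \eqref{eq:ot_dual} of $W_\lambda$, expanding the square inside $m_\lambda$, and matching the additive quadratic terms against the primal identity \eqref{eq:link_regularized_distances} --- is a genuinely different and slightly more self-contained derivation: it needs no re-examination of the duality proof for a new cost, only the already-stated Theorem~\ref{thm:ot_dual} plus the (correct) remark that $\ph \mapsto \ph + \|\cdot\|^2$ is a bijection of $L^{\infty}(\calX)$ onto itself because $\calX \subset B(0,R)$ is bounded. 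What it gives up is generality: it is tied to the quadratic cost, whereas the paper's one-line argument would work for any continuous perturbation of the cost. Both arguments are sound; the bookkeeping in your change of variables ($\ph(x)+\|x\|^2+\psi(y)+\|y\|^2-\|x-y\|^2 = \ph(x)+\psi(y)+2\langle x,y\rangle$, with the linear part producing exactly $\int \|x\|^2 d\mu + \int \|y\|^2 d\nu$) checks out, and the $\lambda=0$ case is handled consistently with the convention encoded by $m_0$.
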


\begin{proof}
	The key argument is to remark that  \eqref{eq:new_primal_scost} is a regularized optimal transportation problem with cost function $s(x,y) = -2\langle x,y \rangle$. Hence, as $\mathcal{X}$ and $\mathcal{Y}$ are assumed to be compact and $s$ continuous, it follows that strong duality holds  (see e.g. \cite{genevay2016ot,BercuBigot2021}) in the sense of equation \eqref{eq:dual:new_primal_scost}.
\end{proof}

Fort the cost function $s(x,y) = -2 \langle x, y \rangle$, we can also define a $s$-transform and a semi-dual problem as follows. For the cost $s(x,y) = -2  \langle x, y \rangle$ and for $\varphi \in L^{\infty}(\mathcal{X})$ the $s$-transform is defined as
\begin{equation}\label{eq:def_stransform}
	\forall y \in \mathbb{R}^d, \quad
	\varphi^{s,\lambda}_{\mu}(y)=
	\begin{cases}
		- \lambda \log \ds\left(\int \exp\left( \frac{\varphi(x) + 2 \langle x, y \rangle}{\lambda} \right) d\mu(x)\right), & \text{for} \ \lambda > 0 ,\\ 
		- \ds\max_{x \in \mathcal{X}} (\varphi(x) + 2 \langle x,y \rangle )  & \text{for} \ \lambda = 0 .
	\end{cases}
\end{equation}

By the above $s$-transform in the dual problem \eqref{eq:dual:new_primal_scost} we obtain the following semi-dual formulation 
\begin{equation}\label{eq:def_semi_dual_stransform}
	\sup_{\varphi \in L^{\infty}(\mathcal{X})}\int \varphi(x) d\mu(x) + \int \varphi^{s,\lambda}_{\mu}(y) d\nu(y).
\end{equation}
We conclude this section by studying some properties of this $s$-transform. While already established in \cite{chewientropicgeneralizationCaffarelli2022}, we give an elementary proof for completeness. 

\begin{prop}\label{prop:convexity_Lispchitz_stransform}
	For $\lambda \geq 0$, the $s$-transform $\varphi^{s,\lambda}_{\mu}$ is concave and $R$-Lipschitz on $B(0,R)$.
\end{prop}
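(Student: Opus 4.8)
The plan is to split according to whether $\lambda>0$ or $\lambda=0$, recovering the boundary case $\lambda=0$ either directly or as the pointwise limit $\lambda\to0^{+}$ of the regularized one (concavity and a uniform Lipschitz bound are both stable under pointwise limits). Preliminary to everything, for $\lambda>0$ I would record that $f_{\lambda}(y):=\int\exp\big((\varphi(x)+2\langle x,y\rangle)/\lambda\big)\,d\mu(x)$ is finite and bounded away from $0$ for every $y\in B(0,R)$: since $\varphi\in L^{\infty}(\mathcal{X})$ and $(x,y)\mapsto\langle x,y\rangle$ is bounded on the compact set $\mathcal{X}\times B(0,R)$ (recall $\mathcal{X}\subseteq B(0,R)$), the integrand lies between two strictly positive constants and $\mu$ is a probability measure; the same bounds, applied to the integrand and to its $y$-gradient $\frac{2x}{\lambda}\exp(\cdots)$, justify differentiation under the integral sign, so $f_{\lambda}$, hence $\log f_{\lambda}$, hence $\varphi^{s,\lambda}_{\mu}=-\lambda\log f_{\lambda}$, is smooth near $B(0,R)$. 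For $\lambda=0$, $\varphi^{s,0}_{\mu}(y)=-\sup_{x\in\mathcal{X}}(\varphi(x)+2\langle x,y\rangle)$ is finite by compactness of $\mathcal{X}$.

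For \textbf{concavity} with $\lambda>0$, the clean argument is that $\log f_{\lambda}$ is a log-Laplace transform and hence convex: for $y_{0},y_{1}\in\R^{d}$ and $t\in[0,1]$, the integrand defining $f_{\lambda}(ty_{0}+(1-t)y_{1})$ factors pointwise as $u(x)^{t}v(x)^{1-t}$, where $u$ and $v$ are the integrands at $y_{0}$ and $y_{1}$ respectively (the term $\varphi(x)$ splits as $t\varphi(x)+(1-t)\varphi(x)$), so H\"older's inequality with exponents $1/t$ and $1/(1-t)$ gives $f_{\lambda}(ty_{0}+(1-t)y_{1})\le f_{\lambda}(y_{0})^{t}f_{\lambda}(y_{1})^{1-t}$, i.e.\ $\log f_{\lambda}$ is convex and $\varphi^{s,\lambda}_{\mu}=-\lambda\log f_{\lambda}$ is concave. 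Equivalently one may differentiate twice and get $\nabla^{2}\varphi^{s,\lambda}_{\mu}(y)=-\frac{4}{\lambda}\operatorname{Cov}_{\mu_{y}}(X)\preceq 0$, where $\mu_{y}$ is the probability measure on $\mathcal{X}$ with density proportional to $\exp((\varphi(x)+2\langle x,y\rangle)/\lambda)$ with respect to $\mu$. For $\lambda=0$, $-\varphi^{s,0}_{\mu}$ is a pointwise supremum of affine functions of $y$, hence convex, so $\varphi^{s,0}_{\mu}$ is concave (this also follows by letting $\lambda\to0^{+}$).

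For the \textbf{Lipschitz property}, when $\lambda>0$ I would differentiate: by the chain rule $\nabla\varphi^{s,\lambda}_{\mu}(y)=-2\int x\,d\mu_{y}(x)=-2\,\E_{\mu_{y}}[X]$. Since $\mu_{y}$ is a probability measure supported in $\mathcal{X}\subseteq B(0,R)$, its barycenter lies in the closed convex hull of $\mathcal{X}$, hence in $B(0,R)$, so $\|\nabla\varphi^{s,\lambda}_{\mu}(y)\|\le 2\sup_{x\in\mathcal{X}}\|x\|\le 2R$ for every $y$; integrating along segments shows $\varphi^{s,\lambda}_{\mu}$ is Lipschitz on $B(0,R)$, with constant $2\sup_{x\in\mathcal{X}}\|x\|$. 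When $\lambda=0$, given $y_{0},y_{1}$ pick $x_{\star}\in\mathcal{X}$ realizing (or almost realizing) the supremum defining $-\varphi^{s,0}_{\mu}(y_{0})$; then $\varphi^{s,0}_{\mu}(y_{1})-\varphi^{s,0}_{\mu}(y_{0})\le 2\langle x_{\star},y_{0}-y_{1}\rangle\le 2\sup_{x\in\mathcal{X}}\|x\|\,\|y_{0}-y_{1}\|$, and exchanging $y_{0},y_{1}$ gives the same bound; alternatively pass to the limit $\lambda\to0^{+}$ in the estimate just obtained.

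The argument is essentially routine. The only steps that need a little care are (i) the well-definedness and smoothness of $f_{\lambda}$ and the legitimacy of differentiating under the integral sign — this is exactly where compactness of $\mathcal{X}$, the inclusion $\mathcal{X}\subseteq B(0,R)$, and $\varphi\in L^{\infty}$ enter — and (ii) the non-smoothness at $\lambda=0$, which forces one to work with a (sub)gradient at a (near-)maximizer, or to invoke the limiting argument above. Recognising $\log f_{\lambda}$ as a log-Laplace transform makes the concavity immediate and sidesteps any Hessian computation; the Lipschitz constant comes out as $2\sup_{x\in\mathcal{X}}\|x\|$, which is controlled by $2R$.
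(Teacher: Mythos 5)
Your proof is correct. The concavity argument is essentially the paper's: the same H\"older inequality with exponents $1/t$ and $1/(1-t)$ applied to the factored integrand, followed by $-\lambda\log$, with the $\lambda=0$ case handled as a supremum of affine functions. For the Lipschitz part you take a genuinely different route: the paper applies the monotone operation $\lambda\log\int\exp(\cdot/\lambda)\,d\mu$ (resp.\ $\max_{x}$) to the pointwise inequality $\varphi(x)+2\langle x,y_1\rangle\le \varphi(x)+2\langle x,y_2\rangle + 2R\|y_1-y_2\|$ obtained from Cauchy--Schwarz, which needs no smoothness of $\varphi^{s,\lambda}_\mu$ at all, whereas you differentiate under the integral sign and bound $\nabla\varphi^{s,\lambda}_\mu(y)=-2\,\E_{\mu_y}[X]$ by $2R$ using that the Gibbs barycenter stays in $B(0,R)$. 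Your route costs you the (correctly discharged) justification of differentiation under the integral and a separate subgradient argument at $\lambda=0$, but it makes the Lipschitz constant completely transparent and also yields the Hessian formula $-\tfrac{4}{\lambda}\operatorname{Cov}_{\mu_y}(X)\preceq 0$ as a by-product. One small point in your favour: the constant you obtain, $2\sup_{x\in\mathcal X}\|x\|\le 2R$, is the sharp inheritance of the Lipschitz constant of $y\mapsto -2\langle x,y\rangle$; the paper's displayed inequality silently drops the factor $2$ coming from the cost $s(x,y)=-2\langle x,y\rangle$, so the honest conclusion of either argument is $2R$-Lipschitz rather than $R$-Lipschitz (this is immaterial for the way the proposition is used later).
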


\begin{proof}
	We start with the concavity of $\varphi^{s,\lambda}_{\mu}$.
	For $\lambda = 0$, it follows from the fact that a maximum of convex functions is convex.
	Now, for $\lambda > 0$, $y_1, y_2 \in \mathcal{Y}$ and $t \in (0,1)$, we have
	\begin{align*}
		\int_\mathcal{X} \exp &\left( \frac{ \varphi(x) + \langle x, t y_1 + (1-t)y_2 \rangle}{\lambda} \right) d\mu(x) \\
		& = \int_\mathcal{X} \exp\left(t\frac{\varphi(x) + 2 \langle x, y_1 \rangle}{\lambda} \right) \exp \left( (1-t)\frac{\varphi(x) + 2 \langle x, y_2 \rangle}{\lambda} \right) d\mu(x) \\
		& \leq \left( \int_\mathcal{X} \exp\left(\frac{\varphi(x) + 2 \langle x, y_1 \rangle}{\lambda} \right) d\mu(x)\right)^{t}  \left( \int_\mathcal{X} \exp\left(\frac{\varphi(x) + 2 \langle x, y_2 \rangle}{\lambda} \right) d\mu(x)\right)^{1-t},
	\end{align*}
	thanks to H\"older inequality with exponents $p = 1/t$ and $p'=1/(1-t)$.
	Applying $-\lambda \log$ on both sides gives directly
	\begin{equation}
		\varphi^{s,\lambda}_{\mu} \left( ty_1 + (1-t)y_2 \right) \geq  t \varphi^{s,\lambda}_{\mu}(y_1) + (1-t) \varphi^{s,\lambda}_{\mu}(y_2) .
	\end{equation}

	Now, we will see as in~\cite{feydy2019sinkhorndiv} that the regularized $s$-transform inherits the Lipschitz constant of the cost.
	For $y_1, y_2 \in \mathcal{Y}$ and $x \in \mathcal{X}$, we have 
	$|\langle x, y_1 - y_2 \rangle| \leq R \|y_1 - y_2\|$
	thanks to Cauchy-Schwarz inequality, and thus
	$$
	\varphi(x) + 2 \langle x, y_1 \rangle \leq R \|y_1 - y_2\| + \varphi(x) + 2 \langle x, y_2 \rangle .
	$$

	Taking $\lambda \log \int_{\mathcal{X}} \exp(\frac{\cdot}{\lambda}) d\mu(x)$ for $\lambda > 0$ (resp. $\max_{x \in \mathcal{X}}$ for $\lambda = 0$) on both sides gives
	$$\varphi^{s,\lambda}_{\mu}(y_2) \leq R\|y_1 - y_2\| +  \varphi^{s,\lambda}_{\mu}(y_1) \ .$$
	By symmetry, we get
	$|\varphi^{s,\lambda}_{\mu}(y_1) - \varphi^{s,\lambda}_{\mu}(y_2)| \leq R\|y_1 - y_2\|$.
\end{proof}

\subsection{Definition of the problem and quantity of interest} \label{subsec:exmixt}

Let $\mu = \sum_{k=1}^K \pi_k \mu_k$ be a probability measure that can be decomposed as a mixture of $K$ probability measures $\mu_1,\ldots,\mu_K$ in $ \mathcal{M}_+^1(\mathcal{X})$.  For $\theta \in \Sigma_K  = \{(\theta_1,\ldots,\theta_K)\in \mathbb{R}_{+}^{K} \; : \; \sum_{k=1}^{K} \theta_{k} = 1 \} $,  the re-weighted version of $\mu$  is defined as
\begin{equation}\label{eq:definition_mu_theta_bis}
	\mu_{\theta} =  \sum_{k=1}^{K} \theta_k \mu_k.
\end{equation}
Let $\nu$ be another probability measure in $ \mathcal{M}_+^1(\mathcal{Y})$ referred to as the target distribution.
The problem of class proportions estimation consists in estimating an optimal weighting vector
\begin{equation}\label{eq:theta_star_definition}
	\theta^* \in \argmin_{\theta \in \Sigma_K} W_{0}(\mu_{\theta},\nu)
\end{equation}
from empirical versions of the $\mu_1, \ldots, \mu_K$ and $\nu$.
In what follows, we discuss some properties of the optimisation problem \eqref{eq:theta_star_definition}.

First, this minimization problem is motivated by the implicit assumption that representing the target measure $\nu$ as a mixture of $K$ probability measures is relevant. To illustrate this point, we first state a result showing that one can recover the true class proportions in the ideal setting where the target distribution $\nu$ is also a mixture of $\mu_1,\ldots,\mu_K$.

\begin{lem}\label{lem:motivating_example}
	Suppose that $\mu_{\theta}$ and $\nu$ are mixtures of probability measures with the same components $\mu_1,\ldots,\mu_K$ but with different class proportions, respectively denoted by $\theta \in  \Sigma_K  $ and by $\tau \in  \Sigma_K  $. If the model $\left\{ \mu_\theta = \sum_{k=1}^K \theta_k \mu_k ~|~ \theta \in \Sigma_K\right\}$ is identifiable (in the sense that the mapping $\theta \mapsto  \mu_\theta$ is injective), then the solution of  optimization problem \eqref{eq:theta_star_definition} is unique and one has that $\theta^* = \tau$. 
\end{lem}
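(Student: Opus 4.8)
The plan is to observe that, under the stated hypotheses, the objective function in \eqref{eq:theta_star_definition} is simply $\theta \mapsto W_0(\mu_\theta,\mu_\tau)$, and then to use that $W_0$ is the square of a genuine distance between probability measures: its minimal value $0$ is attained exactly when $\mu_\theta = \mu_\tau$, which by the injectivity assumption forces $\theta = \tau$. The whole argument is soft and does not use any special feature of the regularized problems.

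First I would recall, from Definition~\ref{def:primal} with $\lambda = 0$, that $W_0(\mu,\nu) = \min_{\pi\in\Pi(\mu,\nu)}\int_{\XX\times\YY}\|x-y\|^2\,d\pi(x,y) \geq 0$ is the squared $2$-Wasserstein cost associated with the quadratic cost. Since the hypothesis is that $\nu$ is the mixture of $\mu_1,\dots,\mu_K$ with proportions $\tau$, i.e.\ $\nu = \mu_\tau$, the problem \eqref{eq:theta_star_definition} reads $\min_{\theta\in\Sigma_K} W_0(\mu_\theta,\mu_\tau)$. Then I would note that the choice $\theta = \tau$ gives $\mu_\theta = \mu_\tau$, and that the identity coupling concentrated on the diagonal belongs to $\Pi(\mu_\tau,\mu_\tau)$ and has zero transport cost, so $W_0(\mu_\tau,\mu_\tau) = 0$; combined with nonnegativity this shows that $\tau$ is a minimizer and that the minimal value of the objective equals $0$.

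It then remains to show uniqueness. For any minimizer $\theta^{\ast}\in\Sigma_K$ we have $W_0(\mu_{\theta^{\ast}},\mu_\tau) = 0$, so there exists $\pi^{\ast}\in\Pi(\mu_{\theta^{\ast}},\mu_\tau)$ with $\int\|x-y\|^2\,d\pi^{\ast}(x,y) = 0$. Since the integrand is nonnegative, $\pi^{\ast}$ must be concentrated on the diagonal $\{x=y\}$, and pushing $\pi^{\ast}$ forward by the two coordinate projections, which coincide on the diagonal, yields $\mu_{\theta^{\ast}} = \mu_\tau$. The assumed injectivity of $\theta\mapsto\mu_\theta$ then gives $\theta^{\ast} = \tau$, which establishes both the uniqueness of the minimizer and the identity $\theta^{\ast} = \tau$.

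I do not anticipate any real obstacle here; the only point deserving a word of care is the implication $W_0(\mu,\nu) = 0 \Rightarrow \mu = \nu$ used in the last step, which one may alternatively invoke directly as the classical fact that $\sqrt{W_0} = W_2$ is a distance on the set of probability measures supported in the compact ball $B(0,R)$ (finiteness of second moments being automatic by compactness), a fact that also makes the computation $W_0(\mu_\tau,\mu_\tau)=0$ immediate.
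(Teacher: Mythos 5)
Your proof is correct and follows essentially the same route as the paper's: nonnegativity of $W_0$, the fact that $W_0(\mu_\theta,\nu)=0$ if and only if $\mu_\theta=\mu_\tau$, and the identifiability assumption to conclude $\theta=\tau$. The only difference is that you spell out the diagonal-coupling argument behind the implication $W_0(\mu,\nu)=0\Rightarrow\mu=\nu$, which the paper simply takes as the standard metric property of $W_0^{1/2}$.
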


\begin{proof}
	The non-negativity property of $W_0$ ensures that for all $\theta \in \Sigma_K,~ W_0(\mu_\theta, \nu) \geq 0$. Next, for $\theta \in \Sigma_K$, 
	\begin{align*}
		W_0(\mu_\theta, \nu) = 0 & \Leftrightarrow \sum_{k=1}^K \theta_k \mu_k = \sum_{k=1}^K \tau_k \mu_k\\
		&\Leftrightarrow \theta = \tau,
	\end{align*} 
	where the last equivalence comes from the assumption that the model $\{ \mu_\theta ~|~ \theta \in \Sigma_K \}$ is identifiable. From this result, we deduce $\argmin_{\theta \in \Sigma_K}W_0(\mu_\theta, \nu) = \{\tau\}$.
\end{proof}

Notice that the injectivity of $\theta \mapsto \mu_{\theta}$ relates to the affine independence of $\{\mu_1, \ldots, \mu_K\}$. It is satisfied for example when the measures $\mu_1, \ldots, \mu_K$ have disjoint supports. If all the scenarios are not as friendly as the one considered in Lemma \ref{lem:motivating_example}, in numerous applications (for instance when the data can be clustered into sub-populations), it is relevant to approximate $\nu$ by a mixture model. The next result is about the smoothness of the minimization problem \eqref{eq:theta_star_definition}.

\begin{lem}\label{lem:continuity_un-regularized_loss_function}
	Suppose that $\mu_\theta$ is defined as in \eqref{eq:definition_mu_theta}. Then, the function \\$F~:~ \left\{\begin{array}{ccc}  \Sigma_K & \rightarrow & \mathbb{R}_+ 
		\\ \theta & \mapsto & W_0(\mu_{\theta}, \nu)
	\end{array}\right.$ is continuous on $\Sigma_K$.
\end{lem}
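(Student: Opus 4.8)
The plan is to show that $F(\theta) = W_0(\mu_\theta, \nu)$ is continuous by exhibiting a Lipschitz-type bound in terms of the standard Wasserstein distance $W_2$ between $\mu_\theta$ and $\mu_{\theta'}$, and then checking that $\theta \mapsto \mu_\theta$ is itself continuous (indeed Lipschitz) from $\Sigma_K$ (with, say, the $\ell^1$ metric) into $(\mathcal{M}_+^1(\mathcal{X}), W_2)$. First I would recall that on the compact set $\mathcal{X} \subset B(0,R)$, the map $(\mu,\nu) \mapsto W_0(\mu,\nu)$ (square of the $2$-Wasserstein distance) satisfies the reverse-triangle bound
\[
|W_0(\mu_\theta,\nu) - W_0(\mu_{\theta'},\nu)| = |W_2(\mu_\theta,\nu)^2 - W_2(\mu_{\theta'},\nu)^2| \le 4R\, W_2(\mu_\theta,\mu_{\theta'}),
\]
using $|a^2-b^2| = |a-b|(a+b)$, the triangle inequality $|W_2(\mu_\theta,\nu) - W_2(\mu_{\theta'},\nu)| \le W_2(\mu_\theta,\mu_{\theta'})$, and the fact that all measures are supported in $B(0,R)$ so that every $W_2$ distance between them is at most $2R$.

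Next I would bound $W_2(\mu_\theta, \mu_{\theta'})$ in terms of $\|\theta - \theta'\|_1$. The clean way is to build an explicit coupling: if $\theta, \theta' \in \Sigma_K$, write $\mu_\theta = \sum_k \theta_k \mu_k$ and $\mu_{\theta'} = \sum_k \theta'_k \mu_k$. One can transport the common mass $\sum_k \min(\theta_k,\theta'_k)\mu_k$ at zero cost (identity coupling within each $\mu_k$), and couple the residual masses — of total size $\tfrac12\|\theta-\theta'\|_1$ on each side — arbitrarily; since everything lives in $B(0,R)$, each unit of transported residual mass costs at most $(2R)^2 = 4R^2$. This yields $W_2(\mu_\theta,\mu_{\theta'})^2 \le 4R^2 \cdot \tfrac12\|\theta-\theta'\|_1 = 2R^2\|\theta-\theta'\|_1$, hence $W_2(\mu_\theta,\mu_{\theta'}) \le R\sqrt{2\|\theta-\theta'\|_1}$. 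Combining with the previous display, $|F(\theta)-F(\theta')| \le 4R^2\sqrt{2\|\theta-\theta'\|_1}$, which is a modulus of continuity (Hölder-$\tfrac12$), and in particular $F$ is continuous on $\Sigma_K$.

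The main thing to be careful about is the construction of the suboptimal coupling proving $W_2(\mu_\theta,\mu_{\theta'})^2 \lesssim R^2\|\theta-\theta'\|_1$: one must check it is a genuine element of $\Pi(\mu_\theta,\mu_{\theta'})$ (the marginals come out right after splitting each $\mu_k$ into its min-part and residual-part) and correctly account for the total residual mass being $\tfrac12\|\theta-\theta'\|_1$ rather than $\|\theta-\theta'\|_1$. Everything else — strong duality is not even needed here, only the primal definition of $W_0$ and compactness — is routine. An alternative, if one prefers to avoid writing the coupling explicitly, is to invoke that $W_1 \le W_2 \cdot \mathrm{diam}^{1/2}$-type comparisons together with the elementary bound $W_1(\mu_\theta,\mu_{\theta'}) \le 2R \cdot \tfrac12\|\theta-\theta'\|_1$ via the dual ($1$-Lipschitz test functions) formulation, but the direct coupling argument is the most self-contained.
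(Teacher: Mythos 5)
Your proof is correct, but it takes a genuinely different route from the paper's. The paper argues qualitatively: it takes a sequence $\theta^{(n)} \to \theta$, checks that $\mu_{\theta^{(n)}}$ converges weakly to $\mu_\theta$ together with convergence of second moments, and then invokes Corollary 6.11 of Villani's book (stability of the Wasserstein distance under weak convergence plus moment convergence) to conclude $W_0(\mu_{\theta^{(n)}},\nu) \to W_0(\mu_\theta,\nu)$. You instead produce a quantitative bound: the reverse triangle inequality for $W_2$ combined with the diameter bound gives $|F(\theta)-F(\theta')| \le 4R\, W_2(\mu_\theta,\mu_{\theta'})$, and your explicit coupling --- identity on the common part $\sum_k \min(\theta_k,\theta'_k)\mu_k$, arbitrary coupling of the residuals of total mass $\tfrac{1}{2}\|\theta-\theta'\|_1$ --- correctly yields $W_2(\mu_\theta,\mu_{\theta'})^2 \le 2R^2\|\theta-\theta'\|_1$ (the marginal check and the factor $\tfrac{1}{2}$ are right, since $\sum_k(\theta_k-\min(\theta_k,\theta'_k)) = \tfrac{1}{2}\|\theta-\theta'\|_1$). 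What each approach buys: the paper's argument is shorter and does not need the compact-support assumption in any essential way beyond finite second moments, but it is non-quantitative and leans on an external stability theorem; yours is entirely self-contained (only the primal definition of $W_0$) and delivers a H\"older-$\tfrac{1}{2}$ modulus of continuity $|F(\theta)-F(\theta')| \le 4R^2\sqrt{2\|\theta-\theta'\|_1}$, which is strictly more information than the lemma asks for and could be reused elsewhere in the analysis. Both establish the claim.
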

\begin{proof}
	
	Let $\theta \in \Sigma_K$ and $(\theta^{(n)})$ a sequence in $\Sigma_K$ that converges to $\theta$. Then, the probability sequence $(\mu_{\theta^{(n)}})$ converges weakly toward $\mu_\theta$. Indeed, for any bounded continuous function $\varphi$, one has that 
	$
	\int\varphi d\mu_{\theta^{(n)}} = \sum_{k=1}^K \theta_k^{(n)}\int \varphi d\mu_k.
	$
	As $\theta^{(n)} \rightarrow_{n \rightarrow \infty} \theta$, it follows that 
	$$\sum_{k=1}^K \theta_k^{(n)}\int \varphi d\mu_k \rightarrow \sum_{k=1}^K \theta_k\int \varphi d\mu_k. = \int \varphi d \mu_\theta.$$
	That is,  $(\mu_{\theta^{(n)}})$ weakly converges  towards $\mu_\theta$.
		As we work under the assumption that $\mathcal{X}$ and $\mathcal{Y}$ are compact, week convergence is equivalent to Wasserstein convergence \cite{santambrogio2015ot}[Thm. 5.10]. Hence, $W_0(\mu_{\theta^{(n)}}, \nu)$ converges to $W_0(\mu_\theta, \nu)$ when $n$ goes to infinity; which shows the continuity of $F$.
\end{proof}

Since the set $\Sigma_K$ is compact, the existence of a minimizer of the optimization problem \eqref{eq:theta_star_definition} follows from Lemma \ref{lem:continuity_un-regularized_loss_function}. We now give sufficient conditions that ensure the strict convexity of the objective function $\theta \mapsto W_0(\mu_\theta, \nu)$.

\begin{lem}\label{lem:identification}
	Assume that $\nu$ is absolutely continuous with respect to the Lebesgue measure. Then, if the model $\{ \mu_\theta~|~ \theta \in \Sigma_K \}$ is identifiable  (in the sense that the mapping $\theta \mapsto  \mu_\theta$ is injective), the function $F : \left\{\begin{array}{ccc}  \Sigma_K & \rightarrow & \mathbb{R}_+ 
		\\ \theta & \mapsto & W_0(\mu_{\theta}, \nu)
	\end{array}\right.$ is strictly convex.
\end{lem}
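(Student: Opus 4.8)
The plan is to first establish convexity of $F$ under no extra assumption, and then upgrade it to \emph{strict} convexity by exploiting the absolute continuity of $\nu$ together with identifiability.

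\textbf{Step 1 (convexity).} I would start from the elementary fact that, for a fixed second marginal $\nu$, the map $\mu \mapsto W_0(\mu,\nu)$ is convex: if $\pi^{(0)} \in \Pi(\mu^{(0)},\nu)$ and $\pi^{(1)} \in \Pi(\mu^{(1)},\nu)$ are optimal and $t \in (0,1)$, then $t\pi^{(0)} + (1-t)\pi^{(1)} \in \Pi(t\mu^{(0)}+(1-t)\mu^{(1)},\nu)$, and since $\pi \mapsto \int \|x-y\|^2\,d\pi$ is linear (there is no entropic term here, $\lambda = 0$), this competitor yields $W_0(t\mu^{(0)}+(1-t)\mu^{(1)},\nu) \le t\,W_0(\mu^{(0)},\nu) + (1-t)\,W_0(\mu^{(1)},\nu)$. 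Because $\theta \mapsto \mu_\theta$ is affine, $\mu_{t\theta^{(0)}+(1-t)\theta^{(1)}} = t\mu_{\theta^{(0)}} + (1-t)\mu_{\theta^{(1)}}$, so $F$ is convex on $\Sigma_K$ (and finite, by compactness of $\mathcal X,\mathcal Y$, as noted after Definition~\ref{def:primal}).

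\textbf{Step 2 (strictness).} To get strictness, I would argue by contradiction: assume equality holds in the Jensen inequality for some $\theta^{(0)} \ne \theta^{(1)}$ and some $t \in (0,1)$, i.e.\ $W_0(\mu_{\theta_t},\nu) = t\,W_0(\mu_{\theta^{(0)}},\nu) + (1-t)\,W_0(\mu_{\theta^{(1)}},\nu)$ with $\theta_t = t\theta^{(0)}+(1-t)\theta^{(1)}$. Since $\nu$ is absolutely continuous, by Brenier's theorem the quadratic-cost OT problem between any $\mu$ and $\nu$ has a unique optimal plan, concentrated on the graph $\{(T(y),y)\}$ of the gradient $T$ of a convex function, with $T_\#\nu = \mu$ (see e.g.\ \cite{santambrogio2015ot,villani2009optimal}). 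Let $\pi^{(0)},\pi^{(1)},\pi_t$ be the optimal plans for $(\mu_{\theta^{(0)}},\nu),(\mu_{\theta^{(1)}},\nu),(\mu_{\theta_t},\nu)$ and $T_0,T_1,T_t$ the corresponding maps. The equality forces $t\pi^{(0)}+(1-t)\pi^{(1)}$ to be optimal for $(\mu_{\theta_t},\nu)$, hence equal to $\pi_t$ by uniqueness. Disintegrating against the common second marginal $\nu$, for $\nu$-almost every $y$ the conditional law of the first coordinate is $\delta_{T_t(y)}$ on one side and $t\,\delta_{T_0(y)} + (1-t)\,\delta_{T_1(y)}$ on the other; a Dirac mass cannot be a nontrivial convex combination of two distinct Diracs, so $T_0(y) = T_1(y)$ for $\nu$-a.e.\ $y$, and therefore $\mu_{\theta^{(0)}} = (T_0)_\#\nu = (T_1)_\#\nu = \mu_{\theta^{(1)}}$. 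Identifiability then gives $\theta^{(0)} = \theta^{(1)}$, a contradiction; hence the convexity inequality is strict whenever $\theta^{(0)} \ne \theta^{(1)}$, i.e.\ $F$ is strictly convex.

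\textbf{Main difficulty.} The crux is the rigidity step, i.e.\ converting the equality case of convexity into $\mu_{\theta^{(0)}} = \mu_{\theta^{(1)}}$. This is precisely where absolute continuity of $\nu$ is used: it guarantees uniqueness of the optimal plan and its concentration on the graph of a single-valued map, which is what prevents the Diracs in the disintegration from splitting. An essentially equivalent route, closer in spirit to the alternative dual of Section~\ref{sec:alternativedual}, would be to observe that any optimal potential $\psi$ for $(\mu_{\theta_t},\nu)$ is, under the equality assumption, simultaneously optimal for $(\mu_{\theta^{(0)}},\nu)$ and $(\mu_{\theta^{(1)}},\nu)$, since the dual feasibility constraint $\psi^{c}(x)+\psi(y)\le\|x-y\|^2$ does not involve the first marginal; the same Brenier map attached to $\psi$ then transports $\nu$ onto both $\mu_{\theta^{(0)}}$ and $\mu_{\theta^{(1)}}$, again yielding $\mu_{\theta^{(0)}} = \mu_{\theta^{(1)}}$.
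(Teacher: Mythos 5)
Your proof is correct and follows essentially the same route as the paper: reduce the statement to the strict convexity of $\mu \mapsto W_0(\mu,\nu)$ for absolutely continuous $\nu$, then conclude via the affinity of $\theta \mapsto \mu_\theta$ and identifiability. The only difference is that the paper simply invokes Proposition 7.19 of \cite{santambrogio2015ot} for that strict convexity, whereas your Steps 1--2 reprove it from scratch (correctly) via Brenier uniqueness and the disintegration rigidity argument.
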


\begin{proof}
	Thanks to the assumption that  $\nu$ is absolutely continuous, Proposition 7.19 in \cite{santambrogio2015ot} ensures the strict convexity of the functional $\mu \mapsto W_0(\mu, \nu)$.
	Let   $\theta_0, \theta_1 \in \Sigma_K$ with $\theta_0 \neq \theta_1$ and $t \in (0,1)$. Then, we have that
	$F(t\theta_0 + (1-t)\theta_1) = W_0(\mu_{t\theta_0 + (1-t)\theta_1}, \nu),$
	and $\mu_{t\theta_0 + (1-t)\theta_1} =  t \mu_{\theta_0} + (1-t) \mu_{\theta_1}$.
	
	Since $\theta_0 \neq \theta_1$ and the model $\{\mu_\theta~|~ \theta \in \Sigma_K \}$ is supposed to be identifiable, we have that $\mu_{\theta_0} \neq \mu_{\theta_1}$. Therefore, the strict convexity of $\mu \mapsto W_0(\mu, \nu)$ yields
	$$W_0(t\mu_{\theta_0}+(1-t)\mu_{\theta_1}, \nu) < t W_0(\mu_{\theta_0}, \nu)+ (1-t)W_0(\mu_{\theta_1}, \nu).$$
	which proves the strict convexity of $F : \theta \mapsto W_0(\mu_\theta, \nu)$.
	
\end{proof}

\section{Convergence rates for the expected excess risk of parametric Wasserstein estimators} \label{sec:wass_estimators}

In this section,  we present the regularized and un-regularized parametric Wasserstein estimators that are considered in this paper, and we compare their convergence rates.

\subsection{Definition of the estimators} \label{sec:defestim}

We aim at estimating $\theta^*$ when the distributions $\mu$ and $\nu$ are only observed through samples. Hence, we assume given the following empirical measures (as defined in Section \ref{sec:prop})
$$
\hat{\mu} = \sum_{k=1}^K \hat{\pi}_{k} \hat{\mu}_k, \mbox{ where } \hat{\pi}_{k} =  \frac{m_k}{m}, \quad \text{and} \quad \hat{\nu} = \frac{1}{n} \sum_{j=1}^{n} \delta_{Y_j},
$$
where each component $\hat{\mu}_k$ corresponds to a known sub-population of cells of size $m_k$ in the source sample $X_1,\ldots,X_m$.

Moreover, we recall that  $\hat{\mu}_\theta =  \sum_{k=1}^{K} \theta_k \hat{\mu}_k.$ denotes the empirical version of the re-weighted measure $\mu_\theta$.  

We  can now define the various Wasserstein estimators whose convergence properties are discussed in Section \ref{sec:convrates}. Depending on the regularization parameter chosen, and using the empirical measures $\hat{\mu}_\theta$ and $ \hat{\nu}$, a family of estimators $(\hat{\theta}_\lambda)_{\lambda \geq 0}$ of the class proportions can be defined as follows:
\begin{equation}\label{eq:reg_estimate}
	\text{for}~ \lambda \geq 0, \quad	\hat{\theta}_{\lambda} \in \widehat{\Theta}_{\lambda} := \argmin_{\theta \in \Sigma_K} W_{\lambda}(\hat{\mu}_{\theta},\hat{\nu}),
\end{equation}

When considering entropy regularized OT, that is when $\lambda > 0$,  we also propose to study the estimators that are obtained with the Sinkhorn algorithm on the sample distributions after a limited number of $\ell$ iterations, that are
\begin{equation}\label{eq:sink_output_estimate}
	\hat{\theta}_{\lambda}^{(\ell)} \in \widehat{\Theta}_{\lambda}^{(\ell)}
	:= \argmin_{\theta \in \Sigma_K} W_{\lambda}^{(\ell)}(\hat{\mu}_{\theta},\hat{\nu}).
\end{equation}

As pointed previously in Remark~\ref{rem:conver_sink}, the convergence of $W_\lambda^{(\ell)}$ towards $W_\lambda$ allows to interpret the estimator $\hat{\theta}_\lambda$ as a limiting case of $\hat{\theta}_\lambda^{(\ell)}$ when $\ell$ goes to infinity. Beside studying the estimators $\hat{\theta}_\lambda$ and $\hat{\theta}_\lambda^{(\ell)}$; we also extend our result when substituting the regularized transport cost $W_\lambda$ in equation \eqref{eq:reg_estimate} by the Sinkhorn divergence $S_\lambda$ that is defined by formula \eqref{eq:definition_sinkdiv}. Due to space constraint, we have gathered theoretical results related to the Sinkhorn divergence to the Appendix \ref{sec:sinkdiv}. In this paper, to assess the performance of a given estimator $\hat{\theta}$ of $\theta^*$ based on $n$ samples, we shall consider the following expected excess risk defined as
\begin{equation}
	\label{eq:excess_risk}
	r_n(\mu_{\hat{\theta}},\nu) = \mathbb{E} \big[W_{0}(\mu_{\hat{\theta}},\nu) - W_{0}(\mu_{\theta^*}, \nu) \big].
\end{equation}

\begin{rem}
	In our context of parametric Wasserstein estimation, we can interpret the excess risk as the representation error of $\nu$ induced by the estimator. Indeed, $\mu_{\theta^*}$ defined in equation \eqref{eq:theta_star_definition} is the best representation of $\nu$ in the model $\{\mu_\theta ~|~ \theta \in \Sigma_K \}$ w.r.t. the Wasserstein distance. And, $W_0^{1/2}$ being a distance, under the assumption that the function $\theta \mapsto W_0(\mu_\theta, \nu)$ is bounded on $\Sigma_K$, we can write
	$$0 \leq W_{0}(\mu_{\hat{\theta}},\nu) - W_{0}(\mu_{\theta^*}, \nu) \lesssim W_{0}^{1/2}(\mu_{\hat{\theta}}, \mu_{\theta^*}).$$ This equation shows that the excess risk is closely related to Wasserstein distance between the best representation of $\nu$ in the model that is $\mu_{\theta^*}$ and its estimated version $\mu_{\hat{\theta}}$.
\end{rem}

\begin{rem}Instead of controlling the excess risk \eqref{eq:excess_risk} we would have preferred to work directly on the weights. That is, upper bounding the quantity $\|\hat{\theta}-\theta^*\|$. It would have been possible to derive such a result if the function $\theta \mapsto W_{0}(\mu_{\theta}, \nu)$ had been strongly convex. However, we can find elementary counter-examples where $\theta \mapsto W_{0}(\mu_{\theta}, \nu)$ is \textit{not} strongly convex.
Indeed, on the real line, let us consider $\mu_\theta = \theta \delta_{0} + (1-\theta)\delta_1$ and $\nu = (\delta_0 + \delta_1)/2$.
Then one can show that for every $\theta \in [0,1]$, $$W_0(\mu_\theta, \nu) = |1/2-\theta|,$$
which is not strongly convex. This result can be established thanks to the formula that links the quantile functions to the optimal transport cost on the real line (see e.g., \cite[Prop. 2.17]{santambrogio2015ot})
\end{rem}

\begin{rem}
We believe that our results could be extended to other parametric families of the form $\{\mu_\theta, \; \theta \in \Theta \}$, provided compact supports assumptions are satisfied, as well as continuity assumptions on the map $\theta \mapsto \mu_\theta$. For instance, further research could aim to extend existing results by ~\cite{Biau21} on the statistical analysis of un-regularized Wasserstein Generative Adversarial Networks~(WGAN) to the case of entropy regularized WGAN considered by~\cite{sanjabi2018wgan}.
\end{rem}

In Section \ref{sec:convrates}, we present upper bounds on the above expected risk for the proposed estimators. When the regularization parameter $\lambda$ is involved, we also propose a decreasing choice $(\lambda_n)_{n \geq 0}$ of its value to ensure that the resulting estimator has an expected excess risk that goes to zero when $n \to + \infty$. 


\subsection{Convergence rates for the expected excess risk} \label{sec:convrates}
This section contains the main results of this paper. We study the rate of convergence of the family of estimators $(\hat{\theta}_\lambda^{(\ell)})_{\lambda \geq 0}$ depending on the parameters $\lambda \geq 0$ and $\ell \in \mathbb{N}^*$. In the following results, the notation $\lesssim$ means inequality up to a multiplicative universal constant.\\
	
	As classically done in nonparametric statistics, we decompose the excess risk of an estimator into an estimation error and an approximation error that need to be balanced to derive an optimal choice of the regularization parameter $\lambda \to 0$ as the number of observations tends to infinity. For example, the excess risk of the estimator $\hat{\theta}_{\lambda}$ defined in equation \eqref{eq:reg_estimate} is upper bounded as follows:
	\begin{align}\label{eq:decompo_estim_approx}
		W_0(\mu_{\hat{\theta}_\lambda}, \nu) - W_0(\mu_{\theta^*}, \nu)   \leq ~& 2\sup_{\theta \in \Sigma_K}|W_\lambda(\hat{\mu}_{\theta}, \hat{\nu}) - W_\lambda(\mu_{\theta}, \nu)|\\  
		& + 2\sup_{\theta \in \Sigma_K} | W_{0}(\mu_{\theta}, \nu) - W_{\lambda}(\mu_{\theta}, \nu) |. \nonumber
	\end{align}
	As the introduction of entropic penalty term in the optimal transport problem was motivated by computational improvement \cite{cuturi2013sinkhorn}, it is also useful to take into account the algorithmic error. Therefore, we substitute in equation \eqref{eq:decompo_estim_approx} the estimator $\hat{\theta}_\lambda$ by its version computed with Sinkhorn algorithm: $$\hat{\theta}_\lambda^{(\ell)} = \argmin_{\theta \in \Sigma_K} W_\lambda^{(\ell)}(\hat{\mu}_\theta, \hat{\nu}).$$ In such a case, we provide an upper bound in the next lemma. 
	
	\begin{lem}\label{lem:decompo_estim_approx_algo} The excess risk of the estimator $\hat{\theta}_\lambda^{(\ell)}$ is upper bounded as follows:
		\begin{align}\label{eq:decompo_estim_approx_algo}
			0 \leq 	W_0(\mu_{\hat{\theta}_\lambda^{(\ell)}}, \nu) - W_0(\mu_{\theta^*}, \nu)   \leq 2 &\underbrace{ \sup_{\theta \in \Sigma_K}|W_\lambda(\hat{\mu}_{\theta}, \hat{\nu}) - W_\lambda(\mu_{\theta}, \nu)|}_{\textrm{Estimation error}}\\  
			& + 2\underbrace{ \sup_{\theta \in \Sigma_K} | W_{0}(\mu_{\theta}, \nu) - W_{\lambda}(\mu_{\theta}, \nu) | }_{\text{Approximation error}} \nonumber\\	
			& + \underbrace{2 \sup_{\theta \in \Sigma_K}|W_\lambda^{(\ell)}(\hat{\mu}_{\theta}, \hat{\nu}) - W_\lambda(\hat{\mu}_{\theta}, \hat{\nu})|.}_{\text{Algorithm error}}  \nonumber
		\end{align}
	\end{lem}
	The computations leading to Lemma \ref{lem:decompo_estim_approx_algo} are gathered in Section \ref{subsec:estimation_approximation_reg_wass} of the Appendix. The main theorem of this article is based on a new bound for the control of the estimation error, given in the following proposition.
	\begin{prop}\label{prop:reg_free_estimation}
		Set $\lambda \geq 0,$ and suppose that the probability measures $\mu_1,\ldots, \mu_K$, and $\nu$ have compact supports included in $B(0,R)$. If for all components $\mu_k$ as well as for $\nu$ at least $n$ observations are available, Then the following inequality holds true:
		\begin{equation}
			\mathbb{E}\left[\sup_{\theta \in \Sigma_K}|W_\lambda(\hat{\mu}_{\theta}, \hat{\nu}) - W_\lambda(\mu_{\theta}, \nu)| \right]   \lesssim K \mathcal{E}(n,d),
		\end{equation}
		where the upper bound $\mathcal{E}(n,d)$ is defined by 
		\begin{equation}\label{eq:upper_bound_estimation}
			\mathcal{E}(n,d) := 	\left\{ \begin{array}{lll}
				R^2 n^{-1/2} & \text{if} & d<4,\\
				R^2 n^{-1/2}\log(n)& \text{if} & d=4,\\
				R^2 n^{-2/d} & \text{if} & d>4.
			\end{array}
			\right.
		\end{equation}
	\end{prop}
	
	The proof of Proposition \ref{prop:reg_free_estimation} is deferred to Section \ref{subsec:control_estimation} of the Appendix. This proof relies on \cite[Lemma 4]{chizat2020sinkhorn} where the maximum of an empirical process is upper bounded. We point out that the upper bound in Proposition~\ref{prop:reg_free_estimation} is independent of the regularization parameter $\lambda$. 
	
	\begin{thm}\label{thm:bound_decomposed_error}
		Set $\lambda > 0$ and suppose that all probability measures $\mu_1, \ldots, \mu_K$ and $\nu$ have compact supports. Assume that for all the components $\mu_k$ as well as for $\nu$, at least $n$ observations are available. Then, the expected excess risk of the estimator $\hat{\theta}_\lambda^{(\ell)}$ introduced in equation \eqref{eq:sink_output_estimate} is upper bounded as follows:
		$$ \mathbb{E}\left[W_0(\mu_{\hat{\theta}_{\lambda}^{(\ell)}}, \nu) -W_0(\mu_{\theta^*}, \nu) \right]   \lesssim \mathcal{E}(n,d) + \lambda \log\left( \frac{1}{\sqrt{d}\lambda}\right) + \frac{1}{\lambda \ell},$$
		where the quantity $\mathcal{E}(n,d)$ is defined by formula \eqref{eq:upper_bound_estimation}, and the implicit multiplicative constant depends only on $K$, $d$ and $R$.
	\end{thm}
	
	The detailed proof of Theorem \ref{thm:bound_decomposed_error} can be found in Section \ref{subsec:proof_main} of the Appendix. We mention that it is based on the upper bound \eqref{eq:decompo_estim_approx_algo} where each term of the right-hand side is controlled by the appropriate bound. The expected estimation error is upper bounded thanks to Proposition \ref{prop:reg_free_estimation}. For the remaining terms, we collect results established in the literature. More precisely, we exploit the works of Genevay et al. \cite{genevay2019sample} and of Chizat et al. \cite{chizat2020sinkhorn} to control the approximation and the algorithm errors.

	\begin{cor}\label{cor:bound_selected_parameters}
		Suppose that every probability measure has compact support and that $d>4$. 
		If for all the components $\mu_k$ as well as for $\nu$, at least $n$ observations are available, then the following non-asymptotic rates of convergence hold for the estimator $\hat{\theta}_{\lambda_n}^{(\ell_n)}$ computed with the Sinkhorn algorithm:
		$$ \mathbb{E}\left[W_0(\mu_{\hat{\theta}_{\lambda_n}^{(\ell_n)}}, \nu) -W_0(\mu_{\theta^*}, \nu) \right]   \lesssim   n^{-\frac{2}{d}}\log(n), \quad  \text{with} \quad \left\{ \begin{array}{l}
			\lambda_n = n^{-2/d}, \\
			\ell_n \geq 32R^4n^{4/d}.
		\end{array}\right.$$
	\end{cor}
	
	\begin{proof}
		Setting the regularization parameter to $\lambda = n^{-2/d}$, and the number of iterations to $\ell =  32R^4n^{4/d}$ in inequality \eqref{eq:sink_output_estimate} yields the announced rate of convergence. 
	\end{proof}
	
	\begin{rem}[Extension to the Sinkhorn divergence $S_\lambda$]
		The estimators analyzed in Theorem \ref{thm:bound_decomposed_error} and Corollary \ref{cor:bound_selected_parameters} are defined as solutions of variational problems based on the regularized transport cost $W_\lambda$. Under ad hoc assumptions, our results can be extended to the Sinkhorn divergence $S_\lambda$ whose definition is reminded in equation \eqref{eq:definition_sinkdiv}. In such a case, we define the collection of estimators $(\hat{\theta}_\lambda^{S})_{\lambda > 0}$ as follows:
		\begin{equation}\label{eq:sinkdiv_estimate}
			\hat{\theta}_\lambda^S  \in \argmin_{\theta \in \Sigma_K}S_\lambda(\hat{\mu}_\theta, \hat{\nu}).	
		\end{equation}
		Provided stronger assumptions are made, the approximation error of the Sinkhorn divergence is smaller than the approximation error $|W_\lambda - W_0|$. However, the estimators $\hat{\theta}_\lambda^S$ and $\hat{\theta}_\lambda$ have estimation errors of the same magnitude $n^{-2/d}$. Therefore, when tuning the parameter $\lambda$ depending on the number of observations and the dimension, we reach the rate $n^{-2/d}$. This the same rate, up to logarithm factor, as for the estimator $\hat{\theta}_\lambda$ that we study throughout this article. We can also take into account the algorithm error for the estimator $\hat{\theta}_\lambda^{S}$ depending on the number of iterations $\ell$. All the results related to the estimator \eqref{eq:sinkdiv_estimate} introduced in this remark can be found in Section \ref{sec:sinkdiv} of the Appendix.
	\end{rem}
	
	When studying the sample complexity of the regularized OT cost, that is $\mathbb{E}\left[ | W_\lambda(\mu, \nu) - W_\lambda(\hat{\mu}, \hat{\nu}) | \right]$  as  done in \cite{genevay2019sample, mena2019statistical}, or when estimating the standard optimal transport cost $W_0(\mu, \nu)$ as in \cite{chizat2020sinkhorn}, bounds related to the control of the estimation error have been proved. These results give a control of $\mathbb{E}\left[ | W_\lambda(\mu, \nu) - W_\lambda(\hat{\mu}, \hat{\nu}) |\right]$ that is of order $C_\lambda/\sqrt{n}$ with $C_\lambda$ a constant that depends on the regularizing parameter. In the following section we give a similar result adapted to our context of weights estimation, and discuss why we favored the upper bound given in Proposition \ref{prop:reg_free_estimation}.

\section{Alternative bound on the estimation error, and relation to state of the art}\label{sec:comparison}

Proposition \ref{prop:reg_free_estimation} gives a control of the estimation error that is independent of the regularization parameter $\lambda$. We now give a bound much closer to what is known in the literature, where a small regularization parameter severely impacts the rate of convergence.

\begin{prop}\label{prop:estimation_reg_impact}
	Let $\lambda > 0,$ and suppose that all probability measures have compact supports included in $B(0,R)$. If for all components $\mu_k$ as well as for $\nu$ at least $n$ observations are available, the estimation error can be upper bounded as follows:
	\begin{equation}\label{eq:estimation_reg_impact}
		\mathbb{E}\left[\sup_{\theta \in \Sigma_K}|W_\lambda(\hat{\mu}_{\theta}, \hat{\nu}) - W_\lambda(\mu_{\theta}, \nu)| \right]   \lesssim \frac{K M_\lambda}{\sqrt{n}},
	\end{equation} 
	With $M_\lambda := M_d\max\left(R^2,\frac{R^{\lfloor d/2 \rfloor +1}}{\lambda^{\lfloor d/2 \rfloor}}\right).$
\end{prop}

The proof of this last proposition \ref{prop:estimation_reg_impact} can be found in Section \ref{subsec:empirical_process_reg_wass} of the Appendix.\\

This last upper bound \eqref{eq:estimation_reg_impact} seems appealing because independent of the dimension~$d$ and going to zero at the same rate of $n^{-1/2}$. However, the constant $M_\lambda$ depends on the dimension $d$ \emph{and} the regularization parameter $\lambda$. Thus, when one tries to exploit this bound \eqref{eq:estimation_reg_impact}, instead of $\mathcal{E}(n,d)$ like it is done in Theorem \ref{thm:bound_decomposed_error}, one reaches the following upper bound on the expected risk of the estimator $\hat{\theta}_\lambda$:
\begin{equation}\label{eq:estim_approx_regimpact}
	\mathbb{E}\left[W_0(\mu_{\hat{\theta}_{\lambda_n}}, \nu) -W_0(\mu_{\theta^*}, \nu) \right]  \lesssim \frac{1}{\lambda^{\lfloor d/2 \rfloor} \sqrt{n}}   + \lambda \log\left( \frac{1}{\sqrt{d}\lambda}\right).
\end{equation}

In the last inequality we have not taken into account the algorithm error. 
Balancing the two terms of the right-hand side of~\eqref{eq:estim_approx_regimpact} leads to a regularization parameter $\lambda_n = n^{1/(2\lfloor d/2 \rfloor +2)}$. Finally, under the assumptions of Theorem \ref{thm:bound_decomposed_error}, using the estimation error \eqref{eq:estimation_reg_impact} gives a slower rate of convergence than in Corollary \ref{cor:bound_selected_parameters}. Indeed the expected excess risk of the estimator $\hat{\theta}_{\lambda}$ is upper bounded by 
$$ \mathbb{E}\left[W_0(\mu_{\hat{\theta}_{\lambda_n}}, \nu) -W_0(\mu_{\theta^*}, \nu) \right]   \lesssim   n^{-1/(2\lfloor d/2 \rfloor +2 )}\log(n), \quad  \text{with} \quad 
\lambda_n =  n^{1/(2\lfloor d/2 \rfloor +2)}.
$$

\begin{rem}[Estimation of $W_0(\mu, \nu)$] We can adapt the results established in Corollary \ref{cor:bound_selected_parameters} to estimate the optimal transport cost with a regularized transport cost. Indeed we can use a \textit{regularized plug-in} estimators $W_\lambda(\hat{\mu}, \hat{\nu})$. This question is for instanced investigated in \cite{chizat2020sinkhorn} where the estimation of $W_0(\mu, \nu)$ is based on the Sinkhorn divergence defined in~\eqref{eq:definition_sinkdiv}.
	If $n$ samples are available from each measure $\mu$ and $\nu$, thanks to the estimator $S_\lambda(\hat{\mu}, \hat{\nu}),$ they reach the rate of convergence  (see \cite[Prop.~4]{chizat2020sinkhorn})
	\begin{equation}\label{eq:convergence_plug_sink}
		\mathbb{E}\left[| S_\lambda(\hat{\mu}, \hat{\nu}) - W_0(\mu, \nu) |\right]  \lesssim n^{-2/(2\lfloor d/2 \rfloor +4)},
	\end{equation}
	for some well chosen regularization parameter $\lambda_n$ that depends on $n$. However, based on the results we established $\hat{\theta}_\lambda$ we can derive faster rates of convergence for $W_{\lambda_n}(\hat{\mu}, \hat{\nu})$ toward $W_0(\mu, \nu)$.
\end{rem}

\begin{rem}
When setting $K=1$ in Proposition \ref{prop:estimation_reg_impact}, we recover a result close to what was previously established by other authors. To the best of our knowledge, the first result of this flavor was proven in \cite[Thm.~3]{genevay2019sample}, but with a factor $e^{R^2/\lambda}$ in the multiplicative constant. Soon after, the result \cite[Cor.~3]{mena2019statistical} improved the sample complexity control by removing the exponential factor, and substituting a compact support assumption by a sub-Gaussian one. In Proposition \ref{prop:estimation_reg_impact}, exploiting the compact supports assumption enables us to provide a multiplicative constant that scale with $R^{\lfloor d/2 \rfloor +1}\lambda^{-\lfloor d/2 \rfloor}$. All these controls over the sample complexity dramatically deteriorate in high dimension when $\lambda$ is close to zero. To mitigate this high dimensional phenomenon, the recent article \cite[Thm.~2]{stromme2024minimum} provides bounds where the impact of the ambient dimension $d$ is substituted by an intrinsic dimensional quantity of the measures compared.
\end{rem}

\begin{prop}\label{prop:estimation_prop}
	Suppose that $\mu$ and $\nu$ have their supports included in $B(0,R)$ and that the dimension $d > 4$. If $n$ samples are available for each probability measure, then the regularized plug-in estimator reaches the rate of convergence
	\begin{equation}\label{eq:convergence_plug_regwass}
		\mathbb{E}\left[|W_{\lambda_n}(\hat{\mu}, \hat{\nu}) - W_0(\mu, \nu) |\right] \lesssim n^{-2/d} \log(n) \quad \text{with} \quad \lambda_n = n^{-2/d}.
	\end{equation}	
\end{prop}
\begin{proof}
	We have
	\begin{equation*}
		\mathbb{E}\left[|W_{\lambda}(\hat{\mu}, \hat{\nu}) - W_0(\mu, \nu)|\right] \leq 
		\mathbb{E}\left[|W_{\lambda}(\hat{\mu}, \hat{\nu}) - W_\lambda(\mu, \nu)|\right]  + 
		|W_{\lambda}(\mu, \nu) - W_0(\mu, \nu)|.
	\end{equation*}
	The first term on the right-hand side is upper bounded by $\mathcal{E}(n,d)$ thanks to Proposition \ref{prop:reg_free_estimation} applied in the case $K=1$ and $\mu_1 = \mu$. For the second term, the result established in \cite[Thm. 1]{genevay2019sample} gives a control of order $\lambda\log(1/\lambda)$ when $\lambda$ goes to zero. Hence, assuming that $d>4$, and choosing $\lambda = n^{-2/d}$, we recover the rate of convergence claimed in equation \eqref{eq:convergence_plug_regwass}.
\end{proof}

Hence, in the case $d>4$, the expected error of the estimator $ W_{\lambda_n}(\hat{\mu}, \hat{\nu})$ goes to zero faster than when considering $S_{\lambda_n}(\hat{\mu}, \hat{\nu})$. While establishing inequality \eqref{eq:convergence_plug_regwass} only requires the measures $\mu$ and $\nu$ to have compact support, the previously known inequality \eqref{eq:convergence_plug_sink} requires stronger assumptions on the measures $\mu$ and $\nu$.

\begin{rem}
	To compare our results with the state-of-the-art, we set $K=1$ in Proposition \ref{prop:estimation_prop} to recover the sample complexity problem; that is controlling $\mathbb{E}\left[|W_{\lambda}(\hat{\mu}, \hat{\nu}) - W_\lambda(\mu, \nu)|\right]$. 
	After the first version of our manuscript was posted on ArXiv, the preprint \cite{groppe2023lower} established more general results than what we derive when taking $K=1$ in Proposition \ref{prop:estimation_reg_impact}. Indeed, the result \cite[Thm.~3]{groppe2023lower} provides a sample complexity control that holds for more general costs than the squared Euclidean distance we considered in the present article. Moreover, the authors of \cite{groppe2023lower} provide some adaptive results where dimension-dependent bounds are substituted by intrinsic dimensional properties of the measures compared. To stress the difference between our work and \cite{groppe2023lower}, we also mention that in this article the authors study the Entropic Gromov-Wasserstein cost. Conversely, our main results, that are Theorem \ref{thm:bound_decomposed_error} and its Corollary \ref{cor:bound_selected_parameters}, study an estimation framework where  entropic optimal transport costs are exploited as losses function. Our estimation scenario is thus different from \cite{groppe2023lower} where the quantities targeted are entropic optimal transport costs.
\end{rem}

\begin{rem}[Near minimax-rate for the estimation of $W_0(\mu, \nu)$]
	It has been shown in \cite[Thm. 21]{manole2021sharp} that the minimax rate of convergence of $W_0(\mu, \nu)$ is lower bounded by $(n\log(n))^{-2/d}$ when $n$ observations from each measure are available. Up to a logarithmic factor, as shown in \cite[Thm.2]{chizat2020sinkhorn}, this rate is reached by the plug-in estimator $W_0(\hat{\mu}, \hat{\nu})$. An application of our work is to show that, up to another logarithmic factor, the regularized plug-in estimator $W_{\lambda_n}(\hat{\mu}, \hat{\nu})$ also reaches this rate of convergence. However, in some cases, the computation of $W_{\lambda_n}(\hat{\mu}, \hat{\nu})$ might be faster than $W_0(\hat{\mu}, \hat{\nu})$.
\end{rem}

\begin{rem}[Computational cost of $W_{\lambda_n}(\hat{\mu}, \hat{\nu})$]
	One iteration of Sinkhorn algorithm requires $\mathcal{O}(n^2)$ arithmetic operations \citep[Page 5]{chizat2020sinkhorn}. And we compute an approximation of $W_{\lambda_n}(\hat{\mu}, \hat{\nu})$ with $W_{\lambda_n}^{(\ell_n)}(\hat{\mu}, \hat{\nu})$ where $\ell_n = 32R^4 n^{4/d}$. Hence, the global cost of computing of $W_{\lambda_n}^{(\ell_n)}(\hat{\mu}, \hat{\nu})$ is of $\mathcal{O}(n^{2+4/d})$ arithmetic operations. On the other hand computing $W_0(\hat{\mu}, \hat{\nu})$ with a linear programming algorithm requires $\mathcal{O}(n^3)$ \cite{pele2009fast, cuturi2013sinkhorn}. Hence, as soon as $d > 4$, approximating $W_0(\mu,\nu)$ based on $n$ samples is faster with Sinkhorn algorithm than with a linear program. From our understanding, the computational advantages of entropic optimal transport are in high dimension. This is due to the fact that in high dimension the estimation error is large enough to allow for a choice of large $\lambda$, and thus a fast convergence of Sinkhorn algorithm.  
\end{rem}

\section{Numerical experiments}\label{sec:numerical_experiments}

In this section, using simulated and real data from flow cytometry, we analyze the numerical performances of the estimators introduced in Section \ref{sec:wass_estimators}. These numerical experiments have been designed to demonstrate how the parameters $\lambda$ and $\ell$ impact the performance of regularized Wasserstein estimators. Moreover, these experiments show that an appropriate choice of the parameters allows regularized estimators to reach the performance of estimators based on the standard OT cost $W_0$. For the results reported here, the parameter $\lambda$ ranges in a finite grid $\Lambda \subset \mathbb{R}_+^*$ from $0.01$ to $1$. 
	Sinkhorn algorithm is either limited to $\ell = 5$ on simulated data, or to $\ell = 10$ on flow cytometry data. To simulate the setting where Sinkhorn in unlimited, our stopping criterion is based the difference between two consecutive outputs. More specifically, it stops if $|W_\lambda^{(\ell)} - W_\lambda^{(\ell-1)}| < 10^{-9}.$\\
	
	For the estimators based on the transport cost $W_\lambda$ with $\lambda > 0$, we follow the protocol described thereafter. Given a data set ${\bf X}$ classified into $K$ classes, and an unclassified data set ${\bf Y}$ where we want to estimate the class proportions, we compute the empirical distributions $\hat{\mu}_1, \ldots, \hat{\mu}_K$ and $\hat{\nu}$. Then, we compute the estimator $\hat{\theta}_\lambda$ of the class proportions by solving the optimization problem
	\begin{equation}\label{eq:sampling_wasserstein_estimators}
		\hat{\theta}_\lambda =\argmin_{\theta \in \Sigma_K}W_\lambda(\hat{\mu}_\theta, \hat{\nu}).
	\end{equation}
	To solve this problem, we apply a gradient descent algorithm to the function ${\theta \mapsto W_\lambda(\hat{\mu}_\theta, \hat{\nu})}$. In order to move from a constrained problem to an unconstrained one, we re-parameterize the simplex $\Sigma_K$ with a soft-max function $\chi : \mathbb{R}^K \rightarrow \Sigma_K $, where the $l$th  component of $\chi(z)$ is defined by 
	$$ 
	\chi(z)_l = \frac{\exp(z_l)}{\sum_{k=1}^K \exp(z_k)}.
	$$
	Then, we introduce the linear operator $\Gamma : \Sigma_K \rightarrow \Sigma_n$ that maps the weights associated to each component $\hat{\mu}_k$ to the weights associated to each observations. That is 
	$$
	\forall (i, k) \in \{1, \ldots, n\} \times \{1, \ldots ,K\},\quad \Gamma_{i,k} := \begin{cases}
		1/n_k &  \text{if} ~ X_i \sim \mu_k, \\
		0 & \text{otherwise.}
	\end{cases}
	$$
	Thus, our objective function reads $F_\lambda = W_{\lambda,\bf{X}}(\cdot,\hat{\nu}) \circ \Gamma \circ \chi$, where for $a \in \Sigma_n$, $W_{\lambda,\bf{X}} (a, \hat{\nu})$ denotes the transport cost between the measure with weights $a$ and support ${\bf X}= (X_1, \ldots, X_n)$, and the measure $\hat{\nu}$. From \cite[Prop. 9.1]{peyre2019computational}, we know that the gradient of $W_{\lambda,\bf{X}} (\cdot, \hat{\nu})$ at point $a$ is given by the unique dual potential $\varphi$ associated to the measure $\sum_{i=1}^n a_i \delta_{X_i}$ such that $\sum_{i=1}^n \varphi_i = 0$. From the chain rule of differentiation, we derive that the gradient of the objective function is given by 
	\begin{equation}\label{eq:gradient_step}
		\nabla_z F_\lambda(z) = J_\chi(z)^{T}\Gamma^T \varphi_z, 
	\end{equation}
	where $J_\chi$ is the Jacobian matrix of $\chi$ and $\varphi_z$ is the optimal potential with respect to $W_\lambda(\hat{\mu}_{\chi(z)}, \hat{\nu})$. Our approximation of $\varphi_z$ computed is with the Sinkhorn algorithm.
	
	\begin{algorithm}
			\SetAlgoLined
			
			$z \gets 1_K$\\
			
			\For{$N \leftarrow 1$ \KwTo $N_{\rm{out}}$}{
				\tcc{Sinkhorn Algorithm to compute the dual potentials of $W_{\lambda}(\hat{\mu}_{\chi(z)}, \hat{\nu})$}
				$\varphi \gets 0_n$ \\
				
				\For{$l \leftarrow 1$ \KwTo $\ell$}{
					\tcc{One Sinkhorn algorithm iteration}
					$\psi \gets \varphi^{c, \lambda}$ \\
					$\varphi \gets \psi^{c,\lambda}$
					
				}
				
				\tcc{Approximation of the gradient of $z \mapsto  W_{\lambda}(\hat{\mu}_{\theta_z},\hat{\nu})$ where $\theta_z = \chi(z)$}
				$\omega(z) \gets (\Gamma J_{\chi}(z))^{T}\varphi$
				
				$z \gets z - \eta \omega(z)$
				
			}
			
			\tcc{Approximation of the estimator $\hat{\theta}_\lambda$}
			\KwRet{$\hat{\theta}_\lambda = \chi(z)$}
			
			\caption{Approximation of $\hat{\theta}_\lambda^{(\ell)}  = \argmin_{\theta \in \Sigma_K}W_\lambda^{(\ell)}(\hat{\mu}_\theta, \hat{\nu})$. }
		\label{algorithm}
	\end{algorithm}
	
	When relying on the other transport costs studied, that are $W_0$,  $S_\lambda$, $W_\lambda^{(\ell)}$ or $S_\lambda^{(\ell)}$, we apply the same protocol as for $W_\lambda$; apart from the gradient formula \eqref{eq:gradient_step}. Indeed, denoting by $\mathcal{L}$ an optimal transport cost among $W_0, S_\lambda, W_\lambda^{(\ell)}$ and $S_\lambda^{(\ell)}$, the problem we are trying to solve (after parameterization with the soft-max function $\chi$) is
	\begin{equation}
		\min_{z \in \mathbb{R}^K} \mathcal{L}(\hat{\mu}_{\chi(z)}, \hat{\nu}).
	\end{equation}
	We can rewrite the objective function $F: z \rightarrow F(z)= \mathcal{L}_{\bf{X}}(\cdot,\hat{\nu}) \circ \Gamma \circ \chi(z)$. Here,
	$\mathcal{L}_{\bf{X}} (a, \hat{\nu})$ denotes the transport cost criterion between the measure with weights $a \in \Sigma_n$ and support ${\bf X}= (X_1, \ldots, X_n)$, and the measure $\hat{\nu}$. Then, differentiating this function $F$ gives the gradient
	$$\nabla_z F(z) = J_\chi(z)^T \Gamma^T \nabla \mathcal{L}_{{\bf X}}(\Gamma \chi(z), \hat{\nu}).$$ 
	
	Finally, depending on the loss $\mathcal{L}$, we substitute $\nabla \mathcal{L}_{{\bf X}}(\Gamma \chi(z), \hat{\nu})$ by its value. For the un-regularized case, we have $\nabla W_{0,{\bf X}}(\Gamma \chi(z), \hat{\nu}) = \varphi_z$ with $\varphi_z$ the first potential associate to $W_{0}(\hat{\mu}_{\chi(z)}, \hat{\nu})$. We rely on a linear programming algorithm to approximate this dual vector $\varphi_z$, which in this case is a sub-gradient \citep[Prop.~9.1]{peyre2019computational}. For $W_\lambda^{(\ell)}$, as in Algorithm 1, we rely on Sinkhorn algorithm to compute $\varphi^{(\ell)}$, the dual potential after $\ell$ iterations. For $S_\lambda$, the gradient is given by the formula $\nabla S_{\lambda,{\bf X}}(\Gamma \chi(z), \hat{\nu}) = \varphi^{\mu, \nu} - (\varphi^{\mu, \mu} + \psi^{\mu, \mu})/2$ \citep[eq.~2.12]{bigot2019CLT_EOT}, where $\varphi^{\mu, \nu}$ is the first potential associated to $W_\lambda(\hat{\mu}_{\chi(z)}, \hat{\nu})$, and $\varphi^{\mu, \mu}, \psi^{\mu, \mu}$ are the two potentials associated to $W_\lambda(\hat{\mu}_{\chi(z)}, \hat{\mu}_{\chi(z)})$. For $S_\lambda^{(\ell)}$, its gradient is given by the same formula as $S_\lambda$ while substituting the potentials by their approximations after $\ell$ steps of the Sinkhorn algorithm.\\
	
	\begin{rem}
		The algorithm described in the present article is fairly similar to the numerical scheme exploited in \cite{freulon2020cytopt}. However, in the present work, the approximation of the dual potential required to compute the gradient of $W_\lambda$ is based on Sinkhorn algorithm. While in the previous work \cite{freulon2020cytopt}, the authors applied the stochastic optimization algorithm studied in \cite{genevay2016ot, BercuBigot2021}. Relying on Sinkhorn algorithm enables us to incorporate the algorithmic error in our theoretical study.
	\end{rem}
	
	For each setting, that is choosing a loss among $W_\lambda$, $S_\lambda$, $W_\lambda^{(\ell)}$ or $S_\lambda^{(\ell)}$; and setting the parameters $\lambda$ and $\ell$, we sample (or sub-sample when experimenting on real data) $N=50$ couples of datasets $({\bf X}^{[1]}, {\bf Y}^{[1]}), \ldots , ({\bf X}^{[N]}, {\bf Y}^{[N]})$. Then, for each couple $({\bf X}^{[r]}, {\bf Y}^{[r]})$, we compute an estimator $\hat{\theta}^{[r]}$ of the class proportions in ${\bf Y}^{[r]}$. We thus obtain $N$ realizations  $\hat{\theta}^{[1]},\ldots, \hat{\theta}^{[N]}$ of a given estimator of the class proportions.  Then, we choose to evaluate performance of the estimator, by computing the quadratic errors $\|\hat{\theta}^{[1]}-\theta^*\|^2,\ldots, \|\hat{\theta}^{[N]}- \theta^*\|^2$. We display these error with box plots as in Figure \ref{fig:Results_Simulation_NoLimit_Variation}, where circles are the errors $\|\hat{\theta}^{[r]}-\theta^*\|^2$ beyond 1.5 times the inter-quartile range. When experimenting on synthetic data, $\theta^*$ is known as $\nu=\sum_{k=1}^K \theta_k^* \mu_k$ as ensured by Lemma \ref{lem:motivating_example}. In experiments on cytometry data, $\theta^*$ is unknown because all probability measures underlying the observations are unknown. In this case, we substitute $\theta^*$ by the true proportions in the unclassified data set ${\bf Y}$, to which we actually have access.
	
	We also approximate the expected quadratic risk $\mathbb{E}[\|\hat{\theta}-\theta^*\|^2]$ by Monte-Carlo repetitions as classically done in statistical experiments:
	\begin{equation}\label{eq:estimate_expectation_error}
		\mathbb{E}[\|\hat{\theta}-\theta^*\|^2] \approx \frac{1}{N}\sum_{r=1}^N\|\hat{\theta}^{[r]}-\theta^*\|^2.
	\end{equation}
	We plot this approximated average error on Figure \ref{fig:Result_Simulation_NoLimit_Average}, when considering for instance the losses $W_\lambda$ and $S_\lambda$. This protocol is repeated for each value of $\lambda$ in the grid $\Lambda$ and each loss function.

\begin{rem}
	In these numerical experiments, we have chosen to focus on the expected error $\mathbb{E}[\|\hat{\theta}-\theta^*\|^2]$ rather than the expected excess risk  $r_n(\mu_{\hat{\theta}},\nu)$ as in flow cytometry the relevant quantity is an accurate estimation of class proportions in the target dataset.
	Also, notice that the risk  $r_n(\mu_{\hat{\theta}},\nu)$ cannot be computed exactly because it involves the quantity $W_0(\mu_{\hat{\theta}}, \nu)$ for which we have no closed-form formula.
\end{rem}

\subsection{Simulated data}\label{subsec:simulated_data}

We first simulated two Gaussian mixtures of dimension $d=6$ with the same $K=5$ components but with different class proportions. Thus, a source data set ${\bf X}$ corresponds to random vectors $X_1,\ldots,X_n$ sampled with respect to $\mu$ and a target data set ${\bf Y}$ corresponds to  random vectors $Y_1,\ldots,Y_n$ sampled with respect to the distribution $\nu$, where $\mu$ and $\nu$ are defined below:
\begin{equation}\label{eq:simu_source_target_distribution}
	\mu = \sum_{k=1}^5 \pi_k \mathcal{N}(\rho_k,\sigma^2 I_d), \qquad  
	\nu = \sum_{k=1}^5 \theta_k^* \mathcal{N}(\rho_k,\sigma^2 I_d).
\end{equation} 
Because the vector of proportions $\pi$ and $\theta^*$ are not assumed to be equal, we exploit the known classes at the source in order to estimate the class proportions $\theta^*$ at the target, based on empirical versions of $\mu_1, \ldots, \mu_K$ and $\nu$.

We have same number of samples $m_k=n$ from each source components $\mu_k$ than samples from the target distribution $\nu$.  This experimentation setting matches the presentation of our theoretical results given in Section \ref{sec:convrates}. To ease the simulation study, we constrain the number of observations to $m_k=50$ observations for each class of the source data set. In the target data set, we also constrain the number of observations per class with $n_1 = 20, n_2=5, n_3=8, n_4=7, n_5=10$, so $n=50$ in total. We display in Figure \ref{fig:Simulated_data_source_target} two-dimensional projections of one dataset from the source measure and one dataset from the  target measure with their respective clustering. Note that the clustering of the target dataset is then assumed to be unknown.

\begin{figure}[htbp]
	\centering
	\includegraphics[width=\textwidth]{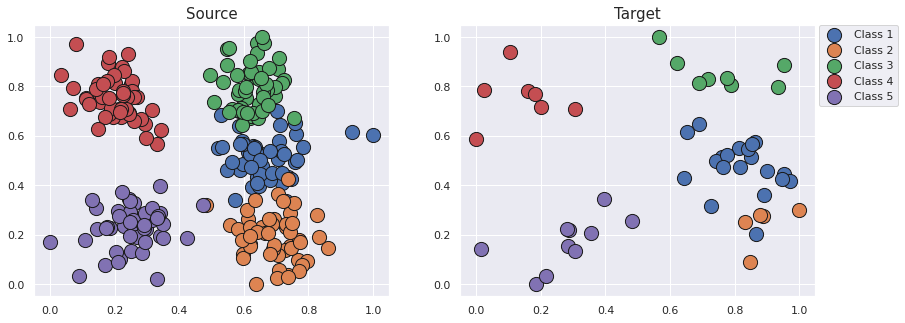}
	\caption{2D projections of a simulated source data set and a target data set with their clustering.}
	\label{fig:Simulated_data_source_target} 
\end{figure}

\subsubsection{Unlimited number of Sinkhorn iterations}\label{subsubsec:simu_nolimit}

Through a first series of experiments, we compare the performances of the estimators computed with the losses $W_0$, $W_\lambda$ and $S_\lambda$. In Figure \ref{fig:Results_Simulation_NoLimit_Variation}, using a boxplot we display the behavior of the error $\|\hat{\theta}_\lambda - \theta^*\|^2$ for each value of the regularization parameter $\lambda \in \Lambda$. In Figure \ref{fig:Result_Simulation_NoLimit_Average}, we also display the  estimation of $\mathbb{E}[\|\hat{\theta}_\lambda-\theta\|^2]$ using the Monte-Carlo estimator \eqref{eq:estimate_expectation_error}. For small values of $\lambda$, the regularized losses $W_\lambda$ and $S_\lambda$ yield competitive estimators compared to the one obtained with~$W_0$. Notice that the regularization parameter advised from Corollary \ref{cor:bound_selected_parameters} is $\lambda_n := n^{-2/d}$. In this first series of experiments $n=50$ and $d=6$, that gives $\lambda_n \approx 0.27$. This parameter $\lambda$ is slightly larger, than suggested by our empirical results from Figure \ref{fig:Results_Simulation_NoLimit_Variation}. This gap between theory and practice might be explained by the fact that we did not take into account the multiplicative constant in the approximation error. According to \cite{genevay2019sample}, this constant is of order $2d\lambda\log(1/\lambda)$. Thus, taking this constant into account would give a regularization parameter $\tilde{\lambda}_n = (2d)^{-1}\lambda \log(1/\lambda)$, which is closer to the parameters that perform best in these experiments.

\begin{figure}[htbp]
	\includegraphics[width=1\textwidth]{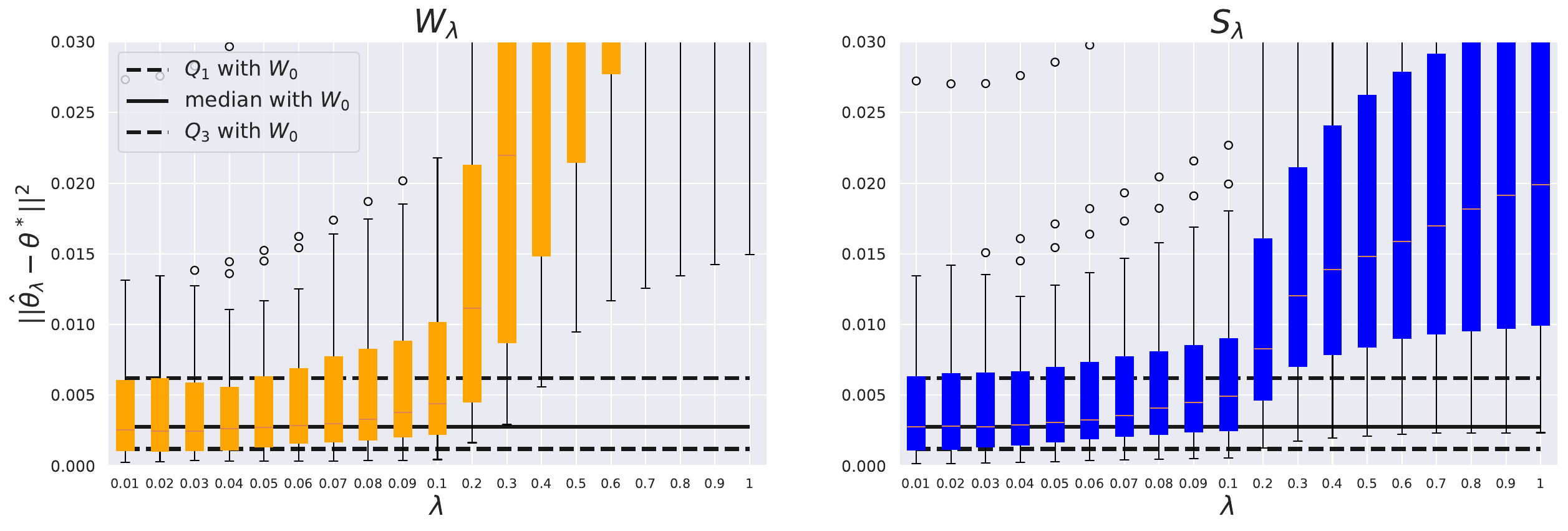}
	\caption{Estimation results on simulated data without limitation on the number of iterations of the Sinkhorn algorithm. We display the error $\|\hat{\theta}_\lambda - \theta^*\|^2$ using either the loss $W_\lambda$ (left) or $S_\lambda$ (right). The black line is the median error of the un-regularized estimator $\hat{\theta}_0$ using the standard optimal transport cost $W_0$, while the dotted lines are the first and third quartiles of the errors of estimation $\|\hat{\theta}_0- \theta^*\|^2$.  Circles are the errors $\|\hat{\theta}_\lambda-\theta^*\|^2$ beyond 1.5 times the inter-quartile range.}
	\label{fig:Results_Simulation_NoLimit_Variation} 
\end{figure}

\begin{figure}[htbp]
	\centering
	\includegraphics[width=0.7\textwidth]{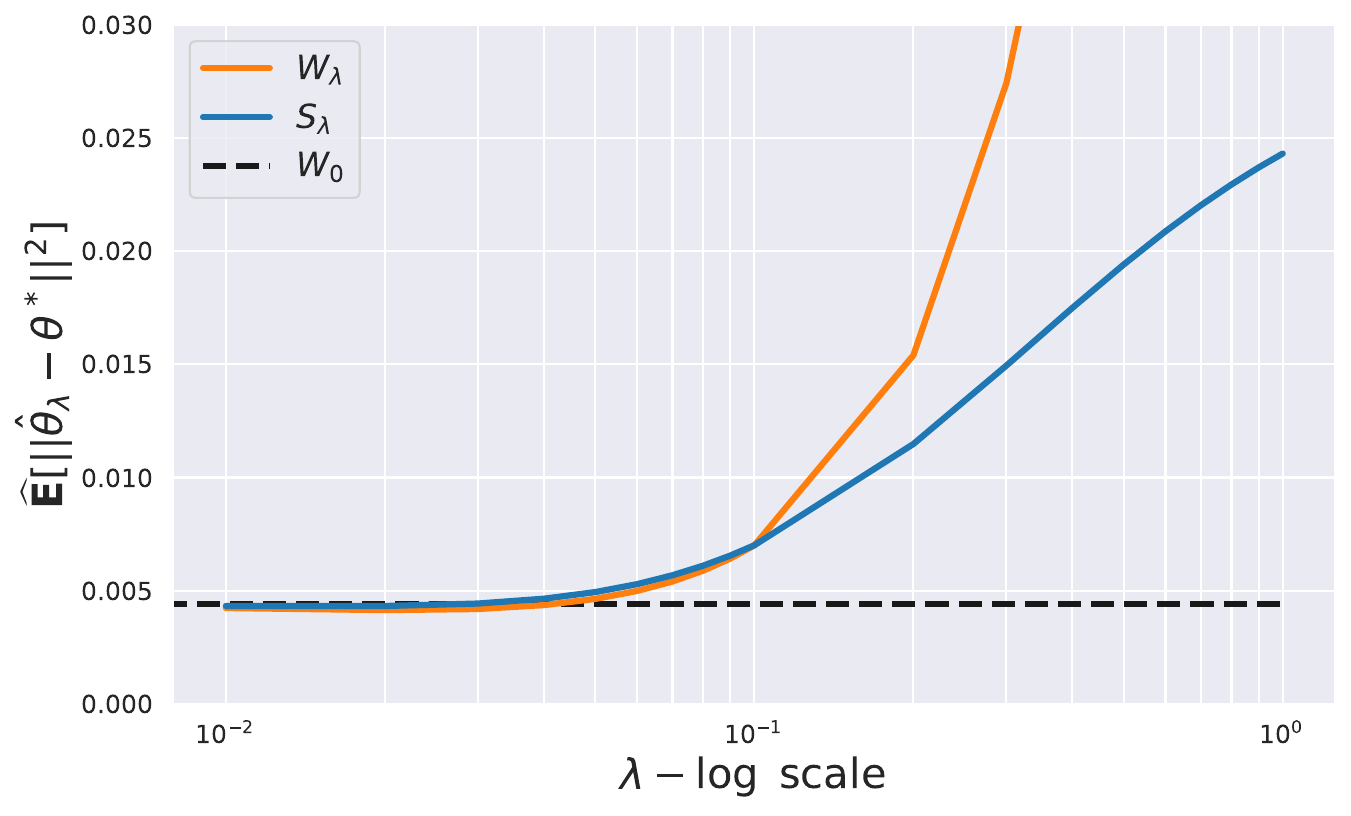}
	\caption{Average error on simulated data of the estimators $\hat{\theta}_\lambda$ (orange) and $\hat{\theta}_\lambda^S$ (blue) as a function of the regularization parameter $\lambda$. Sinkhorn algorithm runs until convergence is reached. That is, when the difference between two iterates $|W^{(\ell)}-W^{(\ell-1)}|$ is less than $10^{-9}$. The black dotted line is the average error of the un-regularized estimator $\hat{\theta}_0$.}
	\label{fig:Result_Simulation_NoLimit_Average} 
\end{figure}

We also point out that the computational complexity of the Sinkhorn algorithm is highly dependent on the regularization parameter $\lambda$ as discussed in \citep{dvurechensky2018computational}, \citep{altschuler2017near}. To illustrate this fact, we display in Figure \ref{fig:Computational_Time_SinkhornAlgorithm} the time (in seconds) required to compute $N=50$ samples of $\hat{\theta}_\lambda$ depending on the value of $\lambda$. As ${\nabla_\theta S_\lambda(\mu_\theta, \nu) =  \nabla_\theta W_\lambda(\mu_\theta, \nu) -\frac{1}{2} \nabla_\theta W_\lambda(\mu_\theta, \mu_\theta)}$, computing the gradient of $S_\lambda(\mu_\theta, \nu)$, requires to solve the dual problem associated to $W_\lambda(\mu_\theta, \mu_\theta)$ in addition to the dual problem associated to $W_\lambda(\mu_\theta, \nu)$. But as noticed in \cite{feydy2019sinkhorndiv}, Sinkhorn algorithm converges much faster for the symmetric term $W_\lambda(a, a)$ than in the general case when computing  $W_\lambda(a,b)$. We have observed in our experiment that the number of iterations before reaching convergence when computing $W_\lambda(a,a)$ does not seem to be a monotonic function with respect to the regularization parameter $\lambda$. This partially accounts for the slightly longer time of computation for $\lambda=0.02$ in comparison to $\lambda=0.01$ on the right side of Figure \ref{fig:Computational_Time_SinkhornAlgorithm}, that is when using $S_\lambda$ as loss function.
\begin{figure}[htbp]
	\includegraphics[width=1\textwidth]{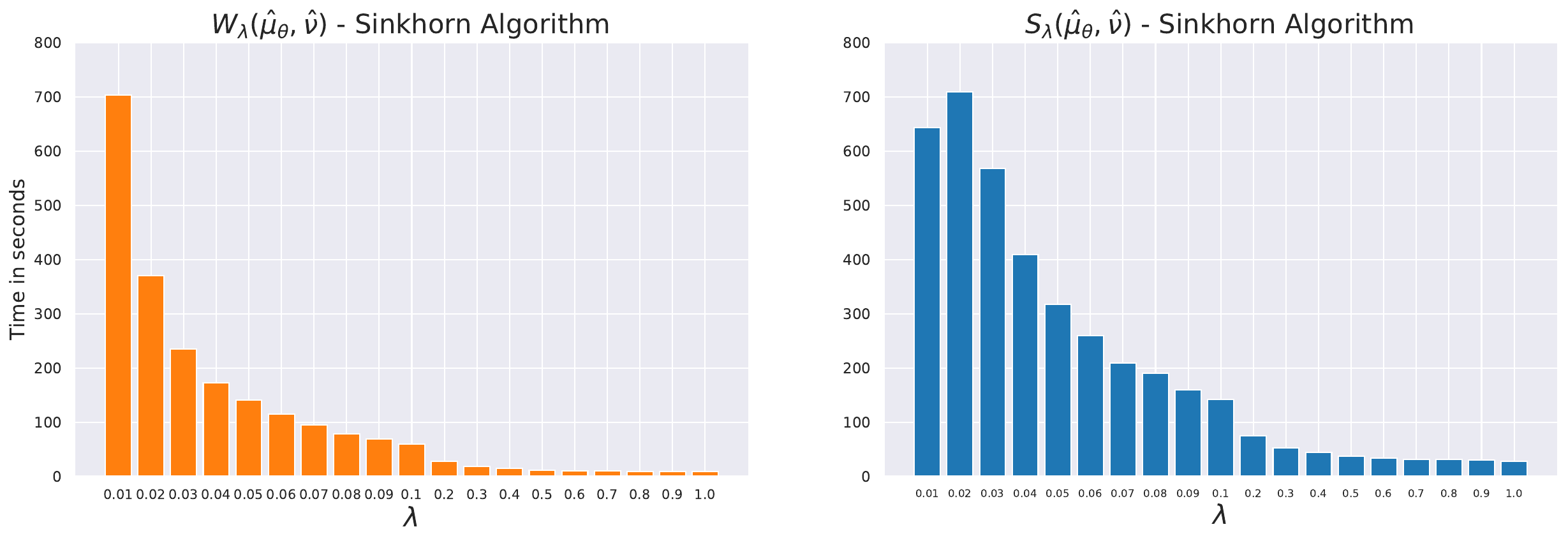}
	\caption{Time required to compute $N=50$ estimators $\hat{\theta}_\lambda$ (left) and $\hat{\theta}_\lambda^S$ (right) when the number of iterations is unlimited.}
	\label{fig:Computational_Time_SinkhornAlgorithm} 
\end{figure}

\subsubsection{Limited number of Sinkhorn iterations}

Figure  \ref{fig:Results_Simulation_NoLimit_Variation} and Figure \ref{fig:Computational_Time_SinkhornAlgorithm} presents results questionning the trade-off between the computational cost of regularized OT and the quality of statistical estimation.
We  have repeated the experiments of Section \ref{subsubsec:simu_nolimit} by now constraining the number iterations of the Sinkhorn algorithm to be equal to $\ell=5$ for any value $\lambda$. In other words, we compute the estimators $\hat{\theta}_\lambda^{(\ell)}$ and $\hat{\theta}_\lambda^{S(\ell)}$  with $\ell=5$, thus fxing the computational budget. Figure~\ref{fig:Simulation_variation_finite_Sinkhorn} and Figure~\ref{fig:Simulation_Mean_Finite_Sinkhorn} both present the performances of those estimators: by limiting the number of Sinkhorn iterations, the accuracy of the estimation deteriorates for small values of $\lambda$. This degradation comes from $\ell=5$ being too small a number of iterations  for the Sinkhorn algorithm to converge for small values of $\lambda$. Yet Figure \ref{fig:Simulation_variation_finite_Sinkhorn} points to some values of $\Lambda$  as offering a nice trade-off between the computational cost of small $\lambda$ and the approximation error induced by larger $\lambda$.  For such values, the performances of the regularized estimators $\hat{\theta}_\lambda^{(\ell)}$ and $\hat{\theta}_\lambda^{S(\ell)}$ are seen to be comparable to those of the un-regularized estimator $\hat{\theta}_0$. However, we must grant a minor divergence between our theoretical findings of Corollary \ref{cor:bound_selected_parameters}. Indeed, our theoretical results suggest that $\ell$ should be set of order $\ell_n = n^{4/d}$, which gives $\ell_n \approx 14$ in this context. We suspect two reasons for this gap. Firstly, some constants in the estimation errors $\hat{\theta}$ and $\hat{\theta}^S$ are unknown. In such a case, allowing a larger algorithm error by choosing $\ell$ smaller would not reduce the performance of estimator considered. A second source of error in these experiments is that we are not exactly under the assumptions of Corollary \ref{cor:bound_selected_parameters}. Indeed, this corollary requires measures to have compact support. While in our experiments, the probability measures are Gaussian variables, which do not have compact supports. 

\begin{figure}[ht]
	\includegraphics[width=1\textwidth]{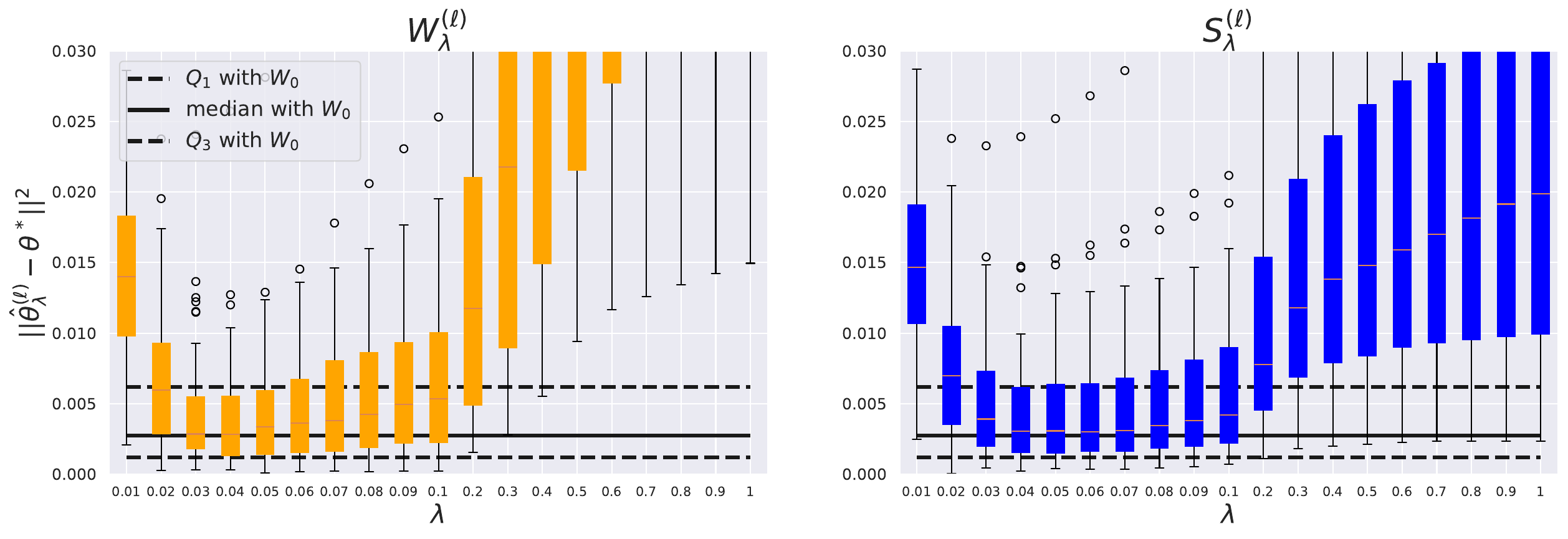}
	\caption{Estimation results on simulated data when the number of iterations of the Sinkhorn algorithm is limited to $\ell = 5$.  We display the error $\|\hat{\theta}_\lambda^{(\ell)} - \theta^*\|^2$ using either the loss $W_\lambda^{(\ell)}$ (left) or $S_\lambda^{(\ell)}$ (right). The black line is the median error of the un-regularized estimator $\hat{\theta}_0$ using the standard optimal transport cost $W_0$, while the dotted lines are the first and third quartiles of  $\|\hat{\theta}_0- \theta^*\|^2$.}
	\label{fig:Simulation_variation_finite_Sinkhorn} 
\end{figure}

\begin{figure}[htbp]
	\centering
	\includegraphics[width=0.7\textwidth]{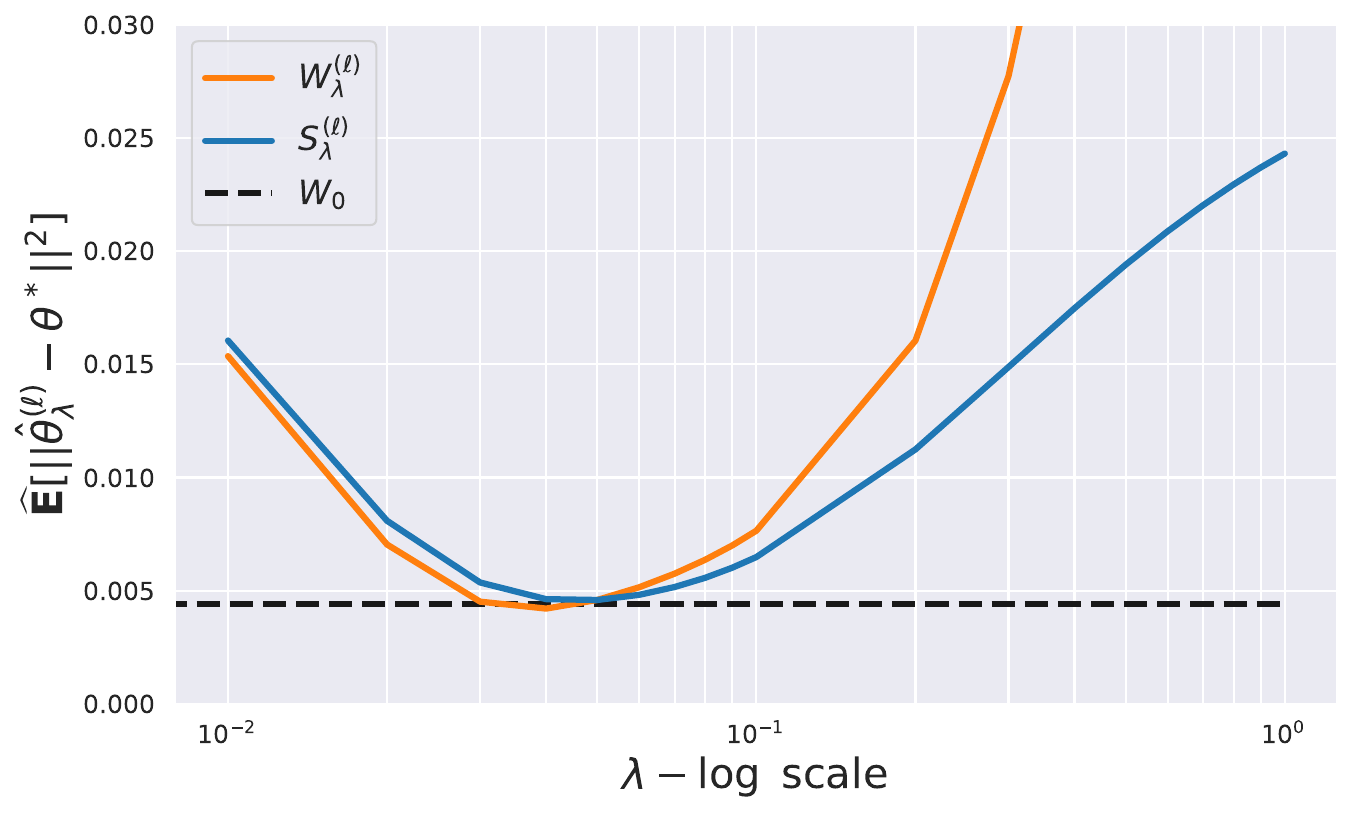}
	\caption{Average error on simulated data of the estimators $\hat{\theta}_\lambda^{(\ell)}$ (orange) and $\hat{\theta}_\lambda^{S(\ell)}$ (blue) as a function of the regularization parameter $\lambda$ with a limitation on the number of iterations. For all values of $\lambda$, Sinkhorn algorithm is limited to $\ell = 5$ iterations. The black dotted line is the average error of the un-regularized estimator $\hat{\theta}_0$.}
	\label{fig:Simulation_Mean_Finite_Sinkhorn} 
\end{figure}

\subsection{Flow cytometry data}\label{subsec:flow_cytometry_data}

We now apply our method of class proportions estimation on flow cytometry data. We demonstrate that the regularization parameter $\lambda$ has also a significant impact on the estimation of class proportions on real data. As an illustrative example, we apply our technique to  flow cytometry data sets from the T-cell panel of the Human Immunology Project Consortium (HIPC) \--- publicly available on ImmuneSpace \citep{Data_HIPC}. We arbitrarily chose two data sets that comes from cytometry measurements performed in the ``Stanford'' laboratory center. One data set, that acts as the source measure, is built from observations measured from a biological sample of a certain patient. Another  second data set,  acting as the target measure, is built from the observations obtained from a biological sample that comes from another patient. After performing  cytometry measurements the observations were manually gated into 10 cell populations: CD4 Effector (CD4~E), CD4 Naive (CD4~N), CD4 Central memory (CD4~CM), CD4 Effector memory (CD4~EM), CD4 Activated (CD4~A), CD8 Effector (CD8~E), CD8 Naive (CD8~N), CD8 Central memory (CD8~CM), CD8 Effector memory (CD8~EM) and CD8 Activated (CD8~A). Hence, for these data sets, a manual clustering is at our disposal to evaluate the performances of our method. In this context $\theta^*$ is defined as the class proportions defined thanks to the manual gating. For each cell, seven biological markers have been measured, and it thus leads to observations $X_i$ and $Y_j$ that belong to $\mathbb{R}^d$ with $d=7$. A two-dimensional projection of these datasets is displayed in Figure \ref{fig:HIPC_full_data} with the resulting manual clustering.

\begin{figure}[htbp]
	\centering
	\includegraphics[width=\textwidth]{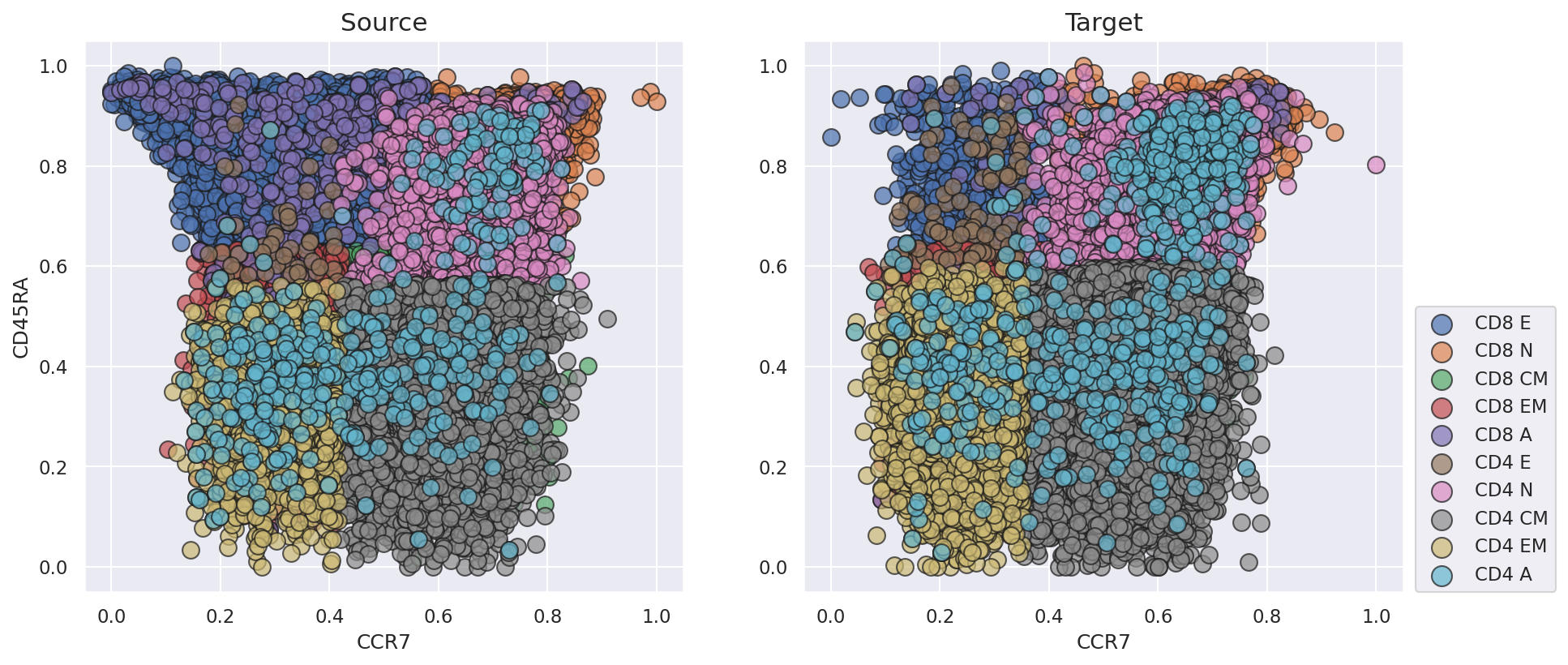}
	\caption{Two-dimensional projection of the flow cytometry datasets used in these numerical experiements with a clustering of the cells into 10 sub-populations. Note that the true dimension of the data is $d=7$, thus limiting the readability of such 2D projections.}
	\label{fig:HIPC_full_data} 
\end{figure}

\begin{figure}[htbp]
	\centering
	\includegraphics[width=\textwidth]{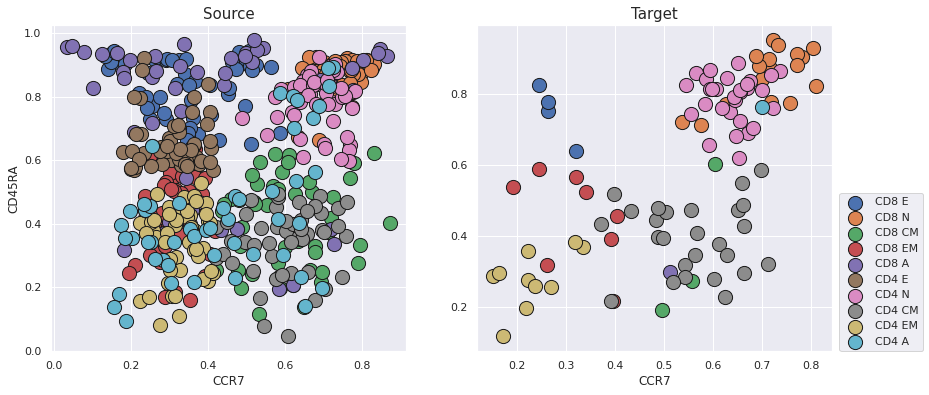}
	\caption{Sub-sample of the source and target flow cytometry data sets. In the source sub-sample, $m_k = m = 50$ elements of each class have been sampled. In the target sub-sample, $n = 100$ observations have been randomly chosen.}
	\label{fig:HIPC_Data_Subsample_with_clustering} 
\end{figure}

\subsubsection{Unlimited Sinkhorn iterations}\label{subsubsec:experiments_nolimit_HIPC}

We reproduce the protocol that we have considered in the case of simulated data. To build an empirical distribution of the source distribution when analyzing flow cytometry data,  we sub-sample $50$ observations from each class of the source data set in order to construct the empirical  measures $\hat{\mu}_1,\ldots, \hat{\mu}_K$, and to define  $\hat{\mu}_\theta = \sum_{k=1}^K \theta_k \hat{\mu}_k$  for $\theta \in \Sigma_K$. Figure \ref{fig:HIPC_Data_Subsample_with_clustering} shows two sub-samples from the source and target distributions displayed in Figure \ref{fig:HIPC_full_data}.

We recall that the clustering of the target dataset is not used in the estimation procedure.

The numerical performances of the estimators computed   with the three loss functions $W_0$, $W_\lambda$ and $S_\lambda$ are displayed on Figure \ref{fig:Results_Variations_HIPC_NoLimitation} and Figure \ref{fig:Results_Mean_HIPC_NoLimitation}. In the context of flow cytometry data, the underlying distributions $\mu$ and $\nu$ are obviously unknown, and the quantity $\min W_0(\mu_\theta, \nu)$ is thus not accessible. Therefore, we define the optimal vector $\theta^*$  of class proportions to be the one in the fully observed (not sub-sampled) target dataset that is displayed in Figure \ref{fig:HIPC_full_data}. Those results on real data are consistent with the results of simulated data. Indeed, one can observe that for small values of $\lambda \in \Lambda$ the accuracy of the estimation obtained with the loss functions $W_\lambda$ and $S_\lambda$  is very similar to the one obtained using  $W_0$.

\begin{figure}[htbp]
	\centering
	\includegraphics[width=1\textwidth]{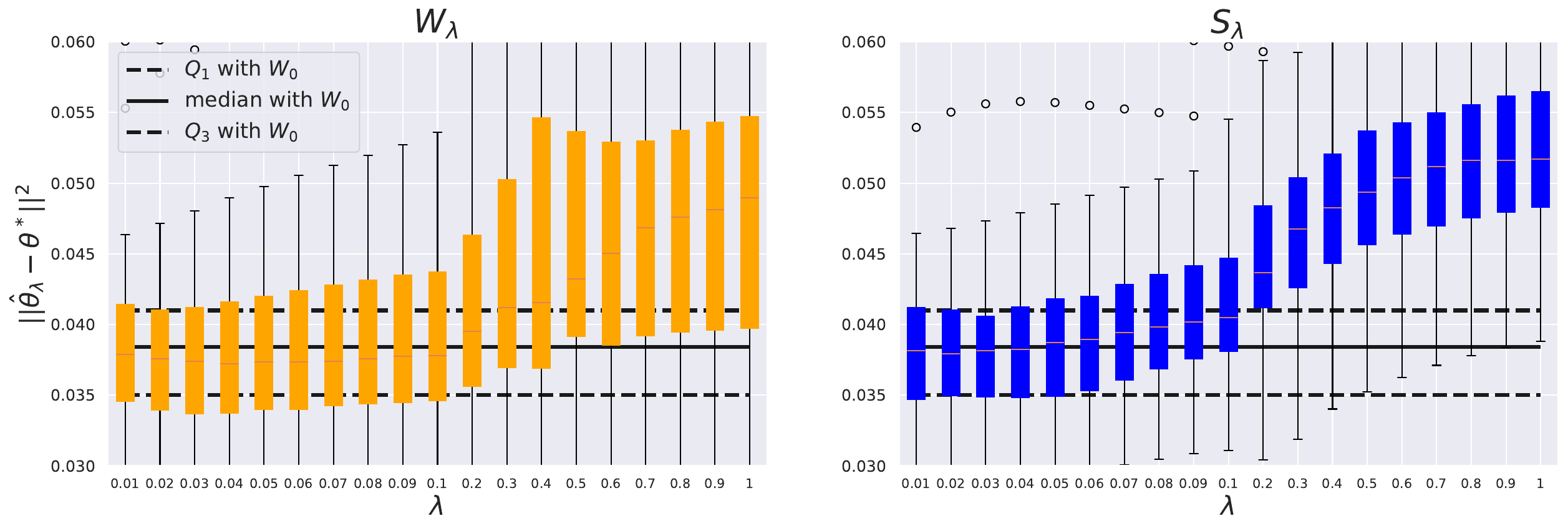}
	\caption{Results on HIPC data without imposing limitations on the number of Sinkhorn iterations. We display the error $\|\hat{\theta}_\lambda - \theta^*\|^2$ using either the loss $W_\lambda$ (left) or $S_\lambda$ (right). The black line is the median error of the un-regularized estimator $\hat{\theta}_0$ using the loss $W_0$, while the dotted lines are the first and third quartiles of  $\|\hat{\theta}_0- \theta^*\|^2$.}
	\label{fig:Results_Variations_HIPC_NoLimitation} 
\end{figure}

\begin{figure}[htbp]
	\centering
	\includegraphics[width=0.7\textwidth]{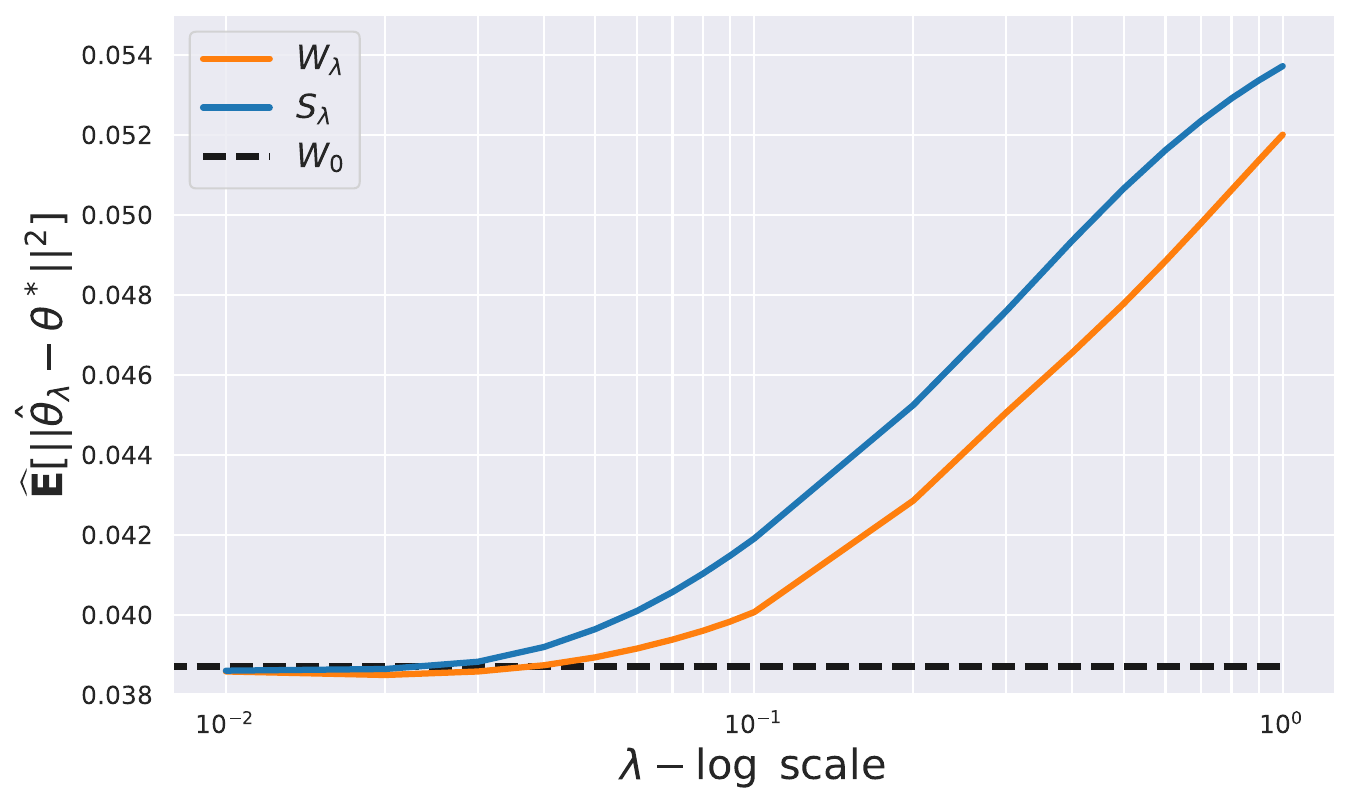}
	\caption{Average error on HIPC data of the estimators $\hat{\theta}_\lambda$ (orange) and $\hat{\theta}_\lambda^S$ (blue) as a function of the regularization parameter $\lambda$. There is no limitation on the number of iterations, Sinkhorn algorithm runs until convergence is reached. The black dotted line is the average error of the un-regularized estimator $\hat{\theta}_0$.}
	\label{fig:Results_Mean_HIPC_NoLimitation} 
\end{figure}

\subsubsection{Limited Sinkhorn iterations}

In order to reduce the computational cost of our estimation method, we limit the number of Sinkhorn iterations to $\ell=10$. Once again, the results displayed in Figure \ref{fig:Results_Variations_HIPC_WithLimitation} and Figure \ref{fig:Results_Mean_HIPC_WithLimitation} show that it is possible to propose a competitive alternative to $W_0$ at a lower computational cost.

\label{subsubsec:experiments_limited_HIPC}
\begin{figure}[htbp]
	\centering
	\includegraphics[width=1\textwidth]{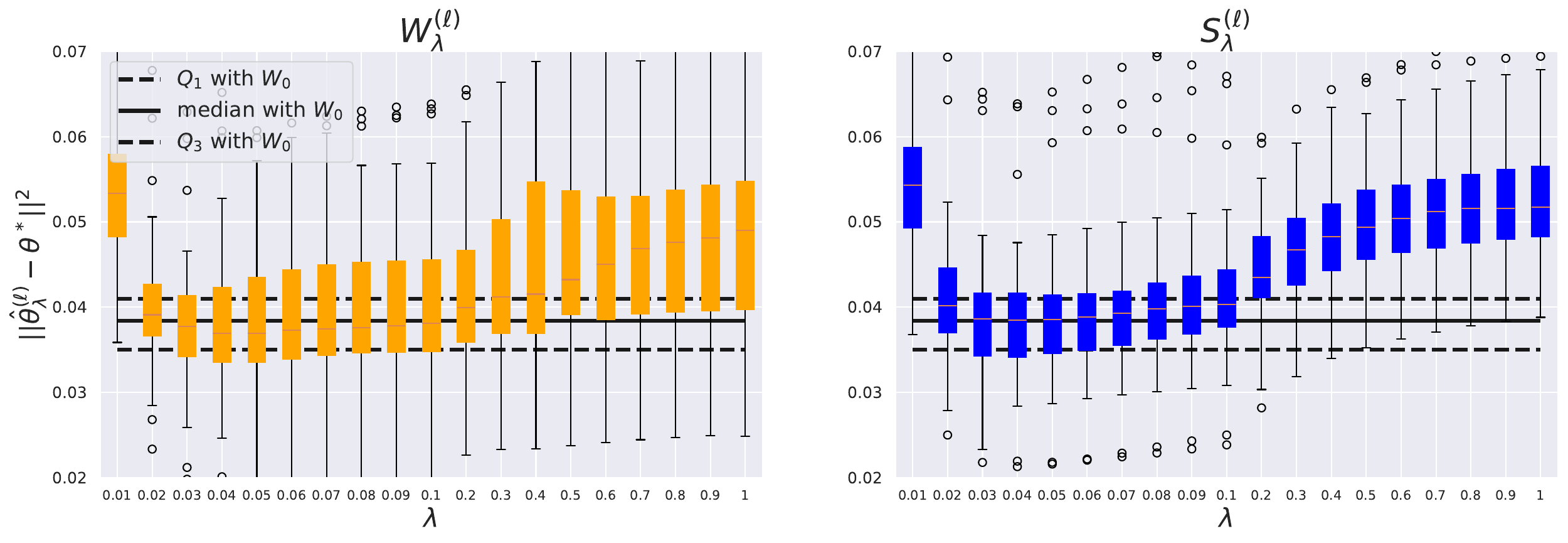}
	\caption{Results on HIPC data when the number of Sinkhorn iterations is limited to $\ell = 10$.  We display boxplots of the error $\|\hat{\theta}_\lambda - \theta^*\|^2$ using either the loss $W_\lambda$ (left) or $S_\lambda$ (right). The black line is the median error of the un-regularized estimator $\hat{\theta}_0$ using the loss $W_0$, while the dotted lines are the first and third quartiles of  $\|\hat{\theta}_0- \theta^*\|^2$.}
	\label{fig:Results_Variations_HIPC_WithLimitation} 
\end{figure}

\begin{figure}[htbp]
	\centering
	\includegraphics[width=0.7\textwidth]{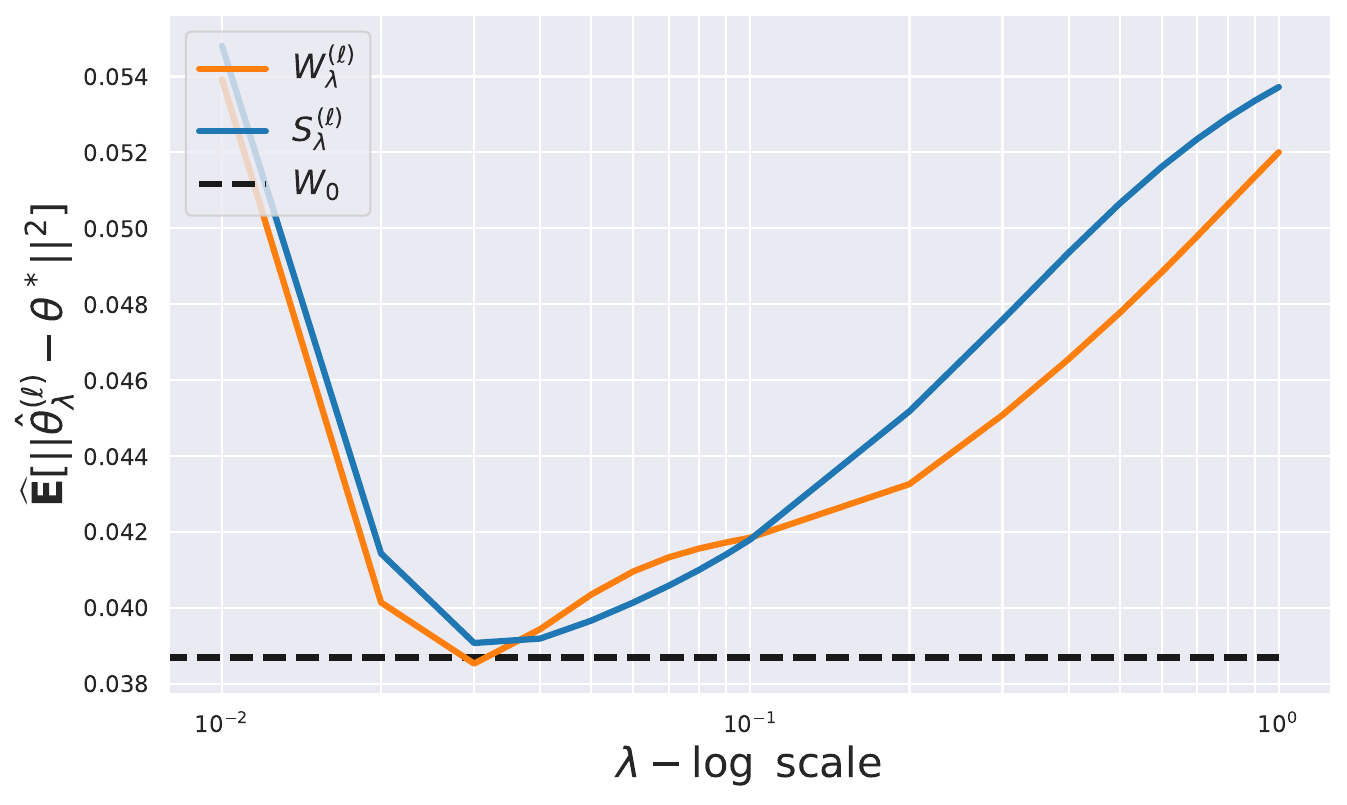}
	\caption{Average error on HIPC data of the estimators $\hat{\theta}_\lambda^{(\ell)}$ (orange) and $\hat{\theta}_\lambda^{S(\ell)}$ (blue) as a function of the regularization parameter $\lambda$ with a limitation on the number of iterations. For all values of $\lambda$, Sinkhorn algorithm is limited to $\ell = 10$ iterations. The black dotted line is the average error of the un-regularized estimator $\hat{\theta}_0$}
	\label{fig:Results_Mean_HIPC_WithLimitation} 
\end{figure}


\section{Conclusion and discussion}\label{sec:diss}

In this work, we have presented a thorough study of Wasserstein estimators based on regularized OT with an emphasis on the influence of the regularization parameter $\lambda$. This study was carried out through the example of a mixture model and weights estimation.  We derived upper bounds on the risk of Wasserstein estimators in terms of an estimation error and an approximation error. We assessed the influence of the chosen OT-based loss (among $W_\lambda, S_\lambda$ and $W_0$) on the decay of the estimation and approximation terms. We have also proposed an optimal decay  of the regularization parameter $\lambda = \lambda_n$ 
based on these upper bounds to achieve decreasing rate of $n^{-2/d}$ for the expected excess risk. Secondly, motivated by the sensitive question of the computational cost of regularized OT, we have studied the algorithmic error induced by limiting the number of iterations in the Sinkhorn algorithm.  This study resulted in a principled strategy to set the number of Sinkhorn iterations $\ell = \ell_n$ in order to maintain the algorithm error below the statistical error. We have also demonstrated with numerical experiments that an appropriate choice of $\lambda$ and a limited number of Sinkhorn iterations $\ell$ allow to equal the performances of the un-regularized estimator at a reduced computational cost.\\

We now present a few perspectives for future research. For an estimator $\hat{\theta}_n$ of $\theta^*$, we have derived a control on the excess risk, that is $W_0(\mu_{\hat{\theta}_n}, \nu) - W_0(\mu_{\theta^*}, \nu)$. However, a direct control of the weights estimator, i.e. of the quantity $\|\hat{\theta}_n - \theta^*\|$, would be even more valuable. For instance, a control of $\|\hat{\theta}_n - \theta^*\|$  may allow to develop statistical tests on the estimator $\hat{\theta}_n$. An other possible extension of this work is suggested by our numerical experiments. Figure \ref{fig:Results_Simulation_NoLimit_Variation} and Figure \ref{fig:Simulation_variation_finite_Sinkhorn} indicate that limiting the number of iterations for Sinkhorn algorithm could improve statistical performance. These better results with limited iterations are not accounted by the present work. Hence, further investigation on this observation is an other direction for research.

\bibliographystyle{abbrv}
\bibliography{regularized_wasserstein_estimators}

\newpage

\appendix

\section{Proofs of the main results} \label{sec:convergence}

The goal of this section is to derive the rate of convergence of regularized Wasserstein estimators. That is, when considering $W_\lambda$ as a loss function with $\lambda \geq 0$. To be more specific, we investigate in this section the estimators
\begin{equation}\label{eq:theta_hat_lambda}
	\hat{\theta}_{\lambda} := \argmin_{\theta \in \Sigma_K} W_{\lambda}(\hat{\mu}_{\theta},\hat{\nu})  \quad \text{for}~ \lambda \geq 0,
\end{equation}
or
\begin{equation}\label{eq:theta_hat_lambda_ell}
	\hat{\theta}_\lambda^{(\ell)} \in \argmin_{\theta \in \Sigma_K}W_\lambda^{(\ell)}(\hat{\mu}_\theta, \hat{\nu}) \quad \text{for}~ \lambda > 0,
\end{equation}
when taking into account the algorithm error.

\subsection{Decomposition of the excess risk}\label{subsec:estimation_approximation_reg_wass}

We first detail how the excess risk of $\hat{\theta}_{\lambda}^{(\ell)}$ defined in equation \eqref{eq:theta_hat_lambda_ell} can be upper bounded by the sum of three terms. They represent a tradeoff between an approximation error, estimation error and an algorithm error. 

\begin{lem}\label{lem:approx_estim_algo}
	Set $\lambda \geq 0$. the excess risk of  the estimator  $\hat{\theta}_\lambda^{(\ell)}$ defined by \eqref{eq:theta_hat_lambda_ell} is bounded as follows:
	\begin{align}\label{eq:supp_decomposition}
		0 \leq 	W_0(\mu_{\hat{\theta}_\lambda^{(\ell)}}, \nu) -W_0(\mu_{\theta^*}, \nu) \leq & ~ 2\sup_{\theta \in \Sigma_K}|W_0(\mu_{\theta}, \nu) - W_\lambda(\mu_{\theta}, \nu)|\\
		& + 2 \sup_{\theta \in \Sigma_K}|W_\lambda(\mu_{\theta}, \nu) - W_\lambda(\hat{\mu}_{\theta}, \hat{\nu})| \nonumber \\ 
		& + 2\sup_{\theta \in \Sigma_K}|W_\lambda(\hat{\mu}_{\theta}, \hat{\nu}) - W_\lambda^{(\ell)}(\hat{\mu}_{\theta}, \hat{\nu})| \nonumber.
	\end{align}
\end{lem}

\begin{proof}
	We begin with the decomposition
	\begin{align}\label{eq:regwass_decomposition_fullsampling}
		W_0(\mu_{\hat{\theta}_\lambda^{(\ell)}}, \nu) -W_0(\mu_{\theta^*}, \nu)~& = ~W_0(\mu_{\hat{\theta}_\lambda^{(\ell)}}, \nu) -  W_\lambda(\mu_{\hat{\theta}_\lambda^{(\ell)}}, \nu) +  W_\lambda(\mu_{\hat{\theta}_\lambda^{(\ell)}}, \nu) - W_\lambda(\hat{\mu}_{\hat{\theta}_\lambda^{(\ell)}}, \hat{\nu}) \nonumber \\ & + W_\lambda(\hat{\mu}_{\hat{\theta}_\lambda^{(\ell)}}, \hat{\nu}) - W_\lambda^{(\ell)}(\hat{\mu}_{\hat{\theta}_\lambda^{(\ell)}}, \hat{\nu}) + W_\lambda^{(\ell)}(\hat{\mu}_{\hat{\theta}_\lambda^{(\ell)}}, \hat{\nu}) - W_\lambda^{(\ell)}(\hat{\mu}_{\theta^{*}}, \hat{\nu}) \nonumber \\ 
		& +  W_\lambda^{(\ell)}(\hat{\mu}_{\theta^*}, \hat{\nu}) -  W_\lambda(\hat{\mu}_{\theta^*}, \hat{\nu}) + W_\lambda(\hat{\mu}_{\theta^*}, \hat{\nu}) - W_\lambda(\mu_{\theta^*}, \nu)   \nonumber \\
		& +W_\lambda(\mu_{\theta^*}, \nu) -  W_0(\mu_{\theta^*}, \nu).
	\end{align}
	Let us focus on the right hand side of this last equation \eqref{eq:regwass_decomposition_fullsampling}.
	The first and the last differences are controlled by the approximation error $\sup_{\theta \in \Sigma_K}|W_\lambda(\mu_{\theta}, \nu) - W_0(\mu_{\theta}, \nu)|$. The second and sixth differences can be upper bounded by the estimation error ${\sup_{\theta \in \Sigma_K}|W_\lambda(\mu_{\theta}, \nu) - W_\lambda(\hat{\mu}_{\theta}, \hat{\nu})|}$. The third and fifth differences are upper bounded by the algorithm error ${\sup_{\theta \in \Sigma_K}|W_\lambda^{(\ell)}(\hat{\mu}_{\theta}, \hat{\nu}) - W_\lambda(\hat{\mu}_{\theta}, \hat{\nu})|}$.\\
	
	It only remains to control $ W_\lambda^{(\ell)}(\hat{\mu}_{\hat{\theta}_\lambda^{(\ell)}}, \hat{\nu}) - W_\lambda^{(\ell)}(\hat{\mu}_{\theta^{*}}, \hat{\nu})$. However, $\hat{\theta}_\lambda^{(\ell)}$ minimizes the function $\theta \mapsto W_\lambda^{(\ell)}(\hat{\mu}_{\theta}, \hat{\nu})$. Hence $W_\lambda^{(\ell)}(\hat{\mu}_{\hat{\theta}_\lambda^{(\ell)}}, \hat{\nu}) - W_\lambda^{(\ell)}(\hat{\mu}_{\theta^{*}}, \hat{\nu}) \leq 0$.\\
	
	Going back to equation \eqref{eq:regwass_decomposition_fullsampling} and substituting every difference of the right hand side by its appropriate bound we derive
	\begin{align*}
		W_0(\mu_{\hat{\theta}_\lambda^{(\ell)}}, \nu) -W_0(\mu_{\theta^*}, \nu) \leq & ~ 2\sup_{\theta \in \Sigma_K}|W_0(\mu_{\theta}, \nu) - W_\lambda(\mu_{\theta}, \nu)|\\
		& + 2 \sup_{\theta \in \Sigma_K}|W_\lambda(\mu_{\theta}, \nu) - W_\lambda(\hat{\mu}_{\theta}, \hat{\nu})| \\ 
		& + 2\sup_{\theta \in \Sigma_K}|W_\lambda(\hat{\mu}_{\theta}, \hat{\nu}) - W_\lambda^{(\ell)}(\hat{\mu}_{\theta}, \hat{\nu})|,
	\end{align*}
	
	which is the result claimed in Lemma \ref{lem:approx_estim_algo}.
	
\end{proof}

\subsection{Control of the estimation error}\label{subsec:control_estimation}
To control the estimation error,  we split it into two terms: 
\begin{align}\label{eq:Wlambda_decomposition_estimation}
	\sup_{\theta \in \Sigma_K}|W_\lambda(\mu_{\theta}, \nu) - W_\lambda(\hat{\mu}_{\theta}, \hat{\nu})|
	& \leq \sup_{\theta \in \Sigma_K}|W_\lambda(\mu_{\theta}, \hat{\nu}) - W_\lambda(\mu_{\theta}, \nu)| \\ \nonumber
	& \quad \quad + \sup_{\theta \in \Sigma_K}|W_\lambda(\hat{\mu}_{\theta}, \hat{\nu}) - W_\lambda(\mu_{\theta}, \hat{\nu})| .
\end{align}

Hence, controlling the expected estimation error boils down to controlling the (closely related) quantities
\begin{equation*}
	\mathbb{E} \left[ \sup_{\theta \in \Sigma_K} \Big| W_{\lambda}(\mu_{\theta},\nu) - W_{\lambda}(\mu_{\theta},\hat{\nu}) \Big| \right] ,
	\quad \text{and} \quad
	\mathbb{E} \left[ \sup_{\theta \in \Sigma_K} \Big| W_{\lambda}(\hat{\mu}_{\theta},\hat{\nu}) - W_{\lambda}(\mu_{\theta},\hat{\nu}) \Big| \right]. \label{eq:control_emp_processes}
\end{equation*}

The upper bounds we will derive are based on the supremum of an empirical process that has been previously studied by Chizat et al. in \cite{chizat2020sinkhorn}. More precisely, we rely on the following Lemma.

\begin{lem}\label{lem:bound_empirical_process_chizat}
	\cite[Lemma 4 and proof of Theorem 2] {chizat2020sinkhorn} 
	Assume that all probability measures have compact supports included in $B(0,R)$, and that $n$ samples for $\nu$ are available. Then,
	\begin{equation}\label{eq:bound_chizat_process_hard}
		\mathbb{E}\left[\sup_{\varphi  \in  \mathcal{F}_R} \left| \int \varphi d(\nu - \hat{\nu}) \right|\right]  \lesssim 
		\left\{ \begin{array}{lll}
			R^2 n^{-1/2} & \text{if} & d<4,\\
			R^2 n^{-1/2}\log(n)& \text{if} & d=4,\\
			R^2 n^{-2/d} & \text{if} & d>4,
		\end{array}
		\right.
	\end{equation}
	where $\lesssim$ hides a constant that depends only on $d$, and $\mathcal{F}_R$ denotes the class of concave and $R$-Lipschitz functions on $B(0,R)$. In the same paper, the authors established that 
	\begin{equation}\label{eq:bound_chizat_process_easier}
		\mathbb{E}\left[ \left| \int_{\mathcal{Y}} \|y\|^2 d(\nu-\hat{\nu})(y) \right|\right] \leq 4 R^2 n^{-1/2}.
	\end{equation}
	
\end{lem}

As we make a repeating use of the upper bound in equation \eqref{eq:bound_chizat_process_hard} we denote it by $\mathcal{E}(d,n)$. From now on 
\begin{equation}\label{eq:constant_chizat}
	\mathcal{E}(d,n) := \left\{ \begin{array}{lll}
		R^2 n^{-1/2} & \text{if} & d<4,\\
		R^2 n^{-1/2}\log(n)& \text{if} & d=4,\\
		R^2 n^{-2/d} & \text{if} & d>4.
	\end{array}
	\right.
\end{equation}

The next proposition gives an upper bound of the estimation error that is independent of the regularization parameter $\lambda$.

\begin{prop}\label{prop:estimation_regfree}
	Let $\lambda \geq 0$.
	Suppose that every probability measure considered has compact support included in $B(0,R)$.  
	\begin{enumerate}[(i)]
		\item If $n$ samples from $\nu$ are available, then it holds that
		
		\begin{equation}\label{eq:bound_estimation_target_sampling_noreg}
			\mathbb{E}\left[\sup_{\theta \in \Sigma_K}|W_{\lambda}(\mu_{\theta}, \nu) - W_{\lambda}(\mu_\theta, \hat{\nu})| \right]
			\lesssim 
			\mathcal{E}(n,d).
		\end{equation}
		
		\item If for each distribution $\mu_k$, $m_k$ samples are available, then
		\begin{equation}\label{eq:bound_estimation_full_sampling_noreg}
			\mathbb{E}\left[\sup_{\theta \in \Sigma_K}|W_{\lambda}(\mu_{\theta}, \hat{\nu}) - W_{\lambda}(\hat{\mu}_\theta, \hat{\nu})|\right]  \lesssim 
			K\mathcal{E}(\underbar{m},d),
		\end{equation}
		where $\underbar{m} = \min(m_1, \ldots, m_K)$.
		
	\end{enumerate}
\end{prop}

\begin{proof}
	The key point is to exploit the alternative dual formulation of regularized OT that has been introduced in Section 2.2 of the article. Using~relation (2.11) of the article, we remark that for any $\theta \in \Sigma_K$,
	\begin{align}\label{eq:inproof_link_regularized_distances}
		W_\lambda(\mu_{\theta}, \nu) - W_\lambda(\mu_{\theta}, \hat{\nu})& = \int_{\mathcal{Y}} \|y\|^2 d\nu(y) - \int_{\mathcal{Y}} \|y\|^2 d\hat{\nu}(y) + W_\lambda^s(\mu_{\theta}, \nu) - W_\lambda^s(\mu_{\theta}, \hat{\nu}) \nonumber \\
		& = \int_{\mathcal{Y}} \|y\|^2 d(\nu - \hat{\nu})(y)  + W_\lambda^s(\mu_{\theta}, \nu) - W_\lambda^s(\mu_{\theta}, \hat{\nu}).
	\end{align}
	
	Now, let us denote by $\varphi$ and $\hat{\varphi}$ two optimal dual potentials respectively associated to $W_\lambda^s(\mu_{\theta}, \nu)$ and $W_\lambda^s(\mu_{\theta}, \hat{\nu})$ when exploiting the semi-dual formulation 2.14 of the article. We can thus write
	$$
	\begin{aligned}
		W_\lambda^s(\mu_{\theta}, \nu) - W_\lambda^s(\mu_{\theta}, \hat{\nu})& =  \int \varphi(x) d\mu_{\theta}(x) + \int \varphi^s(y) d\nu(y) \\
		& - \left(\int \hat{\varphi}(x) d\mu_{\theta}(x) + \int \hat{\varphi}^s(y) d\hat{\nu}(y) \right)\\
		& = \int \varphi^s(y) d\nu(y) - \int \varphi^s(y) d\hat{\nu}(y) \\
		& + \int \varphi(x) d\mu_{\theta}(x) + \int \varphi^s(y) d\hat{\nu}(y) \\
		& - \left(\int \hat{\varphi}(x) d\mu_{\theta}(x) + \int \hat{\varphi}^s(y) d\hat{\nu}(y)\right)\\
		& \leq \int \varphi^s(y) d(\nu-\hat{\nu})(y),
	\end{aligned}
	$$
	where the last inequality derives from the optimality of $\hat{\varphi}$ for the semi-dual formulation of $W_\lambda^s(\mu_{\theta}, \hat{\nu})$. A similar reasoning yields 
	$$ 
	W_\lambda^s(\mu_{\theta}, \hat{\nu}) - W_\lambda^s(\mu_{\theta}, \nu) \leq \int \hat{\varphi}^s(y) d(\hat{\nu}-\nu)(y).$$
	As $\varphi^s$ and $\hat{\varphi}^s$ are both $s$-transform, Proposition 2.2 in the article ensures that both $\varphi^s$ and $\hat{\varphi}$ are concave and $R$-Lipschitz on $B(0,R)$. We deduce the upper bound
	\begin{equation}\label{eq:inproof_control_estimation_RLip}
		|W_\lambda^s(\mu_{\theta}, \nu) - W_\lambda^s(\mu_{\theta}, \hat{\nu})| \leq \sup_{\varphi \in \mathcal{F}_R} \left|\int_\mathcal{Y} \varphi d(\nu-\hat{\nu})\right|,
	\end{equation}
	where $\mathcal{F}_R$ denotes the class of concave and $R$-Lipschitz functions on $B(0,R)$.
	Taking the expectation of inequality \eqref{eq:inproof_control_estimation_RLip}, point (i) of Proposition~\ref{prop:estimation_regfree} follows from Lemma~\ref{lem:bound_empirical_process_chizat}.\\
	
	Point (ii) of Proposition~\ref{prop:estimation_regfree} can be obtained with a similar reasoning. That is, we exploit the relation between $W_\lambda$ and $W_\lambda^s$. Moreover, Proposition 3.1 ensures that the optimal potentials associated to $W_\lambda^s$ can be chosen $R$-Lipschitz and concave. Performing the same computations as in point (i), and decomposing $\mu_\theta$ into a convex combination of $\mu_1, \ldots, \mu_K$, we derive   
	
	\begin{align*}
		|W_{\lambda}(\mu_\theta, \hat{\nu}) - W_{\lambda}(\hat{\mu}_\theta, \hat{\nu})| & \leq \sup_{\varphi \in \mathcal{F}_R}\left| \int_\mathcal{X} \varphi d(\mu_\theta - \hat{\mu}_\theta) \right| + \left| \int_{\mathcal{X}} \|x\|^2 d(\mu_\theta- \hat{\mu}_\theta) \right| \\
		& \leq \sum_{k=1}^K \theta_k \left( \sup_{\varphi \in \mathcal{F}_R}\left| \int_{\mathcal{X}} \varphi d(\mu_k - \hat{\mu}_k) \right| + \left| \int_{\mathcal{X}} \|x\|^2 d(\mu_k- \hat{\mu}_k) \right| \right)\\
		& \leq \sum_{k=1}^K \left( \sup_{\varphi \in \mathcal{F}_R}\left| \int_{\mathcal{X}} \varphi d(\mu_k - \hat{\mu}_k) \right| + \left| \int_{\mathcal{X}} \|x\|^2 d(\mu_k- \hat{\mu}_k) \right| \right),		
	\end{align*}
using the crude upper bound $\theta_k \leq 1$. This last upper bound being independent of $\theta$, we can write 
\begin{align*}
	\sup_{\theta \in \Sigma_K}|W_{\lambda}(\mu_\theta, \hat{\nu}) - W_{\lambda}(\hat{\mu}_\theta, \hat{\nu})|  \leq \sum_{k=1}^K \left( \sup_{\varphi \in \mathcal{F}_R}\left| \int_{\mathcal{X}} \varphi d(\mu_k - \hat{\mu}_k) \right| + \left| \int_{\mathcal{X}} \|x\|^2 d(\mu_k- \hat{\mu}_k) \right| \right).
\end{align*}
Taking the expectation of last inequality yields
\begin{align*}
	\mathbb{E}\left[\sup_{\theta \in \Sigma_K}|W_{\lambda}(\mu_\theta, \hat{\nu}) - W_{\lambda}(\hat{\mu}_\theta, \hat{\nu})| \right] \leq \sum_{k=1}^K \mathbb{E}\left[\left( \sup_{\varphi \in \mathcal{F}_R}\left| \int_{\mathcal{X}} \varphi d(\mu_k - \hat{\mu}_k) \right| + \left| \int_{\mathcal{X}} \|x\|^2 d(\mu_k- \hat{\mu}_k) \right| \right) \right].
\end{align*}
Next, applying Lemma \ref{lem:bound_empirical_process_chizat} to the probability distribution $\mu_k$, we obtain
$$
\mathbb{E}\left[ \sup_{\varphi \in \mathcal{F}_R}\left| \int_{\mathcal{X}} \varphi d(\mu_k - \hat{\mu}_k) \right| \right] +\mathbb{E}\left[ \left| \int_{\mathcal{X}} \|x\|^2 d(\mu_k-\hat{\mu}_k) \right|\right] \lesssim 
\mathcal{E}(d, m_k),
$$
where $\mathcal{E}(d,m_k)$ is defined in equation \eqref{eq:constant_chizat}. It follows that 
$$
\mathbb{E}\left[\sup_{\theta \in \Sigma_K}|W_{\lambda}(\mu_\theta, \hat{\nu}) - W_{\lambda}(\hat{\mu}_\theta, \hat{\nu})| \right] \lesssim \sum_{k=1}^K \mathcal{E}(d, m_k).
$$
Finally, introducing the notation $\underbar{m} = \min(m_1, \ldots, m_K)$, we derive 
$$
\mathbb{E}\left[\sup_{\theta \in \Sigma_K}|W_{\lambda}(\mu_{\theta}, \hat{\nu}) - W_{\lambda}(\hat{\mu}_\theta, \hat{\nu})|\right]  \lesssim 
K \mathcal{E}(d,\underbar{m}),
$$
which gives the last inequality of Proposition~\ref{prop:estimation_regfree}.
\end{proof}

We now gather the pieces to prove Proposition \ref{prop:reg_free_estimation}. Taking the expectation of equation \eqref{eq:control_emp_processes} and using Proposition \ref{prop:estimation_regfree} gives 
$$ 	\mathbb{E} \left[ \sup_{\theta \in \Sigma_K} \Big| W_{\lambda}(\mu_{\theta},\nu) - W_{\lambda}(\mu_{\theta},\hat{\nu}) \Big| \right] \lesssim \mathcal{E}(n,d) + K \mathcal{E}(\underbar{m},d).$$
Under the assumption of Proposition \ref{prop:reg_free_estimation} from the main article that we have access to $n$ samples from each probability distribution $\mu_1, \ldots, \mu_K$, as well as for $\nu$; last inequality reads
$$\mathbb{E} \left[ \sup_{\theta \in \Sigma_K} \Big| W_{\lambda}(\mu_{\theta},\nu) - W_{\lambda}(\mu_{\theta},\hat{\nu}) \Big| \right] \lesssim K \mathcal{E}(n,d), $$
which is the result announced in Proposition \ref{prop:reg_free_estimation} of the main article.

\subsection{Collecting existing results to prove the main Theorem}\label{subsec:proof_main}

\paragraph{Approximation error}

Thanks to \cite[Theorem 1]{genevay2019sample} adapted to the squared Euclidean cost $c(x,y) = \|x-y\|^2$ (which is $R$-Lipschitz on $B(0,R)$ w.r.t. both its variables), we can control the impact of entropic regularization on the approximation of the value of the un-regularized OT cost.

\begin{prop} \cite[Theorem 1]{genevay2019sample}\label{prop:control_Wlambda}
	Assume that $\mathcal{X}, \mathcal{Y}$ are compact subsets of  $B(0,R)$. Then, it holds that
	\begin{equation}
		0 \leq W_{\lambda}(\mu,\nu)  - W_{0}(\mu,\nu) \leq 2 d \lambda \log \left( \frac{8 \exp(2) R^2}{\sqrt{d} \lambda} \right)  , \label{eq:approxW0}
	\end{equation}
	and consequently
	\begin{equation} \label{eq:Blambda}
		\sup_{\theta \in \Sigma_K} | W_{0}(\mu_{\theta}, \nu) - W_{\lambda}(\mu_{\theta}, \nu) | \leqslant B(\lambda) \quad \text{where} \quad B(\lambda) = 2 d \lambda \log \left( \frac{8 \exp(2) R^2}{\sqrt{d} \lambda} \right)
	\end{equation}
	Notice that $B(\lambda)$ goes to zero when $\lambda \to 0$ at the speed
	$$
	B(\lambda) \sim_{\lambda \to 0} 2 d \lambda \log \left( 1/\lambda \right).
	$$
\end{prop}

\paragraph{Algorithm error}

For $\mu = \sum_{i=1}^n a_i \delta_{x_i}$, and $\nu = \sum_{j=1}^m b_j \delta_{y_j}$ two discrete distributions, we denote by
\begin{equation}\label{eq:sinkhorn_output}
	W_\lambda^{(\ell)}(\mu, \nu) = \sum_{i=1}^n a_i\varphi_i^{(\ell)} + \sum_{j=1}^m b_j \psi_j^{(\ell)},
\end{equation}
the approximation of the regularized OT cost $W_\lambda(\mu,\nu)$ that is returned by the Sinkhorn approximation after $\ell$ iterations. The variables $\varphi^{(\ell)}$ and $\psi^{(\ell)}$ denote the dual variables after $\ell$ iterations of the Sinkhorn algorithm. We thus consider the estimator used in our numerical experiments that is defined as

\begin{equation}\label{eq:sinkhorn_estimate}
	\hat{\theta}^{(\ell)}_\lambda = \argmin_{\theta \in \Sigma_K}W_\lambda^{(\ell)}(\hat{\mu}_\theta, \hat{\nu}).
\end{equation}

The computational complexity of Sinkhorn algorithm has been studied in \cite{chizat2020sinkhorn} and we remind the error after $\ell$ iterations of the Sinkhorn algorithm with respect to the regularized OT cost.

\begin{prop}\cite[Proposition 2]{chizat2020sinkhorn}\label{prop:sinkhorn_algorithm_error}. Assume that $\lambda > 0$. For $\mu = \sum_{i=1}^n a_i\delta_{x_i}$ and $\nu = \sum_{j=1}^m b_j \delta_{y_j}$ two discrete distributions and a ground cost set to $c(x,y) = \|x-y\|^2$ on $\mathbb{R}^d$. The approximation of the regularized OT cost after $\ell$ iterations of the Sinkhorn algorithm satisfies:
	\begin{equation}\label{eq:sinkhorn_algorithm_error}
		|W_\lambda^{(\ell)}(\mu, \nu) - W_\lambda(\mu, \nu)| \leq \frac{\|c\|^2_\infty}{\lambda \ell}
	\end{equation}
	where $\|c\|_{\infty} = \max_{(i,j)}\|x_i - y_j\|^2$.
\end{prop}

\begin{rem}\label{bound_eulcidean_groundcost}
	If the discrete distributions $\mu$ and $\nu$ both have their supports subsets of $B(0,R)$, it implies that $\max_{(i,j)}\|x_i - y_j\|^2\leq 4R^2$. And then, the quantity $\|c\|_{\infty}^2$ can be upper bounded by $\|c\|_{\infty}^2 \leq 16R^4$.
\end{rem}

\paragraph{Proof of the main result}

We now conclude the proof of our main result; that is Theorem 3.1 of the main paper. From Lemma \ref{lem:approx_estim_algo} we have the inequality
\begin{align*}
	W_0(\mu_{\hat{\theta}_\lambda^{(\ell)}}, \nu) -W_0(\mu_{\theta^*}, \nu) \leq & ~ 2\sup_{\theta \in \Sigma_K}|W_0(\mu_{\theta}, \nu) - W_\lambda(\mu_{\theta}, \nu)|\\
	& + 2 \sup_{\theta \in \Sigma_K}|W_\lambda(\mu_{\theta}, \nu) - W_\lambda(\hat{\mu}_{\theta}, \hat{\nu})| \\ 
	& + 2\sup_{\theta \in \Sigma_K}|W_\lambda(\hat{\mu}_{\theta}, \hat{\nu}) - W_\lambda^{(\ell)}(\hat{\mu}_{\theta}, \hat{\nu})|.
\end{align*}
Taking the expectation on both sides of this last inequality yields

\begin{align*}
	\mathbb{E}\left[W_0(\mu_{\hat{\theta}_\lambda^{(\ell)}}, \nu) -W_0(\mu_{\theta^*}, \nu)\right] \leq & ~ 2\sup_{\theta \in \Sigma_K}|W_0(\mu_{\theta}, \nu) - W_\lambda(\mu_{\theta}, \nu)|\\
	& + 2 \mathbb{E}\left[ \sup_{\theta \in \Sigma_K}|W_\lambda(\mu_{\theta}, \nu) - W_\lambda(\hat{\mu}_{\theta}, \hat{\nu})| \right] \\ 
	& + 2\sup_{\theta \in \Sigma_K}|W_\lambda(\hat{\mu}_{\theta}, \hat{\nu}) - W_\lambda^{(\ell)}(\hat{\mu}_{\theta}, \hat{\nu})|.
\end{align*}

Exploiting Proposition \ref{prop:control_Wlambda}, we have that $$\sup_{\theta \in \Sigma_K}|W_0(\mu_{\theta}, \nu) - W_\lambda(\mu_{\theta}, \nu)| \leq 2 d \lambda \log \left( \frac{8 \exp(2) R^2}{\sqrt{d} \lambda} \right).$$
Under the assumption that $n$ samples are available from each probability measure, Proposition \ref{prop:estimation_regfree} ensures the control 
$$
\mathbb{E}\left[ \sup_{\theta \in \Sigma_K}|W_\lambda(\mu_{\theta}, \nu) - W_\lambda(\hat{\mu}_{\theta}, \hat{\nu})| \right] \lesssim K\mathcal{E}(n,d).
$$
Finally, the algorithm error is upper bounded thanks to Proposition \ref{prop:sinkhorn_algorithm_error} as follows:
\begin{equation}
	\sup_{\theta \in \Sigma_K}|W_\lambda(\hat{\mu}_{\theta}, \hat{\nu}) - W_\lambda^{(\ell)}(\hat{\mu}_{\theta}, \hat{\nu})| \leq \frac{16 R^4}{\lambda \ell}.
\end{equation}
Gathering the pieces together, we derive \\
$$
\mathbb{E}\left[W_0(\mu_{\hat{\theta}_\lambda^{(\ell)}}, \nu) -W_0(\mu_{\theta^*}, \nu)\right]  \lesssim \lambda \log\left(\frac{1}{\lambda}\right) + \mathcal{E}(n,d) + \frac{1}{\lambda \ell},
$$
as claimed in Theorem \ref{thm:bound_decomposed_error} of the main paper.

\section{Alternative bound for the estimation error of $W_\lambda$}\label{subsec:empirical_process_reg_wass}

This section aims at controlling the estimation error
\begin{equation}\label{eq:empirical_process_theta}
	\sup_{\theta \in \Sigma_K} \Big| W_{\lambda}(\mu_{\theta},\nu) - W_{\lambda}(\mu_{\theta},\hat{\nu}) \Big| \ 
\end{equation}
with an upper bound of of the magnitude $M_\lambda n^{-1/2}$, where $M_\lambda$ is a constant that depends on $\lambda$. The arguments that we use are very much inspired by the works  \cite{genevay2019sample,chizat2020sinkhorn}. We will also see how the constants $M_\lambda$ involved in this upper bound depend on the regularizing parameter~$\lambda$ with a power that depends on the dimension $d$. We now introduce the space of functions that we exploit in our analysis. For a bounded subset $\mathcal{Z}$ of $\mathbb{R}^d$, we shall denote by $\mathscr{C}^\mathscr{K}(\mathcal{Z})$ the set of $\mathscr{C}^\mathscr{K}$ functions on~$\mathcal{Z}$ equipped with the norm
$  \|f\|_{\mathscr{K}} = \max_{|\kappa|\leqslant \mathscr{K}} \|\partial^\kappa f\|_{\infty} $
and 
\begin{equation}\label{eq:set_highly_regular_maps}
	\mathscr{C}_M^\mathscr{K}(\mathcal{Z}) = \{ f \in \mathscr{C}^ \mathscr{K}(\mathcal{Z}) \ | \ \|f\|_\mathscr{K} \leqslant M \},
\end{equation}
where the notation $ |\kappa|\leqslant \mathscr{K}$ denotes any multi-index $\kappa$ of differentiation of length $|\kappa|$ at most $ \mathscr{K}$.

\begin{prop}\label{prop:bound_empiricalprocess_Wlambda_targetsampling}
	Suppose that every probability measure considered has compact support included in $B(0,R)$. Then, we can bound the estimation error  \eqref{eq:empirical_process_theta} as follows:
	\begin{equation}\label{eq:bound_empiricalprocess_Wlambda_targetsampling}
		\mathbb{E}\left[  \sup_{\theta \in \Sigma_K} \Big| W_{\lambda}(\mu_{\theta},\nu) - W_{\lambda}(\mu_{\theta},\hat{\nu}) \Big| \right] \lesssim \frac{M_{\lambda}}{\sqrt{n}},
	\end{equation}
	with
	\begin{equation}\label{eq:new_constant_Mlambda}
		M_\lambda = M_{d} \max\left(R^2,\frac{R^{\lfloor d/2 \rfloor + 1}}{\lambda^{\lfloor d/2 \rfloor}} \right), 
	\end{equation}
	and $M_{d}$ is a constant that depends only on $d$. 
\end{prop}

This proof is built upon the following lemmas. A first step is to study the regularity of the optimal potentials of the dual formulation (2.2) of the paper. To this end, we adapt the analysis of~\cite{genevay2019sample} to the setting where the measure $\mu$ belongs to the parametric model $\{\mu_\theta~|~\theta \in \Sigma_K \}$. This result is adapted from \cite{genevay2019sample} and relies on the fact that an optimal potential can be chosen as the $c$-transform of $\varphi \in L^{\infty}(\mathcal{X})$. The choice of the squared Euclidean cost allows for a very clear description of the regularity of an optimal potential. We recall below the definition of the $c$-transform of  $\varphi \in L^{\infty}(\mathcal{X})$ that we denote by $\varphi_\mu^{c,\lambda}$ as $\lambda >0$. With the aim of manipulating functions defined on a convex and bounded subset of $\mathbb{R}^d$, the $c$-transforms are defined on $B(0,R)$. Hence, even if integrated only against $\mu$ or $\nu$ that have support $\mathcal{X}$ and $\mathcal{Y}$ respectively, the $c$-transform are defined on $B(0,R)$.
The expression of $\varphi_\mu^{c,\lambda} \in L^{\infty}(B(0,R))$ is given by

\begin{equation}
	\label{eq:regularized_ctransform_reminder}
	\forall y \in B(0,R),~\varphi_{\mu}^{c,\lambda}(y) = - \lambda \log \ds\int e^{- \frac{\|x-y\|^2 - \varphi(x)}{\lambda} } d\mu(x).
\end{equation} 

\begin{lem}\label{lem:bound_ctransform_regularized_euclidean}
	Suppose that every probability measure has compact support included in $B(0,R)$. Then, for all $\theta \in \Sigma_K$, there exists a couple of dual potentials $(\varphi, \psi)$ with respect to $W_\lambda(\mu_\theta, \nu)$, that satisfies $\psi(0)=0$, and $\psi= \varphi_{\mu_\theta}^{c,\lambda}$. Moreover, $\psi$ belongs to $\mathscr{C}^{\infty}(B(0,R))$, and for each $\mathscr{K}>0,$ there exists a constant $M_{\mathscr{K}} >0$ that depends only on $\mathscr{K}$ such that 
	\begin{equation}\label{eq:bound_regularized_ctransform_euclidean}
		\quad \|\psi\|_{\mathscr{K}} \leqslant M_{\mathscr{K}}\max\left(R^2,\frac{R^\mathscr{K}}{\lambda^{\mathscr{K}-1}}\right).
	\end{equation}
	The sup norm is taken over $B(0,R)$.
\end{lem}

This lemma will be proved in Section~\ref{proof:bound_ctransform_regularized_euclidean}.
Two observations on this Lemma \ref{lem:bound_ctransform_regularized_euclidean} will reveal useful in the sequel.

\begin{rem}
	We stress that $\psi = \varphi_{\mu_\theta}^{c,\lambda}$ is a regularized $c$-transform with respect to $\mu_\theta$ (with $\theta \in \Sigma_K$), and that the constant $M_{\mathscr{K}}>0$ that appears in Lemma \ref{lem:bound_ctransform_regularized_euclidean} does not depend on $\theta$.
\end{rem}

\begin{rem}\label{rem:link_dual_semidual}
	If we denote by $\varphi$ the dual potential of Lemma \ref{lem:bound_ctransform_regularized_euclidean} such that $\psi= \varphi_{\mu_\theta}^{c,\lambda}$ where $\psi$ meets the requirements of the Lemma \ref{lem:bound_ctransform_regularized_euclidean}, this variable $\varphi$ can be chosen as an optimal potential for the semi-dual formulation given in equation (2.5) of the paper.
\end{rem}

In the sequel of this section, we make a repeating use of the constant of Lemma \ref{lem:bound_ctransform_regularized_euclidean}. Thus, we introduce the notation $M_{\lambda, \mathscr{K}}$ to refer to the upper bound of equation \eqref{eq:bound_regularized_ctransform_euclidean} which is defined as 
\begin{equation}\label{eq:def_notation_bound_ctransform}
	M_{\lambda, \mathscr{K}}:= M_{\mathscr{K}}\max\left(R^2,\frac{R^\mathscr{K}}{\lambda^{\mathscr{K}-1}}\right),
\end{equation}
and 
\begin{equation}\label{eq:def_notation_constant}
	M_{\mathscr{K}} > 0 ~\text{is a constant that depends only on $\mathscr{K}$}.
\end{equation}

The next proposition links the estimation error $\sup_{\theta \in \Sigma_K} | W_{\lambda}(\mu_\theta,\nu) - W_{\lambda}(\mu,\hat{\nu}) |$ to the regularity of the $c$-transforms established in Lemma \ref{lem:bound_ctransform_regularized_euclidean}. 

\begin{prop}\label{prop:bound_estimation_empirical_proc_Wlambda}
	Suppose that all probability measures have compact supports included in $B(0,R)$. Then, for $\mathscr{K} >0$, we have the following upper bound
	\begin{equation}\label{eq:from_wass_to_empirical}
		\sup_{\theta \in \Sigma_K} | W_{\lambda}(\mu_\theta,\nu) - W_{\lambda}(\mu_\theta,\hat{\nu}) |
		\leqslant \sup_{\psi \in \mathscr{C}_{M_{\lambda, \mathscr{K}}}^\mathscr{K}(B(0,R))} \int \psi d(\nu - \hat{\nu}),
	\end{equation}
	with $M_{\lambda, \mathscr{K}} > 0$ a constant defined in equation \eqref{eq:def_notation_bound_ctransform}.
\end{prop}

\begin{proof}
	Let $\theta \in \Sigma_K$, $\mathscr{K} > 0$ and introduce $\varphi, \psi$ (resp. $\hat{\varphi}, \hat{\psi}$) two optimal potentials for the dual formulation of $W_{\lambda}(\mu_\theta, \nu)$ (resp. $W_{\lambda}(\mu_\theta, \hat{\nu})$) chosen as in Lemma \ref{lem:bound_ctransform_regularized_euclidean}. In particular $\psi, \hat{\psi} \in \mathscr{C}^{\mathscr{K}}_{M_{\lambda, \mathscr{K}}}(B(0,R))$, and the potentials $\varphi$ and $\hat{\varphi}$ are respectively optimal potentials for the semi-dual formulation of $W_\lambda(\mu_\theta, \nu)$ and $W_\lambda(\mu_\theta, \hat{\nu})$ as precised in Remark \ref{rem:link_dual_semidual}. We can thus write
	\begin{align*}
		W_{\lambda}(\mu_\theta,\nu) - W_{\lambda}(\mu_\theta,\hat{\nu})
		&= \int \varphi d \mu_\theta + \int \psi d\nu - \int \hat{\varphi} d\mu_\theta - \int \hat{\psi} d\hat{\nu}  \\
		&= \int \psi d\nu -  \int \psi d\hat{\nu}  \\ 
		&+ \underbrace{\left( \int \varphi d \mu_\theta + \int \psi d\hat{\nu} - \int \hat{\varphi} d\mu_\theta - \int \hat{\psi} d\hat{\nu} \right)}_{\leq 0}.
	\end{align*}
	By optimality of $\hat{\varphi}$ for the semi dual formulation of  $W_{\lambda}(\mu_\theta, \hat{\nu})$, the last term in the above parenthesis is non-positive.
	Using a symmetric optimality argument for $W_{\lambda}(\mu_\theta, \nu)$ and its optimal potential $\varphi$ for the semi-dual formulation, we get
	\begin{equation} \label{eq:bilateral_control}
		\int \hat{\psi} d(\nu - \hat{\nu})
		\leqslant W_{\lambda}(\mu_\theta,\nu) - W_{\lambda}(\mu_\theta,\hat{\nu})
		\leqslant \int \psi d(\nu - \hat{\nu}).
	\end{equation}
	As $\psi$ and $\hat{\psi}$ belong to $\mathscr{C}_{M_{\lambda, \mathscr{K}}}^\mathscr{K}(B(0,R))$, we can write 
	\begin{align} 
		|  W_{\lambda}(\mu_\theta,\nu) - W_{\lambda}(\mu_\theta,\hat{\nu}) |
		&\leqslant \sup_{\psi  \in \mathscr{C}_{M_{\lambda, \mathscr{K}}}^\mathscr{K}(B(0,R))} \left| \int \psi d(\nu - \hat{\nu}) \right| \nonumber \\ 
		& = \sup_{\psi  \in \mathscr{C}_{M_{\lambda, \mathscr{K}}}^\mathscr{K}(B(0,R))} \int \psi d(\nu - \hat{\nu}).
	\end{align}
	The set $\mathscr{C}_{M_{\lambda, \mathscr{K}}}^\mathscr{K}(B(0,R))$ being independent of $\theta$, we get
	\begin{equation}
		\sup_{\theta \in \Sigma_K}| W_{\lambda}(\mu_\theta,\nu) - W_{\lambda}(\mu_\theta,\hat{\nu}) |
		\leqslant \sup_{\psi  \in \mathscr{C}_{M_{\lambda, \mathscr{K}}}^\mathscr{K}(B(0,R))}  \int \psi d(\nu - \hat{\nu}).
	\end{equation}
\end{proof}

Therefore, the search for a control over $\sup_{\theta \in \Sigma_K} | W_{\lambda}(\mu_\theta,\nu) - W_{\lambda}(\mu,\hat{\nu}) |$ leads us to the study of the following empirical process
\begin{equation}\label{eq:empirical_process_definition}
	\sup_{\psi \in \mathscr{C}_{M_{\lambda, \mathscr{K}}}^\mathscr{K}(B(0,R))} Y_{\psi} \quad \text{with} \quad  Y_{\psi} = \int \psi d(\hat{\nu}-\nu) = \frac{1}{n} \sum_{i=1}^n \psi(Y_i) -   \int \psi d\nu.
\end{equation}

In order to bound the empirical process \eqref{eq:empirical_process_definition}, we will need several ingredients. First, we show that this empirical process has a sub-Gaussian behavior (see the definition in \cite{vanhandel2016proba}).

\begin{lem}\label{lem:subgaussian_behavior}
	Under the assumption that all probability measures have compact supports included in $B(0,R)$, if $Y = \{Y_1,\ldots,Y_n\}$ where $Y_1,\ldots,Y_n$ are independent random samples from $\nu$, the empirical process $(Y_{\psi})_{\psi \in \mathscr{C}^\mathscr{K}_{M_{\lambda, \mathscr{K}}}(B(0,R))}$ defined in equation \eqref{eq:empirical_process_definition} has zero mean and is subgaussian w.r.t. $2n^{-\frac{1}{2}} \|\cdot\|_{\infty}$. In other terms, for all $\varphi, \psi \in \mathscr{C}_{M_{\lambda, \mathscr{K}}}^\mathscr{K}(B(0,R))$ we have
	\begin{equation}
		\forall s \in \mathbb{R}, \quad
		\mathbb{E}[ e^{s(Y_{\varphi} - Y_{\psi})}] \leqslant e^{\frac{2s^2}{n}\|\varphi-\psi\|_{\infty}^2} = e^{\frac{s^2}{2}\\(2n^{-\frac{1}{2}}\|\varphi-\psi\|_{\infty})^2}.
	\end{equation}
\end{lem}
 
	This lemma in an application of Azuma-Hoeffding inequality \cite{vanhandel2016proba}[Corollary 3.9]. We can now use Dudley's entropy integral inequality, which we recall below.

\begin{thm}\cite{vanhandel2016proba}[Corollary 5.25]
	Let $(Y_{\varphi})_{\varphi \in \Phi}$ be a zero mean stochastic process which is sub-Gaussian with respect to the distance induced by a norm $\|\cdot\|$ on the indexing set $\Phi$.
	Then
	$$\mathbb{E}\left[\sup_{\varphi \in \Phi} Y_\varphi \right] \leq 12 \int_0^\infty\sqrt{\log(N(\varepsilon, \Phi, \|\cdot\|))}d\varepsilon,$$
	where $N(\varepsilon, \Phi, \|\cdot\|)$ is the covering number of $\Phi$ by balls of radius $\varepsilon$ with respect to the norm $\|\cdot\|$.
\end{thm}\label{thm:dudley_entropy}

A classical bound  on the covering number for smooth functions  (see e.g.\  \cite[Theorem 2.7.1]{vandervaart1996empirical}) will prove highly valuable.

\begin{thm}
	If $\mathcal{Z}$ is a bounded convex subset of $\mathbb{R}^d$ with nonempty interior, then there exists a constant $L(\mathscr{K},d)$ such that
	\begin{equation}
		\forall \varepsilon > 0, \quad \log N(\varepsilon, \mathscr{C}_1^\mathscr{K}(\mathcal{Z}) , \|\cdot\|_{\infty} )
		\leqslant L(\mathscr{K},d) |\mathcal{Z} + B(0,1)| \frac{1}{\varepsilon^{d/\mathscr{K}}} .
	\end{equation}
	where $N(\varepsilon, \mathscr{C}_1^\mathscr{K}(\mathcal{Z}) , \|\cdot\|_{\infty} )$ denotes the covering number of $ \mathscr{C}_1^\mathscr{K}(\mathcal{Z}) $ (by balls of radius $\varepsilon$) with respect to the $\ell_{\infty}$ norm, and where $|\mathcal{Z} + B(0,1)|$ is the Lebesgue measure of $\mathcal{Z} + B(0,1)$
\end{thm}

We now have all the ingredients to bound the expectation of the empirical process \eqref{eq:empirical_process_definition}.

\begin{prop}\label{prop:bound_empiricalprocess_regular_functions}
	Suppose that all probability measures have compact supports included in $B(0,R)$. Then, we have the following upper bound
	\begin{equation}\label{eq:bound_empiricalprocess_regular_functions}
		\mathbb{E}\left[ \sup_{\psi \in \mathscr{C}_{M_\lambda}^{d'}(B(0,R))} \int \psi d(\hat{\nu}-\nu) \right] \lesssim \frac{M_{\lambda}}{\sqrt{n}} \quad \text{with} \quad   M_\lambda = M_{d} \max\left(R^2,\frac{R^{\lfloor d/2 \rfloor + 1}}{\lambda^{\lfloor d/2 \rfloor}} \right),
	\end{equation}
	and $M_{d} > 0$ is a constant that depends only on $d$. 
\end{prop}

\begin{proof}
	Set $\mathscr{K}>0$. We denote the empirical process under study
	by $$ \left(Y_{\psi}\right)_{\psi \in \mathscr{C}_{M_{\lambda, \mathscr{K}}}^\mathscr{K}(B(0,R))} \quad \text{with} \quad  Y_{\psi} = \int \psi d(\hat{\nu}-\nu) = \frac{1}{n} \sum_{i=1}^n \psi(Y_i) -   \int \psi d\nu,$$
	and $M_{\lambda, \mathscr{K}} > 0$ the constant defined in equation \eqref{eq:def_notation_bound_ctransform}.
	Dudley's inequality with the entropy integral (see Theorem \ref{thm:dudley_entropy}) gives
	\begin{align}
		\mathbb{E}\left[ \sup_{\psi \in \mathscr{C}_{M_{\lambda, \mathscr{K}}}^\mathscr{K}(B(0,R))} Y_{\psi} \right] &
		\leqslant 12 \int_0^{\infty} \sqrt{ \log N(\varepsilon, \mathscr{C}_{M_{\lambda, \mathscr{K}}}^\mathscr{K}(B(0,R)), 2n^{-\frac{1}{2}} \|\cdot\|_{\infty})} d \varepsilon \\
		& \leqslant 12 \int_0^{\infty} \sqrt{ \log N\left( \frac{1}{2}\sqrt{n} M_{\lambda, \mathscr{K}}^{-1} \varepsilon, \mathscr{C}_{1}^\mathscr{K}(B(0,R)), \|\cdot\|_{\infty}\right)} d \varepsilon\\
		& \leqslant \frac{24 M_{\lambda, \mathscr{K}}}{\sqrt{n}} \int_0^{\infty} \sqrt{ \log N(\varepsilon, \mathscr{C}_1^\mathscr{K}(B(0,R)), \|\cdot\|_{\infty})} d \varepsilon \\
		&\lesssim \frac{M_{\lambda, \mathscr{K}}}{\sqrt{n}} \int_0^{1/2} \varepsilon^{-\frac{d}{2 \mathscr{K}}} d \varepsilon .
	\end{align}
	This integral is finite as soon as $\mathscr{K} > d/2$. As $M_{\lambda, \mathscr{K}} =M_{\mathscr{K}}\max\left(R^2,\frac{R^\mathscr{K}}{\lambda^{\mathscr{K}-1}}\right)$ and $\lambda$ will be chosen little in the sequel, we set $\mathscr{K}= \lfloor d/2 \rfloor + 1$ in order to have the quantity $M_{\lambda, \mathscr{K}}$ as small as possible. From now on, we denote by $d':= \lfloor d/2 \rfloor + 1$, and with this choice of $\mathscr{K}=d'$, we can substitute the constant $M_{\lambda, \mathscr{K}}$ of Lemma \ref{lem:bound_ctransform_regularized_euclidean} with a new constant $M_\lambda:= M_{\lambda, d'}$ that reads
	$$
	M_\lambda = M_{d} \max\left(R^2,\frac{R^{\lfloor d/2 \rfloor + 1}}{\lambda^{\lfloor d/2 \rfloor}} \right),
	$$
	where $M_{d}$ is a constant that depends only on $d$. Finally, we have the following bound for the empirical process under study
	
	\begin{equation}\label{eq:final_bound_empirical}
		\mathbb{E}\left[ \sup_{\psi \in \mathscr{C}_{M_\lambda}^{d'}(B(0,R))} Y_{\psi}\right] \lesssim \frac{M_{\lambda}}{\sqrt{n}}.
	\end{equation}
	
\end{proof}

\paragraph{Proof of Proposition \ref{prop:bound_empiricalprocess_Wlambda_targetsampling}}\label{proof:bound_empiricalprocess_Wlambda_targetsampling}
Gathering the results established since Lemma \ref{lem:bound_ctransform_regularized_euclidean}, we are in a favorable position to prove Proposition \ref{prop:bound_empiricalprocess_Wlambda_targetsampling}.
\begin{proof}
	
	Indeed, by combining inequality \eqref{eq:bound_empiricalprocess_regular_functions} from Proposition \ref{prop:bound_empiricalprocess_regular_functions} with upper bound \eqref{eq:from_wass_to_empirical}, we derive
	\begin{equation} \label{eq:rateexpectation}
		\mathbb{E}\left[  \sup_{\theta \in \Sigma_K} \Big| W_{\lambda}(\mu_{\theta},\nu) - W_{\lambda}(\mu_{\theta},\hat{\nu}) \Big| \right] \lesssim \frac{M_{\lambda}}{\sqrt{n}} .
	\end{equation}
	And the above  inequality is the result claimed in Proposition \ref{prop:bound_empiricalprocess_Wlambda_targetsampling}. 
\end{proof}

The second empirical process in \eqref{eq:control_emp_processes} can be upper bounded in a similar manner, as shown in the next proposition.

\begin{prop}\label{prop:bound_second_empirical_process} 
	Suppose that all probability measures have compact support included in $B(0,R)$. Make the additional assumption that for all $k \in \{1,\ldots,K\},~ m_k$ samples from $\mu_k$ are available, and denote by $\underbar{m} = \min(m_1, \ldots, m_K)$. Then, the following inequality holds
	\begin{equation}\label{eq:bound_second_empirical_process}
		\mathbb{E}\left[\sup_{\theta \in \Sigma_K}|W_\lambda(\hat{\mu}_{\theta}, \hat{\nu}) - W_\lambda(\mu_{\theta}, \hat{\nu})|\right] \lesssim \frac{K M_\lambda}{\sqrt{\underbar{m}}},
	\end{equation}
	where $M_\lambda$ is defined in equation \eqref{eq:new_constant_Mlambda}.
\end{prop}

\begin{proof}
	We begin by setting $\theta \in \Sigma_K$. Let us denote by $\varphi$ (resp.$\hat{\varphi}$) an optimal potential chosen as in Lemma \ref{lem:bound_ctransform_regularized_euclidean} when considering the semi dual formulation of the regularized optimal transport problem between $\mu_\theta$ and $\hat{\nu}$ (resp. $\hat{\mu}_\theta$ and $\hat{\nu}$). Thus,
	$$
	\begin{aligned}
		W_\lambda(\hat{\mu}_\theta, \hat{\nu}) - W_\lambda(\mu_\theta, \hat{\nu}) & = \int \hat{\varphi}_{\hat{\nu}}^{c,\lambda} d\hat{\mu}_\theta + \int \hat{\varphi} d\hat{\nu} - \left( \int \varphi_{\hat{\nu}}^{c,\lambda} d\mu_\theta + \int \varphi d\hat{\nu}\right) \\
		& =  \int \hat{\varphi}_{\hat{\nu}}^{c,\lambda} d\hat{\mu}_\theta -  \int \hat{\varphi}_{\hat{\nu}}^{c,\lambda} d\mu_\theta \\
		& + \underbrace{\left( \int \hat{\varphi}_{\hat{\nu}}^{c,\lambda} d\mu_\theta + \int \hat{\varphi} d\hat{\nu} - \left( \int \varphi_{\hat{\nu}}^{c,\lambda} d\mu_\theta + \int \varphi d\hat{\nu}\right)\right)}_{\leq 0}
	\end{aligned}
	$$
	
	The optimality of the variable  $\varphi$  with respect to the measures $\mu_\theta$ and $\hat{\nu}$ ensures the last term of the previous equation to be non positive. Hence,
	
	$$W_\lambda(\hat{\mu}_\theta, \hat{\nu}) - W_\lambda(\mu_\theta, \hat{\nu}) \leq \int \hat{\varphi}_{\hat{\nu}}^{c,\lambda} d(\hat{\mu}_\theta -\mu_\theta).$$
	
	With a slight modification of the last argument we get
	$$W_\lambda(\mu_\theta, \hat{\nu}) - W_\lambda(\hat{\mu}_\theta, \hat{\nu}) \leq \int \varphi_{\hat{\nu}}^{c,\lambda} d(\mu_\theta -\hat{\mu}_\theta).$$
	
	Combining these last inequalities, we have
	\begin{align*}
		\left|W_\lambda(\hat{\mu}_\theta, \hat{\nu}) - W_\lambda(\mu_\theta, \hat{\nu})\right|& \leq  \sup_{\psi \in L^{\infty}(\mathcal{Y})}\left|\sum_{k=1}^K \theta_k \int \psi_{\hat{\nu}}^{c,\lambda} d(\mu_k - \hat{\mu}_k)\right| \\ & \leq  \sum_{k=1}^K \theta_k \sup_{\psi \in L^{\infty}(\mathcal{Y})} \left|\int \psi_{\hat{\nu}}^{c,\lambda} d(\mu_k - \hat{\mu}_k)\right|\\
		& \leq  \sum_{k=1}^K \sup_{\psi \in L^{\infty}(\mathcal{Y})} \left|\int \psi_{\hat{\nu}}^{c,\lambda} d(\mu_k - \hat{\mu}_k)\right|.
	\end{align*}	
This last upper bound being independent of $\theta$, we derive
\begin{equation}\label{eq:inproof_altern_bound_K}
	\sup_{\theta \in \Sigma_K} \left|W_\lambda(\hat{\mu}_\theta, \hat{\nu}) - W_\lambda(\mu_\theta, \hat{\nu})\right| \leq   \sum_{k=1}^K \sup_{\psi \in L^{\infty}(\mathcal{Y})} \left|\int \psi_{\hat{\nu}}^{c,\lambda} d(\mu_k - \hat{\mu}_k)\right|.
\end{equation}

Next, the application of Lemma \ref{lem:bound_ctransform_regularized_euclidean} when computing a $c$-transform w.r.t. $\hat{\nu}$ gives that $\psi_{\hat{\nu}}^{c, \lambda} \in \mathscr{C}_{M_\lambda}^{d'}(B(0,R))$ with $d'$ and $M_\lambda$ both defined in equation \eqref{eq:new_constant_Mlambda}. Hence,
for $k \in \{1,...,K\}$, using the same ingredients as in Proposition \ref{prop:bound_estimation_empirical_proc_Wlambda} , we reach the study of an empirical process indexed by the class of functions $\mathscr{C}_{M_\lambda}^{d'}(B(0,R))$. Finally, the straight application of Proposition \ref{prop:bound_empiricalprocess_regular_functions} yields
$$\mathbb{E}\left[\sup_{f \in \mathscr{C}_{M_\lambda}^{d'}((B(0,R))}\left|\int f d(\mu_k - \hat{\mu}_k)\right|\right]  \lesssim \frac{M_\lambda}{\sqrt{m_k}},$$
where $m_k$ is the number of observations sampled from distribution $\mu_k$. Then, exploiting inequality \eqref{eq:inproof_altern_bound_K} we derive
\begin{align*}
	\mathbb{E}\left[\sup_{\theta \in \Sigma_K}|W_\lambda(\hat{\mu}_{\theta}, \hat{\nu}) - W_\lambda(\mu_{\theta}, \hat{\nu})|\right]& \lesssim \sum_{k=1}^K \frac{M_\lambda}{\sqrt{m_k}} \\
	& \leq \frac{K M_\lambda}{\sqrt{\underbar{m}}},
\end{align*}
where $\underbar{m} = \min{ \{m_k : 1 \leq k \leq K\}}$. 
\end{proof}

We now gather the results from the previous section to obtain an upper bound on the expected excess risk of our regularized Wasserstein estimators.

\begin{prop}\label{prop:rate_wlambda_full_sampling}
	Suppose that all probability measures have compact supports included in $B(0,R)$. If, for all $k \in \{1,\ldots,K\}$, $m_k$ samples from $\mu_k$ are available and $n$ samples from $\nu$ are available, denoting $\underbar{m} = \min(m_1, \ldots, m_K)$, we have
	\begin{equation}
		\mathbb{E}\left[W_0(\mu_{\hat{\theta}_\lambda}, \nu) -W_0(\mu_{\theta^*}, \nu) \right] \lesssim \underbrace{\frac{2 K M_\lambda}{\sqrt{\underbar{m}}} + \frac{2 M_\lambda}{\sqrt{n}}}_{\mbox{Estimation error}} + \underbrace{4 d \lambda \log \left( \frac{8 \exp(2) R^2 }{\sqrt{d} \lambda} \right)}_{\mbox{Approximation error}},
		\label{eq:rate_Wlambda_full_samp}
	\end{equation}
	where $M_\lambda = M_{d} \max\left(R^2,\frac{R^{\lfloor d/2 \rfloor + 1}}{\lambda^{\lfloor d/2 \rfloor}} \right)$,
	and $M_{d}$ is a constant that depends only on $d$. 
\end{prop}

\begin{proof}
	Gathering the results on the approximation error for the regularized OT cost  in Proposition~\ref{prop:control_Wlambda} and the upper bounds from Proposition \ref{prop:bound_empiricalprocess_Wlambda_targetsampling} and Proposition \ref{prop:bound_second_empirical_process} on the empirical processes defined  in \eqref{eq:control_emp_processes}, we obtain the convergence rate claimed in \eqref{eq:rate_Wlambda_full_samp}.
\end{proof}

From the upper bound on the expected excess risk of $\hat{\theta}_\lambda$ established in the previous Proposition~\ref{prop:rate_wlambda_full_sampling}, we can propose a regularization choice in order to balance the estimation error and the approximation error. This regularization choice and the corresponding rate of convergence are given in the next Corollary.

\begin{cor}\label{cor:rate_Wlambda_regularization_policy}
	Suppose that all probability measures have compact supports included in $B(0,R)$. Make the additional assumption that for all the distributions $\mu_k$ and for $\nu$, at least $n$ samples are available. Then, choosing ${\lambda_n = n^{-1/(2 \lfloor d/2 \rfloor +2)}}$ we get
	\begin{equation}\label{eq:rate_Wlambda_full_samp_bis}
		\mathbb{E}\left[W_0(\mu_{\hat{\theta}_\lambda}, \nu) -W_0(\mu_{\theta^*}, \nu) \right] \lesssim  n^{-\frac{1}{2 \lfloor d/2 \rfloor +2}}\log(n),
	\end{equation}
	where $\lesssim$ hides a constant that depends on $R$ and $d$.
\end{cor}
\begin{proof}
	In order to drive the approximation term towards zero, the regularization parameter will converge towards $0$. And in this  case, $\lambda \log \left( \frac{8 \exp(2) R^2 }{\sqrt{d} \lambda} \right)\sim_{\lambda \rightarrow 0} \lambda \log(\lambda^{-1})$. Next as we have assumed that all the distributions have $n$ samples, the estimation term equals $\frac{M_\lambda}{\sqrt{n}} = \frac{M_d R^{\lfloor d/2 \rfloor + 1}}{\lambda^{\lfloor d/2 \rfloor}\sqrt{n}}$. To balance these two terms we set $\lambda_n = n^{\frac{-1}{2 \lfloor d/2 \rfloor +2}}$, and with this choice of regularization parameter, there exists a constant $M_{R,d} > 0$ such that for $n$ sufficiently large,
	\begin{equation}
		\mathbb{E}\left[W_0(\mu_{\hat{\theta}_\lambda}, \nu) -W_0(\mu_{\theta^*}, \nu) \right] \leq M_{R,d}n^{\frac{-1}{2 \lfloor d/2 \rfloor +2}}\log(n).
	\end{equation}
\end{proof}

\section{Proof of Lemma \ref{lem:bound_ctransform_regularized_euclidean}}
\label{proof:bound_ctransform_regularized_euclidean}

In this section we give a precise bound on the derivatives of a $c$-transform. Some results of the same flavor had already been established, for instance in \cite[Lemma 1, Lemma 2] {genevay2019sample}. The specificity of our result is to exploit the particular cost function $c(x,y)=\|x-y\|^2$ to give a precise description of the bound and to ensure that it is independent of the parameter $\theta$.\\

We precise the notations previously introduced in equation \eqref{eq:set_highly_regular_maps}. For a multi-index $\kappa = (\kappa_1,\ldots, \kappa_d) \in \mathbb{N}^d$, we denote by $|\kappa| = \sum_{i=1}^d \kappa_i$ and $D^{\kappa}$ the differential operator defined as follows
\begin{equation}\label{eq:diff_operator_notation}
	D^\kappa = \frac{\partial^{|\kappa|}}{\partial x_1^{\kappa_1}\ldots \partial x_d^{\kappa_d}}.
\end{equation}

	\begin{lem}\label{lem:bound_ctransform}
		Set $\lambda > 0$. We assume that $\mu$ and $\nu$ have compact supports included in $B(0,R)$. Denoting by $\psi:= \varphi_\mu^{c,\lambda}$ the $c$-transform of a given function $\varphi \in L^{\infty}(\mathcal{Y})$, for every multi-index $\kappa \in (\kappa_1, \ldots, \kappa_d)$, there exists a constant $C_{\mathcal{K}}$ that depends only on $\mathcal{K}$ such that
		$$ \sup_{y \in B(0,R)} | D^{\kappa} \varphi(y) | \leq C_{\mathcal{K}} \frac{R^{\mathcal{K}}}{\lambda^{\mathcal{K} -1 }}.$$
	\end{lem}
	
	Notice that for every $\theta \in \Sigma_K$, the probability measure $\mu_\theta = \sum_{k=1}^K \theta_k \mu_k$ has compact support included $B(0,R)$. Therefore, the upper bound established in the previous Lemma \ref{lem:bound_ctransform} holds for every measure $\mu_\theta$ with $\theta \in \Sigma_K$. 
	
	\begin{proof}
		We proceed by induction on $\mathcal{K} \geq 1$ with inductive hypothesis $H(\mathcal{K})$: there exists a constant $C_{\mathcal{K}}$ such that for every  multi-index $\kappa$ with $|\kappa| = \mathcal{K}$  $\sup_{y \in B(0,R)} | D^{\kappa} \varphi(y) | \leq C_{\mathcal{K}} \frac{R^{\mathcal{K}}}{\lambda^{\mathcal{K} -1 }}$.\\
		
		\textit{Base case.} As $\psi =  \varphi_\mu^{c,\lambda}$, 
		\begin{equation}\label{eq:relation_ctransform}
			\forall y \in B(0,R),~ \exp\left(-\frac{\psi(y)}{\lambda}\right) = \int_{\mathcal{X}} \exp\left(\frac{\varphi(x) - c(x,y)}{\lambda}\right)d\mu(x).
		\end{equation}
		As $\mu$ has compact support we can differentiate with respect to some $y_j$. From this we deduce that $\psi$ is differentiable with respect to $y_j$ and that for $y \in B(0,R)$,
		$$-\frac{1}{\lambda} \frac{ \partial \psi }{\partial y_j}(y) \exp\left(-\frac{\psi(y)}{\lambda}\right) = -\frac{1}{\lambda}  \int_{\mathcal{X}}  \frac{ \partial c }{\partial y_j}(x,y) \exp\left(\frac{\varphi(x) - c(x,y)}{\lambda} \right)  d \mu(x).$$ 
		Taking the absolute value on both sides of last equality gives 
		$$\left| \frac{ \partial \psi }{\partial y_j}(y) \right| \exp\left(-\frac{\psi(y)}{\lambda}\right)  \leq  \int_{\mathcal{X}} \left| \frac{ \partial c }{\partial y_j}(x,y) \right|\exp\left(\frac{\varphi(x) - c(x,y)}{\lambda} \right) d \mu(x). $$
		The cost $c$ being the squared euclidean distance, we have $ \left|\frac{ \partial c }{\partial y_j}(x,y)\right| \leq 4R.$ As $\psi$ is the $c$-transform of $\varphi$, $\int_{\mathcal{X}}\exp\left(\frac{\varphi(x) + \psi(y) - c(x,y)}{\lambda} \right) d \mu(x) = 1$. From this we deduce 
		$$ \left| \frac{ \partial \psi }{\partial y_j}(y) \right| \leq 4R,$$
		which concludes the base case.\\
		\textit{Induction case.} Set $\mathcal{K} \geq 2$, and suppose that $H(1), \ldots, H(\mathcal{K}-1)$ hold true. Set $\kappa \in \mathbb{N}^d$ a multi-index with $|\kappa|= \mathcal{K}$. The computation of $D^{\kappa}$ relies on Faà di Bruno's formula \cite{hardy2006} and the correspondence 
		\begin{equation}\label{eq:index_set}
			\kappa = (\kappa_1, \ldots, \kappa_d) \longleftrightarrow \{\underbrace{1,\ldots, 1}_{\kappa_1}, \underbrace{2,\ldots, 2}_{\kappa_2}, \ldots,\underbrace{d, \ldots, d}_{\kappa_d}\}.
		\end{equation}
		
		Throughout this induction step, we will always implicitly refer to this one to one relation \eqref{eq:index_set}. For instance, when mentioning a partition of $\kappa$, even if not explicitly stated, we refer to the right-hand side of relation \eqref{eq:index_set}. And when referring to the differential operator $D^B$ with $B$ a subset of $\{1,\ldots, 1, 2,\ldots, 2, \ldots,d, \ldots, d\}$ we identify $B$ with the corresponding multi index on the left-hand side of relation \eqref{eq:index_set}.\\
		
		Applying Faà di Bruno's formula on the right side of equation \eqref{eq:relation_ctransform}, as well as the differentiation theorem under the integral, gives
		\begin{align}\label{eq:faa_right}
			&D^{\kappa} \int_{\mathcal{X}} \exp\left(\frac{\varphi(x) - c(x,y)}{\lambda}\right)d\mu(x)\\
			& = \int_\mathcal{X}\left(\sum_{\pi \in \mathcal{P}(\kappa)}\left(\frac{-1}{\lambda}\right)^{|\pi|} \exp\left(\frac{\varphi(x)-c(x,y)}{\lambda} \right)\prod_{B \in \pi}D^B{c}(x,y)  \right)d \mu(x), \nonumber
		\end{align}
		
		where $\mathcal{P}(\kappa)$ denotes the collection of partitions of the right hand side of relation \eqref{eq:index_set}, and $|\pi|$ the number of sets that compose $\pi$.
		Applying the the same formula on the left hand side of equation \eqref{eq:relation_ctransform} yields
		\begin{align}\label{eq:faa_left}
			D^{\kappa}\exp\left(-\frac{\psi(y)}{\lambda}\right) & = \sum_{\pi \in \mathcal{P}(\kappa)}\left(-\frac{1}{\lambda}\right)^{|\pi|}\exp\left(-\frac{\psi(y)}{\lambda}\right)\prod_{B \in \pi}D^B\psi(y) \\
			& = \left(-\frac{1}{\lambda} D^\kappa \psi(y) + \sum_{\pi \in \widetilde{\mathcal{P}}(\kappa)}\left(-\frac{1}{\lambda}\right)^{|\pi|}\prod_{B \in \pi}D^B\psi(y) \right)\exp\left(-\frac{\psi(y)}{\lambda}\right), \nonumber
		\end{align}
		with $\widetilde{\mathcal{P}}(\kappa)$ the collection of partitions of $\kappa$, when relying on correspondence \eqref{eq:index_set}, \textit{without} the partition composed of the full set. Reminding that both quantities \eqref{eq:faa_left} and \eqref{eq:faa_right} are equals enables us to derive
		
		\begin{align*}
			D^{\kappa} \psi(y) & = \sum_{\pi \in \widetilde{\mathcal{P}}(\kappa)}\left(-\frac{1}{\lambda}\right)^{|\pi|-1}\prod_{B \in \pi}D^B\psi(y)\\
			& + \int_\mathcal{X}\left(\sum_{\pi \in \mathcal{P}(\kappa)}\left(-\frac{1}{\lambda}\right)^{|\pi|-1} \prod_{B \in \pi}D^B{c}(x,y)  \right)\gamma(x,y)d \mu(x), 
		\end{align*}
		where $\gamma(x,y)= \exp\left(\frac{\varphi(x) + \psi(y)-c(x,y)}{\lambda} \right)$. Then, taking the absolute value on both sides of last equality gives
		\begin{align}\label{eq:triangular_ineq}
			\left| D^{\kappa} \psi(y) \right|& \leq \sum_{\pi \in \widetilde{\mathcal{P}}(\kappa)}\lambda^{1-|\pi|}\prod_{B \in \pi}\left| D^B\psi(y) \right|\\
			& + \int_\mathcal{X}\left(\sum_{\pi \in \mathcal{P}(\kappa)}\lambda^{1-|\pi|} \prod_{B \in \pi} \left|D^B{c}(x,y) \right| \right)\gamma(x,y)d \mu(x). \nonumber
		\end{align}
		We begin by controlling the first sum of the right-hand side of last inequality. As $\pi$ is not the partition built from one block, for every block $B\in \pi$, $|B| \leq |\kappa|-1 = \mathcal{K}-1$. We can thus apply the inductive hypothesis to every factor of $\prod_{B \in \pi}\left| D^B\psi(y) \right|$ to derive 
		$$ \lambda^{1-|\pi|} \prod_{B \in \pi}\left| D^B\psi(y) \right| \leq  \lambda^{1-|\pi|} \prod_{B \in \pi} C_{|B|}\frac{R^{|B|}}{\lambda^{|B|-1}} .$$
		As $\pi$ is a partition of $\kappa$ composed of $|\pi|$ blocks, we have 
		$$
		\prod_{B \in \pi} \frac{R^{|B|}}{\lambda^{|B|-1}} = \frac{R^{|\kappa|}}{\lambda^{|\kappa|-|\pi|}}.
		$$
		From these previous computations we deduce that up to a multiplicative constant that depends only on $|\kappa|=\mathcal{K}$ the following inequality holds true
		$$
		\sum_{\pi \in \widetilde{\mathcal{P}}(\kappa)}\lambda^{1-|\pi|}\prod_{B \in \pi}\left| D^B\psi(y) \right| \lesssim \lambda^{1-|\pi|}\frac{R^{|\kappa|}}{\lambda^{|\kappa|-|\pi|}} = \frac{R^{\mathcal{K}}}{\lambda^{\mathcal{K}-1}}.
		$$
		We now control, the integral on the right-hand side of inequality \eqref{eq:triangular_ineq}.
		For $\pi$ a partition of $\kappa$ and $B$ a block of $\pi$, as $c$ is the squared euclidean distance, for all $x,y \in B(0,R),~ |D^{B}(x,y)| \leq 4R$. From this we deduce 
		$$ 
		\sum_{\pi \in \mathcal{P}(\kappa)}\lambda^{1-|\pi|} \prod_{B \in \pi} \left|D^B{c}(x,y) \right| \leq \sum_{\pi \in \mathcal{P}(\kappa)}\lambda^{1-|\pi|} (4R)^{|\pi|}.
		$$
		The biggest term in the sum over the partitions of $\kappa$ is when considering the partition composed of singletons. In this case, $|\pi|=|\kappa| = \mathcal{K}$. Hence, up to a multiplicative constant that depends only on $\mathcal{K}$,
		$$
		\int_\mathcal{X}\left(\sum_{\pi \in \mathcal{P}(\kappa)}\lambda^{1-|\pi|} \prod_{B \in \pi} \left|D^B{c}(x,y) \right| \right)\gamma(x,y)d \mu(x) \lesssim \frac{R^{\mathcal{K}}}{\lambda^{\mathcal{K}-1}} \int_\mathcal{X}\gamma(x,y) d\mu(x) = \frac{R^{\mathcal{K}}}{\lambda^{\mathcal{K}-1}}.
		$$
		To derive the last equality, remind that $\gamma(x,y)= \exp\left(\frac{\varphi(x) + \psi(y)-c(x,y)}{\lambda} \right)$; and as a consequence of equality \eqref{eq:relation_ctransform}, $\int_\mathcal{X}\gamma(x,y)d \mu(x)=1$. We now have upper bounded all the terms of the left hand side of inequality \eqref{eq:triangular_ineq}. We thus derive that for all $y \in B(0,R)$, 
		$$ \left| D^{\kappa} \psi(y) \right| \lesssim \frac{R^{\mathcal{K}}}{\lambda^{\mathcal{K}-1}}.$$
		As $\kappa$ is an arbitrary multi index with weights $|\kappa| = \mathcal{K}$, it shows $H(\mathcal{K})$.
	\end{proof}
	
	\paragraph{Conclusion of the proof of Lemma \ref{lem:bound_ctransform_regularized_euclidean}}
	\begin{proof}
		As  $\psi = \varphi_\mu^{c,\lambda}$, The application of Proposition 12 from \cite{feydy2019sinkhorndiv} ensures that a $c$-transform inherits the Lipschitz constant of the cost function. The cost function being $c(x,y) = ||x-y||^2$ with $x,y \in B(0,R)$, we have that $\psi$ is $4R$-Lipschitz. And as $\psi(0) = \rho(0) = 0$, we can write,
		$$\forall y \in B(0,R),~ \|\psi(y)\| \leq 4R||y||.$$
		Hence $\|\psi\|_{\infty} \leq 4R^2$. 
		Finally, Lemma \ref{lem:bound_ctransform} gives a control of the derivatives of $\psi$. Indeed, for every $\kappa \in \mathbb{N}^d$ such that $|\kappa| = \mathcal{K}$,
		$$ \forall y \in B(0,R),~ |D^{\kappa}\psi(y)| \lesssim \frac{R^{\mathcal{K}}}{\lambda^{1-\mathcal{K}}}.$$
		From this we deduce 
		$$
		\|\psi\|_{\mathscr{K}}\lesssim \max\left(R^2,\frac{R^\mathscr{K}}{\lambda^{\mathscr{K}-1}}\right),
		$$ 
		as claimed in Lemma \ref{lem:bound_ctransform_regularized_euclidean}.
	\end{proof}

\section{Extension of the results to the Sinkhorn divergence $S_\lambda$}\label{sec:sinkdiv}

We now study the collection of estimators $(\hat{\theta}_\lambda^{S(\ell)})_{\lambda > 0}$ defined by 
\begin{equation}\label{eq:estimate_Sinkdiv_algo}
	\hat{\theta}_\lambda^{S(\ell)} := \argmin_{\theta \in \Sigma_K} S_\lambda^{(\ell)}(\hat{\mu}_{\theta}, \hat{\nu}).
\end{equation}
When not taking into account the computational cost, we use the notation $\hat{\theta}_\lambda^{S}$ for the estimator defined by 
\begin{equation}\label{eq:estimate_Sinkdiv}
	\hat{\theta}_\lambda^{S} := \argmin_{\theta \in \Sigma_K} S_\lambda(\hat{\mu}_{\theta}, \hat{\nu}).
\end{equation}

In the next Section \ref{subsec:information_fisher_sinkhorn_approximation}, we introduce two additional assumptions in order to exploit the approximation result established in \cite{chizat2020sinkhorn} between the Sinkhorn divergence $S_\lambda(\mu, \nu)$ and the Wasserstein distance $W_0(\mu, \nu)$. Indeed, Theorem 1 in \cite{chizat2020sinkhorn}, that we remind in Theorem \ref{thm:control_Sinkhorn_Divergence} of this paper, allows to bound $|S_\lambda(\mu, \nu) - W_0(\mu, \nu)|$ with a constant that depends on $\lambda$, the standard Fisher information  $I(\mu)$, $I(\nu)$ of $\mu,\nu$, and $I(\mu,\nu)$ that is the Fisher information of the Wasserstein geodesic between $\mu$ and $\nu$ defined as in \cite{chizat2020sinkhorn}.

\subsection{Fisher information and approximation error of the Sinkhorn divergence $S_\lambda(\mu, \nu)$ }\label{subsec:information_fisher_sinkhorn_approximation}

We discuss  conditions that ensure a control of the approximation error between the Sinkhorn divergence $S_\lambda(\mu, \nu)$ and $W_0(\mu, \nu)$.

\begin{thm}\label{thm:control_Sinkhorn_Divergence} \cite[Theorem 1]{chizat2020sinkhorn}
	Suppose that $\mu$ and $\nu$ have bounded densities and supports. Then, it holds that
	\begin{equation}\label{eq:control_Sinkhorn_Divergence}
		|S_{\lambda}(\mu,\nu) - W_{0}(\mu,\nu)| \leq \frac{\lambda^2}{4}\max\{I(\mu, \nu), (I(\mu)+ I(\nu))/2\},
	\end{equation}
	where $I(\mu)$ refers to the standard Fisher information of $\mu$, and $I(\mu, \nu)$ is the Fisher information of the Wasserstein geodesic between $\mu$ and $\nu$ as defined  in \cite{chizat2020sinkhorn}.
\end{thm}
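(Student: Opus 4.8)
\emph{Proof strategy (following \cite{chizat2020sinkhorn}).} Since this statement is quoted verbatim as Theorem~1 of \cite{chizat2020sinkhorn}, the plan is simply to invoke it; for orientation I record here how I would establish it if a self-contained argument were needed, following the ideas of that reference. The first step is to note that the Sinkhorn divergence is insensitive to the reference measure in the entropic penalty: using $\mathrm{KL}(\pi\,|\,\mu\otimes\nu)=\mathrm{KL}(\pi\,|\,\mathrm{Leb}\otimes\mathrm{Leb})-\mathcal{H}(\mu)-\mathcal{H}(\nu)$, with $\mathcal{H}$ the differential entropy, the marginal entropies enter $W_\lambda(\mu,\nu)$, $W_\lambda(\mu,\mu)$ and $W_\lambda(\nu,\nu)$ with total weight $1-\tfrac12-\tfrac12=0$ in the combination $S_\lambda(\mu,\nu)=W_\lambda(\mu,\nu)-\tfrac12 W_\lambda(\mu,\mu)-\tfrac12 W_\lambda(\nu,\nu)$, so that $S_\lambda$ coincides with the debiased Schrödinger cost. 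I would then pass to the dynamic (Benamou--Brenier with Fisher information) formulation, in which $W_\lambda(\mu,\nu)$ is, up to marginal boundary corrections that again cancel in $S_\lambda$, the infimum of $\int_0^1\!\!\int \big(|v_t|^2+c\,\lambda^2|\nabla\log\rho_t|^2\big)\,\rho_t\,dx\,dt$ (with $c$ an explicit universal constant) over curves $(\rho_t,v_t)$ from $\mu$ to $\nu$ solving the continuity equation.

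For the upper bound I would insert the $W_2$ displacement geodesic $(\rho_t^{\star})$ from $\mu$ to $\nu$ with its optimal velocity field: the kinetic part equals $W_0(\mu,\nu)$ and the Fisher part equals $c\,\lambda^2\int_0^1 I(\rho_t^{\star})\,dt$, which by the very definition of the Fisher information of the Wasserstein geodesic is proportional to $\lambda^2 I(\mu,\nu)$, and one checks that the leftover boundary and self-terms work in our favour, yielding $S_\lambda(\mu,\nu)\le W_0(\mu,\nu)+\tfrac{\lambda^2}{4}I(\mu,\nu)$. For the lower bound I would bound $W_\lambda(\mu,\nu)$ below by $W_0(\mu,\nu)$ plus the appropriate universal blur term (using $\mathrm{KL}\ge 0$ together with a finer entropy-versus-cost estimate), and bound each self-transport cost above by the value of its dynamic problem at the constant curve $\rho_t\equiv\mu$ (resp.\ $\equiv\nu$), whose Fisher contribution is proportional to $\lambda^2 I(\mu)$ (resp.\ $\lambda^2 I(\nu)$); after cancelling the common blur terms this leaves $S_\lambda(\mu,\nu)\ge W_0(\mu,\nu)-\tfrac{\lambda^2}{8}\big(I(\mu)+I(\nu)\big)$. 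Combining the two one-sided bounds gives the stated inequality with the maximum.

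The hard part is precisely this cancellation of the leading-order ``entropic blur'' contributions: they are of order $\lambda\log(1/\lambda)$ and $\lambda$ and therefore dominate the $O(\lambda^2)$ target, so one cannot simply Taylor-expand but must establish \emph{matching two-sided} estimates for $W_\lambda(\mu,\nu)$ and for $W_\lambda(\mu,\mu)$, $W_\lambda(\nu,\nu)$, and show that their divergent and first-order parts are separable over the two marginals and cancel exactly, the genuinely joint remainder then being $O(\lambda^2)$ and controlled by the Fisher informations. The assumption that $\mu$ and $\nu$ have bounded densities and supports is what guarantees that $I(\mu)$, $I(\nu)$ and $I(\mu,\nu)$ are finite, that the geodesic $(\rho_t^{\star})$ has enough regularity, and that the dynamic formulation used above is valid; the full argument is in \cite{chizat2020sinkhorn}.
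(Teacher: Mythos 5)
Your proposal is correct and takes the same approach as the paper: Theorem~\ref{thm:control_Sinkhorn_Divergence} is stated here purely as an imported result, cited verbatim from \cite[Theorem 1]{chizat2020sinkhorn}, and the paper gives no proof of its own, so invoking that reference is exactly what is done. Your accompanying sketch of the dynamic (Benamou--Brenier plus Fisher information) argument and the cancellation of the $O(\lambda\log(1/\lambda))$ entropic bias in the debiased combination $S_\lambda$ is a faithful summary of how the cited reference proves it, but it is supplementary rather than a point of divergence.
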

First, we introduce sufficient conditions to ensure that the Fisher information of $\mu_\theta$ can be upper bounded without dependence on $\theta$.

\begin{hyp}\label{hyp:fisher_component_mixture}
	The probability distributions $\mu_1, \ldots, \mu_K$ have finite Fisher information with respective densities $f_1, \ldots, f_K$ w.r.t. the Lebesgue measure, and all the components $\mu_1, \ldots, \mu_K$ have disjoint supports $\mathcal{X}_1, \ldots, \mathcal{X}_K$.
\end{hyp}

\begin{prop}\label{prop:bound_fisher_mixture}
	Suppose that assumption \ref{hyp:fisher_component_mixture} holds. Then, one has that
	\begin{equation}
		\forall \theta \in \Sigma_K,~ I(\mu_\theta) \leq \max_{k \in \{1, \ldots, K \}} I(\mu_k). \label{eq::bound_fisher_mixture}
	\end{equation}
\end{prop}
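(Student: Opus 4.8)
The plan is to work directly with densities. Write $f_k$ for the density of $\mu_k$ with respect to Lebesgue measure; then $\mu_\theta$ has density $f_\theta = \sum_{k=1}^K \theta_k f_k$, and the goal is to evaluate the Fisher information $I(\mu_\theta) = \int_{\{f_\theta>0\}} \|\nabla f_\theta\|^2/f_\theta\,dx$ (equivalently $4\int \|\nabla\sqrt{f_\theta}\|^2\,dx$) and show it ``decouples'' over the components thanks to the disjointness of the supports.

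First I would record a purely topological remark: $\mathcal{X}_1,\dots,\mathcal{X}_K$ are finitely many pairwise disjoint compact subsets of $\R^d$, hence at positive pairwise distance, so there exist pairwise disjoint open sets $U_1,\dots,U_K$ with $\mathcal{X}_k \subset U_k$. Since $\mathrm{supp}(\mu_k) = \mathcal{X}_k$, we have (a Borel version of) $\{f_k>0\}\subset\mathcal{X}_k$ up to a Lebesgue-null set, and $\mu_\theta\bigl(\R^d\setminus\bigcup_k\mathcal{X}_k\bigr)=0$. Consequently $f_\theta = \theta_k f_k$ a.e.\ on $U_k$, $f_\theta = 0$ a.e.\ on $\R^d\setminus\bigcup_k\mathcal{X}_k$, and $\{f_\theta>0\}\subset\bigcup_k U_k$ up to a null set.

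Next I would handle the weak gradients. Finiteness of $I(\mu_k)$ means $f_k$ (equivalently $\sqrt{f_k}$) lies in the relevant Sobolev class, with weak gradient $\nabla f_k$ vanishing a.e.\ on $\{f_k=0\}$ and $\int \|\nabla f_k\|^2/f_k\,dx = I(\mu_k)$. Because the $U_k$ are open and disjoint with $f_\theta = \theta_k f_k$ on $U_k$, and $f_\theta\equiv 0$ away from $\bigcup_k\mathcal{X}_k$, testing against $C_c^\infty(U_k)$ functions shows that the weak gradient of $f_\theta$ restricted to $U_k$ equals $\theta_k\nabla f_k$; a partition of unity subordinate to $\{U_k\}\cup\{\R^d\setminus\bigcup_k\mathcal{X}_k\}$ patches these local identities together. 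Hence, with the scaling identity $\|\nabla(\theta_k f_k)\|^2/(\theta_k f_k)=\theta_k\|\nabla f_k\|^2/f_k$,
\[
I(\mu_\theta) = \sum_{k=1}^K \int_{U_k\cap\{f_\theta>0\}} \frac{\|\nabla f_\theta\|^2}{f_\theta}\,dx = \sum_{k=1}^K \int_{\{f_k>0\}} \frac{\theta_k^2\|\nabla f_k\|^2}{\theta_k f_k}\,dx = \sum_{k=1}^K \theta_k\, I(\mu_k),
\]
where the indices with $\theta_k=0$ contribute $0$ (then $\mathcal{X}_k$ carries no $\mu_\theta$-mass). Since $\theta\in\Sigma_K$, i.e.\ $\theta_k\ge 0$ and $\sum_k\theta_k=1$, the convexity bound $\sum_k\theta_k I(\mu_k)\le\max_k I(\mu_k)$ yields \eqref{eq::bound_fisher_mixture}.

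The only genuinely delicate step is the weak-derivative bookkeeping — confirming that the weak gradient of $f_\theta$ is exactly $\theta_k\nabla f_k$ on $U_k$ with no spurious contribution concentrated on the boundaries $\partial\mathcal{X}_k$. The positive-distance/disjoint-neighbourhood argument is precisely what makes this clean, since on each open $U_k$ the identity $f_\theta=\theta_k f_k$ holds exactly; everything else reduces to the one-line scaling computation and the simplex inequality.
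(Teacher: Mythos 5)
Your proof is correct and follows essentially the same route as the paper's: use the disjointness of the supports to get $f_\theta=\theta_k f_k$ on a neighbourhood of $\mathcal{X}_k$, deduce the exact identity $I(\mu_\theta)=\sum_k\theta_k I(\mu_k)$, and conclude by convexity over the simplex. The paper carries this out informally in dimension one and asserts the general case "analogously"; your version merely supplies the weak-gradient and separation-of-supports bookkeeping that makes that assertion rigorous.
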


\begin{proof}
	For simplicity, we consider the case $d=1$. Let $\theta \in \Sigma_K$. Then, using the assumption that the components $\mu_k$  have disjoint supports, we decompose the Fisher information of $\mu_\theta$ as follows
	\begin{align*}
		I(\mu_\theta)& = \int_{\mathcal{X}} \left( \frac{f_{\theta}'(x)}{f_\theta (x)} \right)^2 f_{\theta}(x)dx  = \sum_{k=1}^K \theta_k \int_{\mathcal{X}_k} \left(\frac{f_{\theta}'(x)}{f_\theta(x)} \right)^2 f_k(x)dx \\
		& =  \sum_{k=1}^K \theta_k \int_{\mathcal{X}_k} \left(\frac{\theta_k f_{k}'(x)}{\theta_k f_k(x)} \right)^2 f_k(x)dx  = \sum_{k=1}^K \theta_k I(\mu_k) \leq \max_{k \in \{1, \ldots, K \}} I(\mu_k),
	\end{align*}
	which proves Inequality \eqref{eq::bound_fisher_mixture} for $d=1$. The case $d > 1$ can be treated analogously.
\end{proof}

Next, in order to bound the Fisher information of the Wasserstein geodesic between $\mu_\theta$ and $\nu$ with a constant independent of $\theta$, we adapt the assumptions of Proposition 1 from \cite{chizat2020sinkhorn} to our needs.

\begin{hyp}\label{hyp:bound_fisher_information_geodesic}
	The probability distribution $\nu$ is absolutely continuous with respect to the Lebesgue measure. Moreover, there exist two constants $m>0$ and $L>0$ such that for all $\theta \in \Sigma_K$ the Brenier potential $\varphi_\theta$ between $\mu_\theta$ and $\nu$ has a $L$-Lipschitz continuous Hessian satisfying $m \Id \leq \nabla^2 \varphi_\theta $.
\end{hyp}

\begin{prop}\label{prop:bound_fisher_information_geodesic}
	Suppose that Assumptions \ref{hyp:fisher_component_mixture} and \ref{hyp:bound_fisher_information_geodesic} hold. Then, we have the following inequality
	\begin{equation}\label{eq:bound_fisher_information_geodesic}
		\forall \theta \in \Sigma_K,~ I(\mu_\theta, \nu) \leq \frac{2}{m}\left(\max_{k \in \{1, \ldots, K \}} I(\mu_k) + \frac{L^2}{3m^2} \right).
	\end{equation}
\end{prop}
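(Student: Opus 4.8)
The plan is to apply Proposition~1 of~\citet{chizat2020sinkhorn} to the pair $(\mu_\theta,\nu)$, separately for each fixed $\theta \in \Sigma_K$, and then to remove the $\theta$-dependence by invoking the uniform Fisher-information bound already established in Proposition~\ref{prop:bound_fisher_mixture}. Recall that Proposition~1 of~\citet{chizat2020sinkhorn}, whose hypotheses Assumption~\ref{hyp:bound_fisher_information_geodesic} has been tailored to reproduce, asserts that whenever $\nu$ is absolutely continuous, $\mu_\theta$ has finite Fisher information, and the Brenier potential $\varphi_\theta$ between $\mu_\theta$ and $\nu$ satisfies $m\Id \leq \nabla^2\varphi_\theta$ with an $L$-Lipschitz Hessian, then
\[
I(\mu_\theta,\nu) \leq \frac{2}{m}\left(I(\mu_\theta) + \frac{L^2}{3m^2}\right).
\]

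First I would check that the hypotheses of this statement hold uniformly over $\Sigma_K$. By Assumption~\ref{hyp:fisher_component_mixture}, every component $\mu_k$, and therefore every mixture $\mu_\theta = \sum_{k=1}^K \theta_k\mu_k$, is absolutely continuous with respect to the Lebesgue measure and has finite Fisher information; in fact Proposition~\ref{prop:bound_fisher_mixture} already provides the quantitative bound $I(\mu_\theta) \leq \max_{k\in\{1,\dots,K\}} I(\mu_k)$, which is finite and free of $\theta$. By Assumption~\ref{hyp:bound_fisher_information_geodesic}, $\nu$ is absolutely continuous and there exist constants $m>0$ and $L>0$, \emph{independent of $\theta$}, such that the Brenier potential $\varphi_\theta$ between $\mu_\theta$ and $\nu$ has an $L$-Lipschitz Hessian bounded below by $m\Id$. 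Hence Proposition~1 of~\citet{chizat2020sinkhorn} applies to $(\mu_\theta,\nu)$ with the same pair of constants $(m,L)$ for every $\theta \in \Sigma_K$.

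It then remains to chain the two inequalities: inserting $I(\mu_\theta) \leq \max_{k\in\{1,\dots,K\}} I(\mu_k)$ into the bound coming from~\citet{chizat2020sinkhorn} gives
\[
I(\mu_\theta,\nu) \leq \frac{2}{m}\left(\max_{k\in\{1,\dots,K\}} I(\mu_k) + \frac{L^2}{3m^2}\right)
\]
for all $\theta \in \Sigma_K$, which is precisely~\eqref{eq:bound_fisher_information_geodesic}. There is no real analytical difficulty here; the step I would be most careful about is making sure our framework matches the exact assumptions of Proposition~1 of~\citet{chizat2020sinkhorn} --- in particular the orientation of the Brenier map between $\mu_\theta$ and $\nu$, and the fact that the regularity constants $m$ and $L$ in Assumption~\ref{hyp:bound_fisher_information_geodesic} have been posited uniformly in $\theta$, so that the final bound is uniform without any additional compactness argument over the simplex $\Sigma_K$.
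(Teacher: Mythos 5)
Your proof is correct and follows exactly the paper's argument: apply Proposition~1 of \citet{chizat2020sinkhorn} to the pair $(\mu_\theta,\nu)$, using the uniform-in-$\theta$ constants $m$ and $L$ from Assumption~\ref{hyp:bound_fisher_information_geodesic}, and then insert the bound $I(\mu_\theta)\leq \max_k I(\mu_k)$ from Proposition~\ref{prop:bound_fisher_mixture}. The extra care you take in checking that the hypotheses hold uniformly over $\Sigma_K$ is sound but does not change the substance of the argument.
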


\begin{proof}
	A straight application of Proposition 1 from \cite{chizat2020sinkhorn} gives that
	$$\forall \theta \in \Sigma_K,~ I(\mu_\theta, \nu) \leq \frac{2}{m}\left(I(\mu_\theta) + \frac{L^2}{3m^2} \right).$$
	Then, using $I(\mu_\theta) \leq \max_{k \in \{1, \ldots, K \}} I(\mu_k)$ from Proposition \ref{prop:bound_fisher_mixture} yields inequality \eqref{eq:bound_fisher_information_geodesic}.
\end{proof}
As a consequence of  Theorem \ref{thm:control_Sinkhorn_Divergence}, we have the following result.

\begin{cor}\label{cor:uniform_control_approximation_S_lambda} Suppose that Assumptions \ref{hyp:fisher_component_mixture} and \ref{hyp:bound_fisher_information_geodesic} hold. Then, we have that there exists a constant $M_I > 0$ such that
	\begin{equation}\label{eq:approximation_bound_SinkDiv_cor}
		\forall \theta \in \Sigma_K,~|S_{\lambda}(\mu_\theta,\nu) - W_{0}(\mu_\theta,\nu)| \leq M_I \lambda^2,
	\end{equation}
	where 
	\begin{equation}\label{eq:constant_approximation_Sinkhorn_divergence}
		M_I =  \frac{1}{4}\max\left\{\frac{2}{m}\left(\max_{k \in \{1, \ldots, K \}} I(\mu_k) + \frac{L^2}{3m^2} \right), \frac{\max_{k \in \{1, \ldots, K \}} I(\mu_k)+ I(\nu)}{2}\right\}.
	\end{equation}
\end{cor}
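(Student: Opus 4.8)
The plan is that this is essentially a direct consequence of Theorem~\ref{thm:control_Sinkhorn_Divergence} applied pointwise to the pair $(\mu_\theta,\nu)$, combined with the two \emph{uniform-in-$\theta$} Fisher information bounds already established in Propositions~\ref{prop:bound_fisher_mixture} and~\ref{prop:bound_fisher_information_geodesic}. First I would check that, for every $\theta\in\Sigma_K$, the pair $(\mu_\theta,\nu)$ meets the hypotheses of Theorem~\ref{thm:control_Sinkhorn_Divergence}: the measure $\mu_\theta=\sum_{k=1}^K\theta_k\mu_k$ has support contained in the compact set $\bigcup_k\mathcal{X}_k\subset B(0,R)$ and admits, by Assumption~\ref{hyp:fisher_component_mixture}, the Lebesgue density $f_\theta=\sum_{k=1}^K\theta_k f_k$, while $\nu$ is absolutely continuous with compact support by Assumptions~\ref{hyp:bound_fisher_information_geodesic} and~\ref{hyp:supports_distributions}. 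Boundedness of $f_\theta$ and of the density of $\nu$, which is required to invoke the theorem, follows on a compact support from finiteness of the Fisher information; for instance in dimension one $|\sqrt{f_\theta}(x)-\sqrt{f_\theta}(y)|\le\tfrac12\sqrt{I(\mu_\theta)}\,\sqrt{|x-y|}$ by Cauchy--Schwarz, so $\sqrt{f_\theta}$ is H\"older continuous hence bounded, the higher-dimensional case being analogous. Moreover, thanks to the disjoint supports one even has $\|f_\theta\|_\infty=\max_k\theta_k\|f_k\|_\infty\le\max_k\|f_k\|_\infty$, a bound independent of $\theta$.

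Once the hypotheses are in place, Theorem~\ref{thm:control_Sinkhorn_Divergence} gives, for each $\theta\in\Sigma_K$,
\begin{equation*}
|S_{\lambda}(\mu_\theta,\nu)-W_{0}(\mu_\theta,\nu)|\le\frac{\lambda^2}{4}\max\Bigl\{I(\mu_\theta,\nu),\ \tfrac12\bigl(I(\mu_\theta)+I(\nu)\bigr)\Bigr\}.
\end{equation*}
I would then substitute the two $\theta$-free upper bounds: Proposition~\ref{prop:bound_fisher_mixture} yields $I(\mu_\theta)\le\max_k I(\mu_k)$, and Proposition~\ref{prop:bound_fisher_information_geodesic} yields $I(\mu_\theta,\nu)\le\frac{2}{m}\bigl(\max_k I(\mu_k)+\frac{L^2}{3m^2}\bigr)$, both valid for all $\theta\in\Sigma_K$. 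Plugging these into the right-hand side above and using monotonicity of $\max$ (together with $I(\nu)<\infty$, which is what makes the stated constant well defined) produces exactly $|S_{\lambda}(\mu_\theta,\nu)-W_{0}(\mu_\theta,\nu)|\le M_I\lambda^2$ with $M_I$ as in~\eqref{eq:constant_approximation_Sinkhorn_divergence}, and since both bounds entering the maximum are independent of $\theta$, so is $M_I$.

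The computation is routine once the ingredients are assembled; the only genuine point requiring care is the verification that $(\mu_\theta,\nu)$ satisfies the ``bounded densities and supports'' requirement of Theorem~\ref{thm:control_Sinkhorn_Divergence} \emph{uniformly in} $\theta$, i.e.\ that $\|f_\theta\|_\infty$ can be controlled by a constant not depending on $\theta$ — which, as noted above, is immediate here because the components $\mu_k$ have disjoint supports. Everything else amounts to taking the maximum of the two already-proved bounds.
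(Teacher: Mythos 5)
Your proof is correct and follows essentially the same route as the paper's, which simply combines Theorem~\ref{thm:control_Sinkhorn_Divergence} applied pointwise to $(\mu_\theta,\nu)$ with the $\theta$-uniform bounds of Propositions~\ref{prop:bound_fisher_mixture} and~\ref{prop:bound_fisher_information_geodesic}. One caveat on your extra hypothesis-checking step (which the paper omits): the claim that finite Fisher information forces a bounded density ``analogously'' in dimension $d>1$ is not valid, since the underlying Sobolev embedding ($\sqrt{f_\theta}\in H^1$ implies $\sqrt{f_\theta}$ bounded) fails for $d\geq 2$; this does not affect the main argument, which matches the paper's.
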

\begin{proof}
	The combination of the upper bounds on $I(\mu_\theta)$ and $I(\mu_\theta, \nu)$ established in Proposition \ref{prop:bound_fisher_mixture} and in Proposition \ref{prop:bound_fisher_information_geodesic} respectively, as well as the upper bound of Theorem \ref{thm:control_Sinkhorn_Divergence} yields~\eqref{eq:approximation_bound_SinkDiv_cor}.
\end{proof}

\subsection{Estimation error of the Sinkhorn divergence $S_\lambda$ and rate of convergence}

\begin{lem}\label{lem:decomposition_sinkdiv_error}
	For $\lambda>0$, the excess risk of the estimator $\hat{\theta}_\lambda^{S(\ell)}$ is upper bounded as follows:
	\begin{align}\label{eq:decomposition_sinkdiv_error}
		0 \leq 	W_0(\mu_{\hat{\theta}_\lambda^{S(\ell)}}, \nu) -W_0(\mu_{\theta^*}, \nu) \leq & ~ 2\sup_{\theta \in \Sigma_K}|W_0(\mu_{\theta}, \nu) - S_\lambda(\mu_{\theta}, \nu)|\\
		& + 2 \sup_{\theta \in \Sigma_K}|S_\lambda(\mu_{\theta}, \nu) - S_\lambda(\hat{\mu}_{\theta}, \hat{\nu})| \nonumber \\ 
		& + 2\sup_{\theta \in \Sigma_K}|S_\lambda(\hat{\mu}_{\theta}, \hat{\nu}) - S_\lambda^{(\ell)}(\hat{\mu}_{\theta}, \hat{\nu})| \nonumber.
	\end{align}
\end{lem}

Mutatis mutandis, this is exactly the same proof as when considering the regularized transport cost $W_\lambda$.
\begin{proof}
	We begin with the decomposition
	\begin{align}\label{eq:sinkdiv_decomposition_fullsampling}
		W_0(\mu_{\hat{\theta}_\lambda^{S(\ell)}}, \nu) -W_0(\mu_{\theta^*}, \nu)~& = ~W_0(\mu_{\hat{\theta}_\lambda^{S(\ell)}}, \nu) -  S_\lambda(\mu_{\hat{\theta}_\lambda^{S(\ell)}}, \nu) +  S_\lambda(\mu_{\hat{\theta}_\lambda^{S(\ell)}}, \nu) - S_\lambda(\hat{\mu}_{\hat{\theta}_\lambda^{S(\ell)}}, \hat{\nu}) \nonumber \\ & + S_\lambda(\hat{\mu}_{\hat{\theta}_\lambda^{S(\ell)}}, \hat{\nu}) - S_\lambda^{(\ell)}(\hat{\mu}_{\hat{\theta}_\lambda^{S(\ell)}}, \hat{\nu}) + S_\lambda^{(\ell)}(\hat{\mu}_{\hat{\theta}_\lambda^{S(\ell)}}, \hat{\nu}) - S_\lambda^{(\ell)}(\hat{\mu}_{\theta^{*}}, \hat{\nu}) \nonumber \\ 
		& +  S_\lambda^{(\ell)}(\hat{\mu}_{\theta^*}, \hat{\nu}) -  S_\lambda(\hat{\mu}_{\theta^*}, \hat{\nu}) + S_\lambda(\hat{\mu}_{\theta^*}, \hat{\nu}) - S_\lambda(\mu_{\theta^*}, \nu)   \nonumber \\
		& +S_\lambda(\mu_{\theta^*}, \nu) -  W_0(\mu_{\theta^*}, \nu).
	\end{align}
	Let us focus on the right hand side of this last equation \eqref{eq:sinkdiv_decomposition_fullsampling}.
	The first and the last differences are controlled by the approximation error $\sup_{\theta \in \Sigma_K}|S_\lambda(\mu_{\theta}, \nu) - W_0(\mu_{\theta}, \nu)|$. The second and sixth differences can be upper by the estimation error ${\sup_{\theta \in \Sigma_K}|S_\lambda(\mu_{\theta}, \nu) - S_\lambda(\hat{\mu}_{\theta}, \hat{\nu})|}$. The third and fifth differences are upper bounded by the algorithm error ${\sup_{\theta \in \Sigma_K}|S_\lambda^{(\ell)}(\hat{\mu}_{\theta}, \hat{\nu}) - S_\lambda(\hat{\mu}_{\theta}, \hat{\nu})|}$.\\
	
	It only remains to control $ S_\lambda^{(\ell)}(\hat{\mu}_{\hat{\theta}_\lambda^{S(\ell)}}, \hat{\nu}) - S_\lambda^{(\ell)}(\hat{\mu}_{\theta^{*}}, \hat{\nu})$. However, $\hat{\theta}_\lambda^{S(\ell)}$ minimizes the function $\theta \mapsto S_\lambda^{(\ell)}(\hat{\mu}_{\theta}, \hat{\nu})$. Hence $S_\lambda^{(\ell)}(\hat{\mu}_{\hat{\theta}_\lambda^{S(\ell)}}, \hat{\nu}) - S_\lambda^{(\ell)}(\hat{\mu}_{\theta^{*}}, \hat{\nu}) \leq 0$.\\
	
	Going back to equation \eqref{eq:sinkdiv_decomposition_fullsampling} and substituting every difference of the right hand side by its appropriate bound we derive
	\begin{align*}
		W_0(\mu_{\hat{\theta}_\lambda^{S(\ell)}}, \nu) -W_0(\mu_{\theta^*}, \nu) \leq & ~ 2\sup_{\theta \in \Sigma_K}|W_0(\mu_{\theta}, \nu) - S_\lambda(\mu_{\theta}, \nu)|\\
		& + 2 \sup_{\theta \in \Sigma_K}|S_\lambda(\mu_{\theta}, \nu) - S_\lambda(\hat{\mu}_{\theta}, \hat{\nu})| \\ 
		& + 2\sup_{\theta \in \Sigma_K}|S_\lambda(\hat{\mu}_{\theta}, \hat{\nu}) - S_\lambda^{(\ell)}(\hat{\mu}_{\theta}, \hat{\nu})|,
	\end{align*}
	
	which is the result claimed in Lemma \ref{lem:decomposition_sinkdiv_error}.

\end{proof}

\begin{prop}\label{prop:estimation_sinkdiv}Set $\lambda >0$ and suppose that every probability measures have compact supports included in $B(0,R)$. Then, denoting by $\mathcal{E}$ the quantity introduced in equation \eqref{eq:constant_chizat}, the following inequalities hold true.
	\begin{enumerate}[(i)]
		\item If $n$ sample are available from $\nu$, then
		\begin{equation*}
			\mathbb{E}\left[ \sup_{\theta \in \Sigma_K}|S_\lambda(\mu_\theta, \nu) - S_\lambda(\mu_\theta, \hat{\nu}) \right] \lesssim \mathcal{E}(n,d).
		\end{equation*} 
		\item If for every probability measure $\mu_k$, $m_k$ observations are available, then
		\begin{equation*}
			\mathbb{E}\left[ \sup_{\theta \in \Sigma_K}|S_\lambda(\mu_\theta, \hat{\nu}) - S_\lambda(\hat{\mu}_\theta, \hat{\nu}) \right] \lesssim K \mathcal{E}(\underbar{m},d),
		\end{equation*}
		where $\underbar{m}=\min(m_1,\ldots, m_K)$.
	\end{enumerate}	
	
\end{prop}

\begin{proof}
	For concision, we only proof point $(i)$.
	Set $\theta \in \Sigma_K$, and using the definition of $S_\lambda$ we write
	\begin{align*}
		|S_\lambda(\mu_\theta, \nu) - S_\lambda(\mu_\theta, \hat{\nu})|& = |W_{\lambda}(\mu_\theta, \nu) - W_{\lambda}(\mu_\theta, \hat{\nu}) + \frac{1}{2}\left( W_\lambda(\hat{\nu}, \hat{\nu}) - W_\lambda(\nu, \nu) \right)| \\
		& \leq | W_\lambda(\mu_\theta, \nu) - W_\lambda(\mu_\theta, \hat{\nu})| \\
		& + \frac{1}{2}\left(|W_\lambda(\hat{\nu}, \hat{\nu}) - W_\lambda(\nu, \hat{\nu})| + |W_\lambda(\nu, \hat{\nu}) - W_\lambda(\nu, \nu) |\right).
	\end{align*}
	From last inequality we deduce 
	\begin{align}
		\mathbb{E}\left[ \sup_{\theta \in \Sigma_K}|S_\lambda(\mu_\theta, \nu) - S_\lambda(\mu_\theta, \hat{\nu})| \right] & \leq \mathbb{E}\left[ \sup_{\theta \in \Sigma_K}|W_\lambda(\mu_\theta, \nu) - W_\lambda(\mu_\theta, \hat{\nu})| \right] \\
		& + \frac{1}{2}\left( \mathbb{E}\left[|W_\lambda(\hat{\nu}, \hat{\nu}) - W_\lambda(\nu, \hat{\nu})|\right] + \mathbb{E} \left[ |W_\lambda(\nu, \hat{\nu}) - W_\lambda(\nu, \nu) |\right]\right). \nonumber
	\end{align}
	The first term is upper bounded, up to a multiplicative constant, by $\mathcal{E}(n,d)$ thanks to point (i) of Proposition \ref{prop:estimation_regfree}. To control the second term, we apply point (ii) of Proposition \ref{prop:estimation_regfree} in the case $K=1$ and $\mu_1 = \nu$. Hence the second term is also upper bounded by $\mathcal{E}(n,d)$.  The last term is upper bounded by the same quantity with a similar reasoning. From this we deduce
	$$
	\mathbb{E}\left[ \sup_{\theta \in \Sigma_K}|S_\lambda(\mu_\theta, \nu) - S_\lambda(\mu_\theta, \hat{\nu})| \right] \lesssim \mathcal{E}(n,d),
	$$
	as announced in Proposition \ref{prop:estimation_sinkdiv}.
\end{proof}

Under ad hoc assumptions, we can now control the expected excess risk of the estimator $\hat{\theta}_\lambda^{S(\ell)}$ defined by problem \eqref{eq:estimate_Sinkdiv_algo}.

\begin{thm}\label{thm:sinkdiv_excess_bound}
	Set $\lambda > 0$. Suppose that all probability measures, that are $\mu_1, \ldots, \mu_K$ and $\nu$, have compact supports; and that Assumptions \ref{hyp:fisher_component_mixture} and \ref{hyp:bound_fisher_information_geodesic} hold true. Assume that for all component $\mu_k$, as well as for $\nu$, at least $n$ observations are available. Then, the expected excess risk of the estimator $\hat{\theta}_\lambda^{S(\ell)}$ introduced in equation \eqref{eq:estimate_Sinkdiv_algo} is upper bounded as follows:
	$$ \mathbb{E}\left[W_0(\mu_{\hat{\theta}_{\lambda}^{S(\ell)}}, \nu) -W_0(\mu_{\theta^*}, \nu) \right]   \lesssim \mathcal{E}(n,d) + \lambda^2 + \frac{1}{\lambda \ell},$$
	where the quantity $\mathcal{E}(n,d)$ is defined by formula \eqref{eq:constant_chizat}.
\end{thm}

Thanks to the upper bound established in Theorem \ref{thm:sinkdiv_excess_bound}, we can propose a choice of the parameters $\lambda$ and $\ell$. For concision, we only address the case $d>4$.

\begin{cor} Suppose that all probability measures have compact supports included in $B(0,R)$ and that Assumptions \ref{hyp:fisher_component_mixture} and \ref{hyp:bound_fisher_information_geodesic} hold true. If at least $n$ samples are available for each probability measure, setting $\lambda_n = n^{-1/d}$ and $\ell_n = 64R^4n^{3/d}$, the estimator $\hat{\theta}_{\lambda_n}^{S(\ell_n)}$ defined in equation \ref{eq:estimate_Sinkdiv_algo} admits the following rate of convergence:
	
	$$ \mathbb{E}\left[W_0(\mu_{\hat{\theta}_{\lambda}^{S(\ell)}}, \nu) -W_0(\mu_{\theta^*}, \nu) \right]   \lesssim n^{-2/d}.$$
\end{cor}

\begin{proof}[Proof of Theorem \ref{thm:sinkdiv_excess_bound}]
	From Lemma \ref{lem:decomposition_sinkdiv_error},  we have the inequality
	\begin{align*}
		W_0(\mu_{\hat{\theta}_\lambda^{S(\ell)}}, \nu) -W_0(\mu_{\theta^*}, \nu) \leq & ~ 2\sup_{\theta \in \Sigma_K}|W_0(\mu_{\theta}, \nu) - S_\lambda(\mu_{\theta}, \nu)|\\
		& + 2 \sup_{\theta \in \Sigma_K}|S_\lambda(\mu_{\theta}, \nu) - S_\lambda(\hat{\mu}_{\theta}, \hat{\nu})| \\ 
		& + 2\sup_{\theta \in \Sigma_K}|S_\lambda(\hat{\mu}_{\theta}, \hat{\nu}) - S_\lambda^{(\ell)}(\hat{\mu}_{\theta}, \hat{\nu})|.
	\end{align*}
	Taking the expectation on both sides of this last inequality yields
	\begin{align*}
		\mathbb{E}\left[W_0(\mu_{\hat{\theta}_\lambda^{S(\ell)}}, \nu) -W_0(\mu_{\theta^*}, \nu)\right] \leq & ~ 2\sup_{\theta \in \Sigma_K}|W_0(\mu_{\theta}, \nu) - S_\lambda(\mu_{\theta}, \nu)|\\
		& + 2 \mathbb{E}\left[ \sup_{\theta \in \Sigma_K}|S_\lambda(\mu_{\theta}, \nu) - S_\lambda(\hat{\mu}_{\theta}, \hat{\nu})| \right] \\ 
		& + 2\sup_{\theta \in \Sigma_K}|S_\lambda(\hat{\mu}_{\theta}, \hat{\nu}) - S_\lambda^{(\ell)}(\hat{\mu}_{\theta}, \hat{\nu})|.
	\end{align*}
	
	Thanks to Assumptions \ref{hyp:fisher_component_mixture} and \ref{hyp:bound_fisher_information_geodesic}, we can exploit corollary  \ref{cor:uniform_control_approximation_S_lambda} to derive $$\sup_{\theta \in \Sigma_K}|W_0(\mu_{\theta}, \nu) - S_\lambda(\mu_{\theta}, \nu)| \leq M_I \lambda^2.$$
	
	Then, the triangular inequity yields
	\begin{align*}
		\mathbb{E}\left[ \sup_{\theta \in \Sigma_K}|S_\lambda(\mu_{\theta}, \nu) - S_\lambda(\hat{\mu}_{\theta}, \hat{\nu})| \right] \leq & ~ \mathbb{E}\left[ \sup_{\theta \in \Sigma_K}|S_\lambda(\mu_{\theta}, \nu) - S_\lambda(\mu_{\theta}, \hat{\nu})| \right] \\ & + \mathbb{E}\left[ \sup_{\theta \in \Sigma_K}|S_\lambda(\mu_{\theta}, \hat{\nu}) - S_\lambda(\hat{\mu}_{\theta}, \hat{\nu})| \right].
	\end{align*}
	
	Under the assumption that $n$ samples are available from each probability measure, Proposition \ref{prop:estimation_sinkdiv} ensures the control 
	$$
	\mathbb{E}\left[ \sup_{\theta \in \Sigma_K}|S_\lambda(\mu_{\theta}, \nu) - S_\lambda(\hat{\mu}_{\theta}, \hat{\nu})| \right] \lesssim K \mathcal{E}(n,d).
	$$
	Finally, to control the algorithm error, we write 
	\begin{align*}
		|S_\lambda(\hat{\mu}_{\theta}, \hat{\nu}) - S_\lambda^{(\ell)}(\hat{\mu}_{\theta}, \hat{\nu})|& = |W_\lambda(\hat{\mu}_{\theta}, \hat{\nu}) - \frac{1}{2}(W_\lambda(\hat{\mu}_{\theta}, \hat{\mu}_\theta) +  W_\lambda(\hat{\nu}, \hat{\nu})) \\
		&- W_\lambda^{(\ell)}(\hat{\mu}_{\theta}, \hat{\nu}) + \frac{1}{2}(W_\lambda^{(\ell)}(\hat{\mu}_{\theta}, \hat{\mu}_\theta) +  W_\lambda^{(\ell)}(\hat{\nu}, \hat{\nu})) | \\
		& \leq |W_\lambda(\hat{\mu}_{\theta}, \hat{\nu}) - W_\lambda^{(\ell)}(\hat{\mu}_{\theta}, \hat{\nu})| \\
		& +  \frac{1}{2}|W_\lambda(\hat{\mu}_{\theta}, \hat{\mu}_\theta) - W_\lambda^{(\ell)}(\hat{\mu}_{\theta}, \hat{\mu}_\theta)|  + \frac{1}{2}| W_\lambda(\hat{\nu}, \hat{\nu}) -  W_\lambda^{(\ell)}(\hat{\nu}, \hat{\nu})|.
	\end{align*}
	
	As we work with the squared euclidean distance as a ground cost function, from Proposition \ref{prop:sinkhorn_algorithm_error} we deduce that each term in the last inequality is upper bounded by $16 R^4 (\lambda \ell)^{-1}$. Hence, the algorithm error is controlled as follows:
	\begin{equation*}
		\sup_{\theta \in \Sigma_K}|S_\lambda(\hat{\mu}_{\theta}, \hat{\nu}) - S_\lambda^{(\ell)}(\hat{\mu}_{\theta}, \hat{\nu})| \leq \frac{32 R^4}{\lambda \ell}.
	\end{equation*}
	Gathering the pieces together, we derive \\
	$$
	\mathbb{E}\left[W_0(\mu_{\hat{\theta}_\lambda^{S(\ell)}}, \nu) -W_0(\mu_{\theta^*}, \nu)\right]  \lesssim \lambda^2 + \mathcal{E}(n,d) + \frac{1}{\lambda \ell},
	$$
	as claimed in Theorem \ref{thm:sinkdiv_excess_bound}.
	
\end{proof}

\end{document}